\newcommand{\ignore}[1]{}
\newcommand{\diam}{\mathrm{diam}}
\renewcommand{\S}{\mathbb{S}}
\newcommand{\dG}{\dist_{\mathrm{G}}}
\newcommand{\Aux}{\mathrm{Aux}}
\newcommand{\Median}{\mathrm{med}}
\title{Sparsifying Suprema of Gaussian Processes}
\author{
Anindya De\thanks{University of Pennsylvania. Email: \url{anindyad@seas.upenn.edu}.}
\and 
Shivam Nadimpalli\thanks{MIT. Email: \url{shivamn@mit.edu}.}
\and 
Ryan O'Donnell\thanks{Carnegie Mellon University. Email: \url{odonnell@cs.cmu.edu}.}
\and 
Rocco A. Servedio\thanks{Columbia University. Email: \url{ras2105@columbia.edu}.}
\vspace{0.5em}}
\date{\today}
\begin{document}

\hypersetup{linkcolor={black}}

\pagenumbering{gobble}

\maketitle

\hypersetup{linkcolor={purple}}

\begin{abstract}
We give a dimension-independent sparsification result for suprema of centered Gaussian processes: Let $T$ be any (possibly infinite) bounded set of vectors in $\R^n$, and let $\{\bX_t  := t \cdot \boldsymbol{g} \}_{t\in T}$ be the canonical Gaussian process on $T$, where $\boldsymbol{g}\sim N(0, I_n)$. We show that there is an $O_\varepsilon(1)$-size subset $S \subseteq T$ and a set of real values $\{c_s\}_{s \in S}$ such that the random variable $\sup_{s \in S} \{{\boldsymbol{X}}_s + c_s\}$ is an $\varepsilon$-approximator\,(in $L^1$) of the random variable $\sup_{t \in T} {\boldsymbol{X}}_t$. 
Notably, the size of the sparsifier $S$ is completely independent of both $|T|$ and the ambient dimension $n$.

We give two applications of this sparsification theorem:
\begin{itemize}

	\item \textbf{A ``Junta Theorem'' for Norms:} 
	We show that given any norm $\nu(x)$ on $\R^n$, there is another norm $\psi(x)$ depending only on the projection of $x$ onto $O_\varepsilon(1)$ directions, for which $\psi({\boldsymbol{g}})$ is a multiplicative $(1 \pm \varepsilon)$-approximation of $\nu({\boldsymbol{g}})$ with probability $1-\varepsilon$ for ${\boldsymbol{g}} \sim N(0,I_n)$. 
	
	\item \textbf{Sparsification of Convex Sets:} 
	We show that any intersection of (possibly infinitely many) halfspaces in $\R^n$ that are at distance $r$ from the origin is $\varepsilon$-close (under $N(0,I_n)$) to an intersection of only $O_{r,\varepsilon}(1)$ halfspaces. 
	This yields new polynomial-time \emph{agnostic learning} and \emph{tolerant property testing} algorithms for intersections of halfspaces. 
	
\end{itemize}
\end{abstract}

\newpage

\pagenumbering{arabic}


\section{Introduction}
\label{sec:intro}

A recurring theme in contemporary mathematics and theoretical computer science is the study of how complex or high-dimensional objects can be usefully approximated by simpler or lower-dimensional objects, while preserving properties of interest.
Results of this general sort, which are sometimes called \emph{sparsification theorems}, are of interest across many areas.
A partial and incomplete list of mathematical objects that have been ``sparsified'' in this way includes graphs~\cite{benczur1996approximating,spielman2008graph,BSST13,Srivastava14}, hypergraphs~\cite{BST19,soma2019spectral,kapralov2021towards,lee2023spectral,jambulapati2023chaining}, matrices~\cite{AM07}, logical formulas~\cite{IPZ98,GMR13,lovett2021decision}, constraint satisfaction problems~\cite{KK15,KPS24}, sums of norms and symmetric submodular functions~\cite{JLLS23}, and many more. 
These sparsification results have contributed to applications across a wide range of diverse areas, including but not limited to near-linear-time graph algorithms, satisfiability algorithms, unconditional pseudorandomness, numerical linear algebra, and sketching algorithms.

This work concerns sparsification of a broad and fundamental class of random variables:~\emph{suprema of Gaussian processes}. 

\paragraph{Background: Gaussian Processes.} 

Recall that a \emph{Gaussian process} is a joint distribution \smash{$\{\bX_t\}_{t \in T}$} over (possibly infinitely many) random variables such that every finite subset of them is jointly Gaussian.  
Gaussian processes are central objects of study in probability theory~\cite{Mil:71,dudley1967sizes, sudakov1969gauss, Fernique75, og-mm,gordon1988milman, gordon1992majorization,talagrand1995sections,  van2018chaining}, and their relevance to theoretical computer science and data science has become increasingly clear over the past decade or so; see e.g.~\cite{stojnic2009various,DLP12,oymak2010new,plan2012robust,stojnic2013regularly,Meka15,Nelson16,ORS18,BST19,BDOS21}. 
We refer the reader to \cite{rvh-notes,vershynin2018high,talagrand2022upper} for further background on Gaussian processes and their applications. 

A canonical way to generate a Gaussian process is through a bounded set of vectors $T \sse \R^n$. 
Define 
\[
	\bX_t := t\cdot \bg~\text{for}~t \in T,~\text{where}~\bg \sim N(0,I_n). 
\]
Here $N(0, I_n)$ denotes the $n$-dimensional standard Gaussian distribution. 
The resulting process $\{\bX_t\}_{t\in T}$ is called the \emph{canonical Gaussian process} on $T$. 
It is a standard fact that every finite centered Gaussian process (that is, one where each component has mean $0$) can be represented as a canonical process. 
Hence, in what follows, we restrict attention to canonical Gaussian processes without loss of generality. 

It is easily verified (see, for example, Remark~7.1.11 of \cite{vershynin2018high}) that the covariance function $\E[\bX_t\bX_s]$ for $s, t \in T$ uniquely determines the law of the centered Gaussian process $\{\bX_t\}_{t\in T}$, generalizing the familiar fact that a multivariate Gaussian distribution is determined by its mean and covariance matrix. 
Moreover, since $\E[\bX_t\bX_s] = t\cdot s$, this covariance structure both arises from and encodes the Euclidean distances $\|t-s\|_2$. 
In this precise sense, the geometry of $T$ is in one-to-one correspondence with the law of the process $\{\bX_t\}_{t\in T}$. 

\paragraph{Suprema of Gaussian Processes.} 

Much of the interest in Gaussian processes centers around understanding the behavior of their \emph{suprema}---that is, the random variable $\sup_{t\in T} \bX_t$ associated with a Gaussian process $\{\bX_t\}_{t\in T}$. 
This quantity is fundamental: it arises naturally in a wide range of settings, including compressed sensing, convex analysis, learning theory, and random matrix theory (see Section 1.2.2 of~\cite{rvh-notes} for examples). 

Since the index set $T$ of a Gaussian process $\{\bX_t\}_{t \in T}$ may be arbitrarily large or even infinite, the supremum of $\{\bX_t\}_{t \in T}$ can be quite difficult to understand. 
A crowning achievement in the study of Gaussian processes is the theory of \emph{majorizing measures}, developed by Fernique, Talagrand, and others, which provides a characterization of the expected supremum $\E[\sup_{t\in T} \bX_t]$ up to constant factors through a hierarchical clustering (called the ``generic chaining'') of the index set $T$ (see \cite{Fernique75,og-mm,LedouxTalagrand,talagrand2022upper} and the references therein). 
This result and the associated technique of generic chaining have found diverse applications in theoretical computer science~\cite{DLP12,meka2012ptas,rudra2014every,Nelson16,braverman2016beating,diakonikolas2025sos,bartl2025uniform}. 

The theory of majorizing measures thus offers a geometric characterization of the expected supremum $\E[\sup_{t\in T} \bX_t]$ up to constant factors. 
In contrast, in this work our goal is to approximate the random variable $\sup_{t\in T} \bX_t$ itself (in an $L^1$ sense) rather than merely its expectation.
Our main result achieves this by providing a sparse geometric representation derived from Talagrand's majorizing measures theorem. 


\paragraph{This Work: Sparsifying Suprema of Gaussian Processes.} 

The main contribution of this work is a \emph{dimension-independent} sparsification result for the suprema of centered Gaussian processes: 

\begin{restatable}{theorem}{theoremsparsification}\label{thm:GP-sparsification}	
	Let $T\subset\R^n$ be a bounded set of vectors and let $\{\bX_t\}_{t\in T}$ be the canonical Gaussian process on $T$. For any $\eps > 0$, there is a subset $S \sse T$ with 
	\[
		|S| = 2^{2^{O\pbra{\frac{1}{\epsilon}}}}
	\]
	and a collection of real values $\{c_s\}_{s \in S}$ such that
	\begin{equation} \label{eq:glory}
		\Ex\sbra{\abs{\sup_{t \in T} \bX_t - \sup_{s\in S} \cbra{\bX_s + c_s}}} \leq \eps \cdot \Ex\sbra{\sup_{t \in T} \bX_t}.
	\end{equation}
\end{restatable}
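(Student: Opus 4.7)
The natural approach is to invoke Talagrand's majorizing-measures / generic chaining framework, construct $S$ as a set of representatives at a carefully-chosen level of an admissible sequence of nested partitions of $T$, and choose each $c_s$ to absorb the expected ``residual'' contribution of the sub-process attached to $s$. Write $M := \Ex\sbra{\sup_{t \in T}\bX_t}$. By majorizing measures there is an admissible sequence $(\mathcal{A}_k)_{k \geq 0}$ of nested partitions of $T$ with $|\mathcal{A}_k| \leq 2^{2^k}$ and
\[
\sup_{t \in T}\sum_{k \geq 0} 2^{k/2}\diam(A_k(t)) \leq C \cdot M,
\]
where $A_k(t)$ denotes the cell of $\mathcal{A}_k$ containing $t$. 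Choose a representative $s_A \in A$ for each cell, set $T_k := \cbra{s_A : A \in \mathcal{A}_k}$ with the corresponding projection $\pi_k : T \to T_k$, and take $S := T_{n^*}$ for $n^* := \Theta(1/\eps)$, so $|S| \leq 2^{2^{O(1/\eps)}}$. For each $s \in S$ with cluster $C_s := \pi_{n^*}^{-1}(s)$, set $c_s := \Ex\sbra{\sup_{t \in C_s}(\bX_t - \bX_s)}$.

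\textbf{Analysis.} Let $U := \sup_{t \in T}\bX_t$, $V := \sup_{s \in S}\cbra{\bX_s + c_s}$, and $F_s(\bg) := \sup_{t \in C_s}(\bX_t - \bX_s)$. Pointwise we have $U = \sup_{s \in S}\cbra{\bX_s + F_s}$, so $|U - V| \leq \sup_{s \in S}|F_s - c_s|$. Since $s \in C_s$ forces $F_s \geq 0$ and $c_s = \Ex F_s \geq 0$, we have $|F_s - c_s| \leq F_s + c_s$, hence
\[
\Ex|U - V| \leq \Ex\sbra{\sup_{s \in S} F_s} + \max_{s \in S} c_s \;\leq\; 2\,\Ex\sbra{\sup_{t \in T}\pbra{\bX_t - \bX_{\pi_{n^*}(t)}}}.
\]
Applying the standard generic-chaining bound to the tail of the admissible sequence gives
\[
\Ex\sbra{\sup_{t \in T}\pbra{\bX_t - \bX_{\pi_{n^*}(t)}}} \;\lesssim\; \sup_{t \in T}\sum_{k \geq n^*} 2^{k/2}\diam(A_k(t)).
\]

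\textbf{Main obstacle.} The crux is to make the chaining tail at most $\eps M$ for $n^* = \Theta(1/\eps)$. The bare majorizing-measures theorem ensures only that the \emph{full} sum $\sum_{k \geq 0} 2^{k/2}\diam(A_k(t))$ is $O(M)$; an adversary could concentrate essentially all of this budget at a single late level, making the tail past $n^*$ as large as $\Theta(M)$. Overcoming this requires rebalancing the admissible sequence into one in which the per-level contributions are spread out, ideally with $\sup_t 2^{k/2}\diam(A_k(t)) \lesssim M/k^2$ (so the tail from level $\Theta(1/\eps)$ is $O(\eps M)$); I would try to achieve this by a greedy refinement of any near-optimal admissible sequence, inserting ``padding'' levels to amortize the chaining budget while preserving the admissibility constraint $|\mathcal{A}_k| \leq 2^{2^k}$ at a cost of only increasing $n^*$ by a constant factor. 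A naive Gaussian-concentration / union-bound approach to bounding $\Ex\sup_s|F_s - c_s|$ directly does not work: the resulting $\sqrt{\log |S|} = 2^{n^*/2}$ factor exactly cancels the chaining-induced diameter bound $\max_s \diam(C_s) \lesssim M/2^{n^*/2}$, giving only $O(M)$; it is essential to chain the residual processes across clusters rather than union-bound over $s \in S$.
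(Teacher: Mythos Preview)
Your setup --- representatives of a partition, shifts $c_s$ equal to the expected residual supremum, and the pointwise bound $|U-V| \le \sup_s |F_s - c_s|$ --- matches the paper exactly, and you have correctly isolated the obstacle: for a \emph{single} level $n^*$ of the admissible sequence, the chaining tail $\sup_t \sum_{k \ge n^*} 2^{k/2}\diam(A_k(t))$ can be $\Theta(M)$. But neither of your proposed resolutions is what the paper does, and your dismissal of the union-bound route is precisely where you part ways with the actual argument.

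The paper does \emph{not} rebalance the admissible sequence and does \emph{not} chain the residuals across clusters. Instead it abandons the single-level partition and builds a \emph{multi-scale} partition $\mathcal{P}$: fixing $\delta := \Theta(\eps M)$, at each stage $h = 1,2,\dots$ it absorbs into $\mathcal{P}$ every cell $P \in \mathcal{A}_h$ with $2\diam(P) \le \delta\,2^{-h/2}$ that is not already covered. Because $\sum_h 2^{h/2}\diam(A_h(t)) \le CM$ for every $t$, each $t$ must satisfy $2^{h/2}\diam(A_h(t)) \le \delta/2$ at \emph{some} level $h \le 1 + \lfloor 2CM/\delta \rfloor = O(1/\eps)$; hence the procedure terminates by that stage and $|\mathcal{P}| \le 2^{2^{O(1/\eps)}}$.

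With this partition the union bound you rejected \emph{is} the proof. A cell $P$ admitted at stage $h$ has $\diam(P) \le \delta\,2^{-h/2}/2$, so by Borell--TIS concentration of $F_P$ around $c_P$,
\[
\int_\delta^\infty \Pr\bigl[|F_P - c_P| \ge r\bigr]\,dr \;\lesssim\; \delta\,\exp\pbra{-\frac{2\delta^2}{\diam(P)^2}} \;\le\; \delta\,e^{-2^{h+1}},
\]
and summing over the at most $2^{2^h}$ cells from each stage gives $\sum_{h \ge 1} 2^{2^h} e^{-2^{h+1}} \le 1$, whence $\Ex\sbra{\sup_P |F_P - c_P|} \le 2\delta = O(\eps M)$. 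The cancellation you diagnosed, $\sqrt{\log|S|}\cdot\max_P\diam(P) \approx M$, is an artifact of the uniform single-level choice; once cells at stage $h$ are forced to have diameter $\lesssim \delta\,2^{-h/2}$ rather than merely $M\,2^{-h/2}$, the count $2^{2^h}$ is beaten in the exponent and the sum converges. Your crude bound $|F_s - c_s| \le F_s + c_s$ is likewise unnecessary and discards exactly the concentration that makes the argument work.
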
 

\medskip 

In other words, we show that there is a constant-size \emph{non-centered} Gaussian process $\{\bX_s + c_s\}_{s \in S}$, where $S \subseteq T$, that is a high-accuracy approximator (in an $L^1$-sense) of the original Gaussian process. 
In particular, by applying Markov's inequality to \Cref{eq:glory} we get that the supremum of the ``sparsified'' process $\sup_{s \in S} \bX_s + c_s$ is additively close to $\sup_{t \in T} \bX_t$ with high probability, with an error on the order of $\E[\sup_{t \in T} \bX_t]$, the natural scale of the process. 

With some additional work, it is possible to obtain a \emph{centered} approximator to the original Gaussian process, although the vectors defining this approximator no longer lie in the original set $T$; we record this as~\Cref{cor:shifted-shinboku}.  

In the rest of this introduction, in \Cref{subsec:apps} we first describe  some applications of \Cref{thm:GP-sparsification}.
In \Cref{subsec:techniques} we give a technical overview of the proof of \Cref{thm:GP-sparsification}, which relies on Talagrand's majorizing measures theorem~\cite{og-mm,talagrand2022upper}  mentioned above (the full proof is given in~\Cref{sec:shinboku}). 
In \Cref{sec:discussion-applications}  we provide some context and discussion of our results.  
In \Cref{subsec:rel} we describe the connection between \Cref{thm:GP-sparsification} and some of the many classical and modern results that relate the expected supremum $\E[\sup_{t\in T}\bX_t]$ to the geometry, complexity, or dimensionality of the index set $T$, including the theory of majorizing measures~\cite{og-mm,talagrand2022upper} as well as results in dimensionality reduction~\cite{gordon1988milman,klartag2005empirical}. 


\subsection{Applications of \Cref{thm:GP-sparsification} }
\label{subsec:apps}

We now turn to two consequences of \Cref{thm:GP-sparsification}. 

\subsubsection{A Junta Theorem for Norms} 

We obtain a ``junta theorem'' for norms over Gaussian space as a consequence of \Cref{thm:GP-sparsification}. 
As a motivating example, if $\nu(x) = \|x\|$ is the $\ell_2$ norm on $\R^n$, then it is easy to see via standard concentration bounds  (see, for example,~\Cref{prop:lipschitz-concentration}) that the norm $\psi:\R^n\to\R$ given by   
\[
	\psi(x) = \eps\sqrt{n}\pbra{\sup_{1\leq i \leq 2^{\Theta(1/\eps^2)}}{|x_i|}}
\]
multiplicatively approximates $\nu(x)$ within a $(1\pm\eps)$-factor on $(1-\eps)$-fraction of $N(0, I_n)$.  
Note that $\psi(\cdot)$ only depends on $2^{\Theta(1/\eps^2)}$ directions, independent of the ambient dimension $n$. 
The following theorem says a similar phenomenon holds for \emph{every} norm over Gaussian space:

\begin{restatable}{theorem}{theoremnorm}
\label{thm:norm-junta-theorem}
	Fix any norm $\nu : \R^n \to \R$. For $\eps \in (0, 0.5)$, there exists another norm $\psi:\R^n\to\R$ for which
	\begin{equation} \label{eq:intro-norm-sparsification}
		\Prx_{\bg \sim N(0,I_n)}
		\sbra{
		1-\eps \leq \frac{\psi(\bg)}{\nu(\bg)} \leq 1 + \eps
		} 
		\geq 1-\eps
	\end{equation}
	where the norm $\psi:\R^n\to\R$ is a ``$\smash{2^{2^{O(1/\eps^{3})}}}$-subspace junta,'' i.e.~$\psi(x)$ depends only on the projection of the point $x$ onto a $\smash{2^{2^{O(1/\eps^{3})}}}$-dimensional subspace. 
\end{restatable}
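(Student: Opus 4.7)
The plan is to realize $\nu(\bg)$ as the supremum of a canonical centered Gaussian process, invoke \Cref{thm:GP-sparsification} to sparsify it, and then upgrade the resulting $L_1$-approximation to a pointwise multiplicative one via Gaussian concentration. By convex duality, any norm $\nu$ on $\R^n$ can be written as
\[
\nu(x) = \sup_{t \in T} \langle t, x \rangle,
\]
where $T := \{t \in \R^n : \langle t, y\rangle \leq \nu(y) \text{ for all } y \in \R^n\}$ is its (symmetric, convex, compact) dual unit ball; hence $\nu(\bg) = \sup_{t \in T} \bX_t$ is the supremum of the canonical Gaussian process on $T$.

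First, apply \Cref{thm:GP-sparsification} with parameter $\eps' = \Theta(\eps^3)$ to obtain a subset $S \subseteq T$ with $|S| = 2^{2^{O(1/\eps^3)}}$ and shifts $\{c_s\}_{s \in S}$ satisfying $\Ex\abs{\nu(\bg) - \sup_{s \in S}(\bX_s + c_s)} \leq \eps' \mu$, where $\mu := \Ex[\nu(\bg)]$. Using the symmetry $T = -T$, enlarge $S$ to $S \cup (-S)$, and appeal to the centered variant \Cref{cor:shifted-shinboku} to strip the shifts, yielding a symmetric set $S' \subseteq \R^n$ of comparable size with
\[
\Ex \abs{\nu(\bg) - \sup_{s \in S'} \langle s, \bg \rangle} \leq O(\eps') \mu.
\]
Define $\psi(x) := \sup_{s \in S'}\langle s, x\rangle$. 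By symmetry of $S'$, $\psi$ is a symmetric sublinear functional (a seminorm), and it depends only on the projection of $x$ onto $\mathrm{span}(S')$, a subspace of dimension at most $|S'| = 2^{2^{O(1/\eps^3)}}$.

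It remains to convert the $L_1$ bound to the pointwise multiplicative bound \eqref{eq:intro-norm-sparsification}. Both $\nu$ and $\psi$ are $L$-Lipschitz for $L := \max_{t \in T}\|t\|$, so by Borell--TIS each concentrates around its mean on scale $L$. The argument splits on the ratio $\alpha := \mu/L$. In the high-concentration regime $\alpha \gtrsim \sqrt{\log(1/\eps)}/\eps$, both $\nu(\bg)$ and $\psi(\bg)$ lie in $(1 \pm \eps/3)\mu$ with probability $\geq 1 - \eps/2$, so combining with a Markov bound on $|\nu - \psi|$ (driven by $\eps' = \Theta(\eps^3)$) yields the desired multiplicative closeness with probability $\geq 1 - \eps$. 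In the complementary low-concentration regime, Sudakov minoration forces the $(\eps L)$-covering number of $T$ to be bounded by $2^{\mathrm{poly}(1/\eps)}$, a quantity that can be absorbed into $|S'|$; augmenting $S'$ by such a net produces a pointwise approximation of $\nu$ by $\psi$ up to a residual supremum over a Gaussian process on vectors of norm $O(\eps L)$, whose magnitude is $O(\eps)\mu$ with probability $\geq 1-\eps$ by another application of Borell--TIS.

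The main obstacle is precisely this last conversion: the $L_1$ guarantee from \Cref{thm:GP-sparsification} is an absolute bound and degenerates exactly when $\nu(\bg)$ is atypically small, which occurs with non-negligible probability in the low-concentration regime where $\nu$ behaves like the maximum of a constant number of centered Gaussians rather than concentrating at $\mu$. Overcoming this gap requires pairing the sparsification with Gaussian small-ball estimates (via Sudakov minoration) so that $\psi$ tracks $\nu$ pointwise rather than merely in expectation.
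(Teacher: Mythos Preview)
Your overall structure matches the paper's: represent $\nu = f_T$ via duality on the symmetric dual ball $T$, sparsify via \Cref{thm:GP-sparsification} at accuracy $\Theta(\eps^3)$, symmetrize, and strip the shifts via \Cref{cor:shifted-shinboku} to obtain a symmetric $S'$ and the seminorm $\psi = f_{S'}$. The divergence---and the gap---is entirely in the final step, converting the additive $L_1$ bound into the multiplicative pointwise bound~\eqref{eq:intro-norm-sparsification}.

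The paper does not split into regimes. Instead it proves an anti-concentration lemma (\Cref{lemma:prof-de}): for any symmetric $T$,
\[
\Prx_{\bg}\sbra{f_T(\bg) \leq \eps\, w(T)} \leq 10\eps.
\]
This is derived from the Lata\l{}a--Oleszkiewicz $S$-inequality (\Cref{thm:S-inequality}), and it is exactly the small-ball estimate you flag as the obstacle. Once one knows $\nu(\bg) \geq c\eps\mu$ except with probability $O(\eps)$, the $L_1$ bound at level $\Theta(\eps^3)\mu$ plus Markov gives $|\nu(\bg)-\psi(\bg)| \leq c'\eps^2\mu$ except with probability $O(\eps)$, and the multiplicative bound follows on the intersection of these events.

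Your proposed substitute in the low-concentration regime does not close the gap. Sudakov minoration bounds the $(\eps L)$-covering number of $T$; it says nothing about the small-ball probability $\Pr[\nu(\bg)\leq \delta]$. Augmenting $S'$ by an $(\eps L)$-net $N\subseteq T$ yields only $0 \leq \nu(x) - f_N(x) \leq \sup_{t\in T}\langle t - n(t), x\rangle$, and while this residual process is indexed by vectors of norm at most $\eps L$, Borell--TIS controls only its \emph{deviation from the mean}, not the mean itself; for a generic Sudakov net there is no reason $\E[\sup_t \langle t-n(t),\bg\rangle]$ is $O(\eps)\mu$ rather than $\Theta(\mu)$. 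And even if the residual were $O(\eps)\mu$ with high probability, that is still additive: on the event $\{\nu(\bg) \ll \mu\}$---which in the low regime has probability $\Theta(\eps)$, not $o(\eps)$---an additive $O(\eps)\mu$ error is not $O(\eps)\nu(\bg)$. The missing ingredient is genuinely the small-ball bound for symmetric Gaussian suprema, and the paper's route through the $S$-inequality is what supplies it.
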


\Cref{thm:norm-junta-theorem} is in fact an immediate consequence of a ``junta theorem'' for Gaussian processes whose index set $T$ is symmetric (i.e. $t\in T$ whenever $-t\in T$), namely~\Cref{cor:multiplicative-approximation}. (See~\Cref{remark:norm-junta} for more on this.)

\subsubsection{Sparsification of Convex Sets} 

We also use our sparsification result, \Cref{thm:GP-sparsification}, to obtain a general sparsification result for convex sets of bounded ``geometric width''. 
Formally, given two measurable sets $K, L \sse \R^n$, let 
\[
	\dG(K, L) := \Prx_{\bg\sim N(0,I_n)}\sbra{\bg \in K\,\triangle\,L}
\]
denote the \emph{Gaussian distance} between $K$ and $L$, where $K\,\triangle\,L = (K\setminus L) \cup (L\setminus K)$ is the symmetric difference of the sets $K$ and $L$. 
We prove the following: 

\begin{restatable}{theorem}{theorempolytope}
\label{thm:polytope-sparsification}
	Suppose $K\subset\R^n$ is an intersection of arbitrarily many halfspaces that are at distance at most $r \geq 1$ from the origin, i.e. there exists $T \sse \S^{n-1}$ and a collection of non-negative numbers $\{r_t\}_{t\in T}$ such that 
	\[
		K = \bigcap_{t \in T} \cbra{x \in \R^n : t\cdot x \leq r_t } \qquad \text{where}~ r_t \leq r~\text{for all}~t \in T.
	\] 
	For $0 < \eps < 0.5$, there exists $L\sse\R^n$ which is an intersection of 
	\[
		2^{\exp\pbra{\wt{O}\pbra{\frac{1}{\eps^2}}\cdot r^4}}~\text{halfspaces}
	\]such that $\dG(K, L) \leq \eps$. 
\end{restatable}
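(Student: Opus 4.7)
The idea is to recast the polytope approximation as a problem about sparsifying the supremum of a Gaussian process, to which we can apply \Cref{thm:GP-sparsification}. Define $F : \R^n \to \R$ by
\[
F(\bg) := \sup_{t \in T}(t \cdot \bg - r_t),
\]
so that $\bg \in K \iff F(\bg) \leq 0$; the function $F$ is convex and $1$-Lipschitz (being a supremum of $1$-Lipschitz affine functionals). By Gaussian concentration of Lipschitz functions (Borell's inequality, cf.~\Cref{prop:lipschitz-concentration}), $F(\bg)$ has subgaussian tails around its mean. Consequently, if $|\Ex\sbra{F(\bg)}|$ exceeds $C\sqrt{\log(1/\eps)}$ for a sufficiently large absolute constant $C$, then $\Prx\sbra{\bg \in K}$ lies in $[0,\eps] \cup [1-\eps, 1]$ and $K$ is $\eps$-approximable by $\R^n$ or by a single infeasible halfspace. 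We may therefore restrict to the non-degenerate regime $|\Ex\sbra{F(\bg)}| \leq O(\sqrt{\log(1/\eps)})$, in which $\Ex\sbra{\sup_{t \in T} t \cdot \bg} \leq r + O(\sqrt{\log(1/\eps)}) = O(r)$ since $r \geq 1$ (absorbing logarithmic factors in $1/\eps$ into the implicit constants).

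Since \Cref{thm:GP-sparsification} applies to centered processes but $F$ involves the affine shifts $r_t$, we lift to $\R^{n+1}$: set $\tilde T := \{(t, r_t) : t \in T\}$, which is bounded in Euclidean norm by $\sqrt{1+r^2}$. For $(\bg, Z) \sim N(0, I_{n+1})$, the canonical Gaussian process on $\tilde T$ is $\tilde{\bX}_{\tilde t} = t \cdot \bg + r_t Z$, with expected supremum $O(r)$. Applying \Cref{thm:GP-sparsification} with an error parameter $\eps'$ polynomially small in $\eps/r$ yields a sparsifier $\tilde S \subseteq \tilde T$ of cardinality $2^{2^{O(1/\eps')}}$ and shifts $\{\tilde c_{\tilde s}\}_{\tilde s \in \tilde S}$ with the corresponding $L_1$ guarantee. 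To extract a sparsifier of $F$ on $\R^n$, we apply a localization argument: since the Gaussian density of $Z$ is $\Theta(1)$ near $-1$, Fubini's theorem gives some $z_0$ in a small interval $[-1-\eta, -1+\eta]$ for which the conditional $L_1$ error at $Z = z_0$ is only a constant factor worse than the unconditional error. Substituting $Z = z_0$ produces an approximator of $F(\bg)$ in $L_1(\bg)$ up to a uniform deterministic slip of size $|z_0 + 1| \cdot r \leq \eta r$; balancing $\eta$ against the conditional error gives $G(\bg) := \sup_{s \in S}(s \cdot \bg + c'_s)$ with $\Ex\sbra{|F(\bg) - G(\bg)|}$ small.

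The candidate polytope approximator to $K$ is $L := \{\bg : G(\bg) \leq 0\} = \bigcap_{s \in S}\{x \in \R^n : s \cdot x \leq -c'_s\}$, with $|S|$ of the claimed size $2^{\exp(\tilde O(r^4/\eps^2))}$. To bound $\dG(K, L)$, we use the inclusion
\[
K \triangle L \;\subseteq\; \cbra{|F(\bg) - G(\bg)| > \tau} \;\cup\; \cbra{|F(\bg)| \leq \tau}
\]
for any threshold $\tau > 0$: Markov's inequality bounds the probability of the first set by $\Ex\sbra{|F-G|}/\tau$, while the second requires an anti-concentration estimate of the form $\Prx\sbra{|F(\bg)| \leq \tau} \leq O(\tau \cdot \mathrm{poly}(r))$. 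The main technical obstacle is establishing this anti-concentration for the convex $1$-Lipschitz function $F$ of a Gaussian (pushforwards under convex Lipschitz maps need not be log-concave, so the density can in principle blow up); we must exploit the non-degeneracy of the regime together with the joint structure of the level sets of $F$ to argue that $F(\bg)$ has density near zero bounded by a suitable polynomial in $r$. Balancing $\tau$ optimally against the $L_1$ error (which itself depends polynomially on $\eps'$ and $r$) then yields $\dG(K, L) \leq \eps$; the combined polynomial losses in the localization and anti-concentration steps account for the $r^4/\eps^2$ scaling in the final exponent.
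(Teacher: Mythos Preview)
Your route is genuinely different from the paper's and is conceptually cleaner in one respect: you absorb the varying thresholds $r_t$ by lifting to a centered process in $\R^{n+1}$ and then localizing at $Z \approx -1$, whereas the paper instead \emph{equalizes} the $r_t$'s by introducing, for each halfspace $H_t$, a batch of $M_t \approx \exp\bigl((2r - r_t)^2 Q^2\bigr)$ auxiliary Gaussian coordinates so that $\sup_i \by_i/(\sqrt{2}Q)$ simulates the missing offset $2r - r_t$; this turns $K$ into a nearby polytope $K' \subset \R^{n+M}$ in which every facet sits at the same distance $\approx 2r$ from the origin, after which the uniform-threshold case (\Cref{lemma:polytope-sparsification-width-1}) applies directly. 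The payoff of the paper's detour is that the anti-concentration step only ever needs \Cref{thm:CCK} with a \emph{single} shift $\theta$, which is exactly what is stated there.

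Your proposal has two soft spots. First, the anti-concentration you need is $\Pr\bigl[|\sup_{t}(t\cdot \bg - r_t)| \le \tau\bigr] \lesssim \tau \cdot \mathrm{poly}(r)$ with \emph{varying} $r_t$; this is Nazarov's inequality (the form underlying \cite{CCK15,CCK-2}), but it is not the statement of \Cref{thm:CCK} as written in the paper, and your description of how to obtain it (``exploit the non-degeneracy of the regime together with the joint structure of the level sets of $F$'') does not name the tool. The paper's reduction is engineered precisely to avoid this issue. Second, the localization at $Z \in [-1-\eta, -1+\eta]$ costs you quantitatively: averaging gives conditional $L_1$ error $O(\eps'/\eta)$, the deterministic slip is $O(r\eta)$, and balancing forces $\delta = \Theta(\sqrt{r\eps'})$. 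Feeding this into the Markov/anti-concentration trade-off and solving back for $\eps'$ yields an exponent of order $r^4/\eps^4$ (or at best $r^4/\eps^3$), not the $r^4/\eps^2$ you assert; the square-root loss from localization does not vanish. So your argument proves a weaker, though still dimension-free and doubly exponential, bound---unless you bypass localization by re-running the chaining argument of \Cref{prop:shinboku} directly on the non-centered process $\sup_t(t\cdot \bg - r_t)$, which also works but is not what you wrote.
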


The class of convex sets which can be expressed as intersections of $k$ halfspaces in $\R^n$ has been extensively studied in Boolean function analysis, in particular from the vantage point of learning theory and property testing. It is known that over the Gaussian space, this class is both (a)~tolerantly testable with query complexity independent of the ambient dimension $n$~\cite{DMN21}, and (b)~agnostically learnable with a quasipolynomial dependence on $k$ and a polynomial dependence on $n$~\cite{KOS:08,DKKTZ23}. \Cref{thm:polytope-sparsification}, which shows that 
 bounded-width polytopes can be approximated by an intersection of $k$ halfspaces where $k$ depends only on the width and target error, lets us obtain qualitatively similar tolerant testing and agnostic learning results for bounded width polytopes over the Gaussian space, but with much stronger quantitative bounds. 
This is described in more detail in \Cref{sec:applications}. 

Finally, \Cref{thm:polytope-sparsification} can be viewed as a convex-set analogue of recently-established sparsification lemmas for ``narrow'' CNFs in the setting of Boolean functions~\cite{lovett2019dnf,lovett2021decision}; see \Cref{subsec:discussion-shinboku} for further discussion on this. 


\subsection{Techniques}
\label{subsec:techniques}

We begin by sketching the key ideas behind our main theorem, \Cref{thm:GP-sparsification}. To simplify the discussion, we first rescale the vectors so that $\mathbf{E}[\sup_{t \in T} \bX_t]=1$.  Now the goal 
is to show the existence of a subset $S \subseteq T$ and suitable constants $\{c_s\}_{s \in S}$, such that  
\[
	\mathbf{E}\sbra{ \abs{ \sup_{t \in T} \bX_t - \sup_{s \in S} \cbra{\bX_s + c_s} } } \le \epsilon\,. 
\]
A natural idea then is to 
use a clustering of the vectors in $T$. 
In particular, as a first attempt,
 consider a $\delta$-cover of $T$ for suitably small $\delta$, i.e.,~let $\mathcal{P}$ be a partition of $T$ such that each $P \in \mathcal{P}$ has (Euclidean) diameter $\delta$. For each part $P$ let $s_P$ be an arbitrarily chosen representative vector. Then, using the fact that the supremum of a Gaussian process has subgaussian tails (\Cref{fact:sup-has-subgaussian-tails}), it follows that 
\begin{equation}~\label{eq:error-single-cluster}
	\mathbf{E}\sbra{ \abs{ \sup_{t \in P} \bX_t - \cbra{\bX_{s_P} + c_{P}} } } = O(\delta) 
	\qquad\text{where}~
	c_{P} :=\mathbf{E} \sbra{\sup_{t \in P} \bX_t -\bX_{s_P}}.
\end{equation}
In words, \Cref{eq:error-single-cluster} means that within each part $P$, the supremum of the Gaussian process $\sup_{t \in P} \bX_t$ can be approximated by a single (non-centered) Gaussian random variable $\bX_{s_P} + c_{P}$. Crucially, the error in \Cref{eq:error-single-cluster} is just dependent on the diameter of $P$ and is independent of the number of vectors in $P$ and the ambient dimension of the vectors in $T$.
 
A natural way to approximate $\sup_{t \in T} \bX_t$ is to then take the supremum of the random variables $\bX_{s_P} + c_{P}$  as $P$ ranges over $\mathcal{P}$. Furthermore, another application of~\Cref{fact:sup-has-subgaussian-tails} shows that 
 \begin{equation}~\label{eq:total-error}
 \mathbf{E}\sbra{ \abs{ \sup_{t \in T} \bX_t - \sup_{P \in \mathcal{P}} \cbra{\bX_{s_P} + c_{P}} } } = O\pbra{\delta \sqrt{\log M}},
 \end{equation}
where $M$ is the number of parts in the partition $\mathcal{P}$.
Despite the fact that the error term only grows as $O(\sqrt{\log M})$, this dependence turns out to be inadequate for the above approach to work. 
In particular, there are Gaussian processes $\{\bX_t\}_{t \in T}$ such that $\mathbf{E}[\sup_{t \in T} \bX_t]=1$ and yet for every $\delta>0$, a $\delta$-covering of the space has size $\smash{M = 2^{\Theta(1/\delta^2)}}$, which means that the R.H.S. of~\Cref{eq:total-error} is an absolute constant.\footnote{We note that by the Sudakov ``minoration principle''~\cite{sudakov1969gauss} (alternatively, see Section~7.4 of~\cite{vershynin2018high}) and the fact that $\mathbf{E}[\sup_{t \in T} \bX_t]=1$, it follows that there is a $\delta$-cover of size $2^{O(1/\delta^2)}$.} 

To circumvent this barrier, we consider a {\em multiscale} clustering instead of using a clustering of the entire space with balls of a single radius $\delta$; this naturally leads us to Talagrand's celebrated \emph{majorizing measures} theorem~\cite{og-mm}. (The modern treatment of this theorem is in terms of the so-called {\em generic chaining}~\cite{talagrand2022upper}.) For a Gaussian process $\{\bX_t\}_{t \in T}$ with $\mathbf{E}[\sup_{t \in T} \bX_t]=1$, the generic chaining formulation of the majorizing measures theorem can be seen as giving a hierarchical clustering 
of the underlying set of vectors with two key properties: 
\begin{enumerate}
	\item The $h^{\text{th}}$ level of the clustering is a partition of the space with at most  $2^{2^h}$ parts. 
	\item For any $t \in T$ and any $h$, let $\mathcal{A}_h(t)$ be the piece of the partition in which $t$ lies. Then, for every $t \in T$, we have that $\sum_{h \ge 0} 2^{h/2} \cdot \diam(\mathcal{A}_h(t))=O(1)$. 
\end{enumerate}
Note that the above implies that for every $t$, as $h \rightarrow \infty,$ we have 
 $\diam(\mathcal{A}_h(t)) = 2^{-h/2} \cdot o_h(1)$.
However, the rate of convergence of the sequence $\diam(\mathcal{A}_h(t)) \cdot 2^{h/2}$ can depend on $t$. If the convergence were uniform (i.e. independent of $t$), then it is easy to show the following: Consider the partitioning defined by the $h^{\text{th}}$ level where $h=1/\epsilon$; such a partitioning would have $\smash{M=2^{2^h}}$ parts. Furthermore, by the uniform convergence assumption, the diameter of each part would be bounded by $\delta \le (1/h) \cdot 2^{-h/2}$. Thus, the total error would be $O(\delta \sqrt{\log M}) = O(\epsilon)$. So in this idealized setting of uniform convergence we would be able to achieve a total error of $\epsilon$ using $2^{2^{O(1/\epsilon)}}$ clusters. 

While this idealized setting is not always achievable, we can still use the majorizing measures theorem to achieve the same asymptotic guarantee. In particular, we use the hierarchical clustering given by the majorizing measures theorem to obtain a partition of $T$ with $2^{2^{O(1/\epsilon)}}$ parts such that the total error of our approximator is $O(\epsilon)$; see~\Cref{claim:chop-termination} and the paragraph preceding it for an explicit description of the clustering procedure.  
This completes an overview of the technical ideas behind our main result, \Cref{thm:GP-sparsification}. 

The Gaussian process guaranteed to exist by~\Cref{thm:GP-sparsification} is non-centered; by simulating each of the constants $c_s$ in~\Cref{thm:GP-sparsification} using auxiliary Gaussian random variables, we show in~\Cref{cor:shifted-shinboku} how to obtain a centered Gaussian process that approximates the supremum of the original Gaussian process. 

%

\paragraph{A Junta Theorem for Norms over Gaussian Space.}
Recall that \Cref{thm:norm-junta-theorem} informally says that every norm over Gaussian space is essentially a junta. 
We obtain~\Cref{thm:norm-junta-theorem} as a consequence of a multiplicative sparsification lemma for the supremum of a Gaussian process on a \emph{symmetric} set (\Cref{cor:multiplicative-approximation}).\footnote{A set $T\sse\R^n$ is \emph{symmetric} if $-t \in T$ whenever $t \in T$.}

Let $\nu(\cdot)$ be any norm over $\R^n$; thanks to homogeneity of norms, we may assume that $\E[\nu(\bg)] = 1$ for $\bg \sim N(0,I_n)$. Since $\nu$ is a norm, there exists a symmetric set $T \sse \R^n$ such that 
\[
	\nu(x) = \sup_{t \in T} x \cdot t. 
\]
In particular, the set $T$ corresponds to the unit ball of the \emph{dual norm} $\nu^\circ$~\cite{TkoczNotes}. 
Consequently, our earlier assumption can be restated as 
\[
	\Ex_{\bg\sim N(0,I_n)}\sbra{\nu(\bg)} = \Ex_{\bg\sim N(0,I_n)}\sbra{\sup_{t \in T} t \cdot \bg} = 1.
\]
Building on~\Cref{thm:GP-sparsification} and~\Cref{cor:shifted-shinboku}, we show the existence of a symmetric set $S \sse \R^n$ with 
\[
	|S| \leq \pbra{\frac{1}{\eps}}^{\exp\pbra{O\pbra{\frac{1}{\eps^3}}}}
\]
such that 
\[
	\Prx\sbra{\abs{\sup_{t\in T} t\cdot \bg - \sup_{s\in S} s\cdot \bg} \leq \eps} \geq 1- \eps.
\] 
An anti-concentration lemma for the supremum of a symmetric Gaussian process (\Cref{lemma:prof-de})---which we prove using the celebrated $S$-inequality of Lata\l{}a and Oleszkiewicz~\cite{Latala1999}---allows us to convert this additive bound into a multiplicative guarantee, yielding~\Cref{cor:multiplicative-approximation}. 
Finally, as the set $S$ is symmetric, the function $\psi:\R^n\to\R$ defined as $\psi(x) = \sup_{s\in S}s\cdot x$ is a norm that depends on at most $|S|$ directions, completing the proof.

\paragraph{Sparsifying Intersections of Narrow Halfspaces.}

For the purposes of this discussion, assume that we have a convex set $K$ which uniformly has ``unit width,'' i.e.
\[
	K = \bigcap_{t \in T} \{x \in \mathbb{R}^n: t \cdot x \le 1\}
\]
where each $t$ is a unit vector in $\mathbb{S}^{n-1}$. Consider the Gaussian process $\{\bX_t\}_{t \in T}$ where $\bX_t :=\bg \cdot t$, and note that 
$K = \{x: f_T(x) \le 1\}$ where 
$f_T(x): = \sup_{t \in T} t \cdot x$. 
Thanks to~\Cref{thm:GP-sparsification}, we know that there is a subset $S \subset T$ (with $|S| = 2^{2^{O(1/\epsilon)}}$) and suitable constants $\{c_s\}_{s \in S}$ such that 
\[
	\mathbf{E}\sbra{\abs{\sup_{t \in T} t \cdot \bg - \sup_{s \in S} \cbra{s \cdot \bg + c_s}}} = \epsilon \cdot \E\sbra{\sup_{t \in T} t \cdot \bg }.
\] 
Thus, a natural way to define an approximator to the convex set $K$ is to consider the polytope $J$, defined as
\[
	J = \cbra{x \in \mathbb{R}^n: \textrm{for all } s\in S, \  s \cdot x + c_s \le 1}. 
\]
The only remaining step is to show that $J$ is  close to $K$ (in the sense that the Gaussian volume of their symmetric difference is small) is to argue that the function $f_T(x)$ is  anti-concentrated, which follows from a result of Chernozhukov et al.~\cite{CCK-2}. 

While in general the set $T$ need not be a subset of $\mathbb{S}^{n-1}$ (i.e.~all the vectors need not have unit norm), it is not too difficult to adapt the above argument to the more general setting; we defer details of this to \Cref{sec:polytope-sparsification}.

\subsection{Discussion} \label{sec:discussion-applications}

We first provide some broader context by recalling similar-in-spirit results in the discrete setting of $\zo^n$, and then discuss some of the qualitative aspects and applications of our Gaussian-space sparsifiers.

\paragraph{\bf Analogy with Sparsification of CNFs and Submodular Functions over $\zo^n$.}
Our sparsification result for convex sets of bounded geometric width (\Cref{def:width}), \Cref{thm:polytope-sparsification}, can be viewed as a Gaussian space analogue of recent results from Boolean function analysis on sparsifying \emph{CNF formulas} of bounded width.  Recall that a width-$w$ CNF formula is a conjunction of an arbitrary number of clauses (Boolean disjunctions), each of which contains at most $w$ literals and hence ``lops off'' at least a $2^{-w}$ fraction of all points in $\zo^n$. This is analogous to how a convex set of geometric width $k$ is an intersection of an arbitrary number of halfspaces in $\R^n$, each at distance at most $k$ from the origin and hence ``lopping off'' at least an $\approx \exp(-k^2/2)$ fraction of Gaussian space under $N(0,I_n)$. 

In the Boolean context, the state-of-the art result of Lovett et al.~\cite{lovett2021decision} gives, for any initial width-$w$ CNF over $\zo^n$, an $\eps$-approximating width-$w$ CNF consisting of $s=(2+ w^{-1} \log(1/\eps))^{O(w)}$ clauses---note that this size bound is completely independent of the dimension $n$ and the number of clauses in the initial CNF formula.
Our \Cref{thm:polytope-sparsification} similarly gives a sparsifying polytope using a number of halfspaces that depends only on the error parameter $\eps$ and the geometric width of the original convex set, with no dependence on the ambient dimension $n$ or the number of halfspaces in the original convex set.
(Indeed, obtaining such a Gaussian-space analogue of the Boolean CNF sparsification result was the initial impetus for this work.)

We remark that our norm sparsification result, \Cref{thm:norm-junta-theorem}, can also be viewed as analogous to known results \cite{FKV13,FV16} giving junta approximators for \emph{submodular functions} over $\zo^n$.
(Recall that submodular functions are often viewed as being a discrete analogue of convex functions \cite{FKV13,Lovasz82} and that every norm over $\R^n$ is a convex function.)

\paragraph{Sparsification Lower Bounds.}  
In light of our positive results for sparsification, it is natural to ask about lower bounds: How close to optimal are the quantitative upper bounds that we establish?  As mentioned above, there is an upper bound of $(2+ w^{-1} \log(1/\eps))^{O(w)}$ clauses for the analogous Boolean question of sparsifying width-$w$ CNFs over $\zo^n$, and this is known to be best possible up to the hidden constant in the $O(w)$ exponent  \cite{lovett2021decision}.  So for constant width, in the Boolean setting the correct dependence on the error parameter $\eps$ is \emph{poly-logarithmic}, while our upper bound given in \Cref{thm:polytope-sparsification} is \emph{doubly exponential}---a difference of three exponentials!  This contrast naturally motivates the question of whether stronger quantitative results can be obtained in our setting.  

In \Cref{sec:lower-main} we show that the quantitative parameters of our sparsifiers are not too far from the best possible:  roughly speaking, we give a $2^{(1/\eps)^{\Omega(1)}}$ lower bound, which is only one exponential away from our doubly-exponential upper bound. In particular, our lower bound shows that sparsifying constant-geometric-width polytopes over $\R^n$ inherently requires a much worse $\eps$-dependence than sparsification of constant-width CNF formulas over $\zo^n$: the former requires $2^{\Omega(1/\eps)}$  halfspaces, while the latter can be achieved using only $(\log(1/\eps))^{O(1)}$ clauses. 

\paragraph{On Centered and Non-Centered Sparsifiers.}  
Recalling \Cref{eq:glory}, since $S \subseteq T$ the sparsified Gaussian process $\{\bX_s + c_s\}_{s \in S}$ can be viewed as a non-centered (because of the $c_s$'s) sub-process of the original centered Gaussian process $\{\bX_t\}_{t \in T}$.  In \Cref{cor:shifted-shinboku} we show that our main sparsification result, \Cref{thm:GP-sparsification}, can be easily modified to give a sparsifier $S$ of essentially the same size which is a centered process; however, this sparsifier $S$ is not a sub-process of the original Gaussian process (i.e.~$S$ is no longer a subset of $T$). This tradeoff is unavoidable:  In \Cref{sec:proper-impossible} we give a simple example showing that any sparsifier whose size is independent of the ambient dimension cannot both be a sub-process and be centered (and as we explain in \Cref{sec:proper-impossible}, analogous tradeoffs are similarly unavoidable for \Cref{thm:norm-junta-theorem} and \Cref{thm:polytope-sparsification}).

\subsection{Related Work}
\label{subsec:rel}

Our work connects to a broad literature on Gaussian processes and dimensionality reduction. 
Below, we summarize the most relevant results and situate our sparsification theorems within this context. 

\paragraph{The Majorizing Measures Theorem.} 

As described in \Cref{subsec:techniques} (and formally stated in \Cref{subsec:gaussian-process-prelims}), Talagrand's majorizing measures theorem gives a  geometric characterization of the expected supremum $\E[\sup_{t\in T} \bX_t]$ of a Gaussian process, up to constant factors, in terms of a hierarchical decomposition of the index set $T$. 
Our main result, \Cref{thm:GP-sparsification},  goes further: by leveraging the same hierarchical structure underlying Talagrand's result, we show that the supremum itself can be closely approximated (in $L^1$) by the supremum of a sparse, non-centered Gaussian process that is indexed by a subset $S \sse T$ of constant size. 

In this sense, our sparsification theorem may be viewed as a refinement of the majorizing-measure framework: it approximates the random variable $\sup_{t\in T} \bX_t$ itself, rather than merely its expectation, through a sparse geometric representation derived from Talagrand's multi-scale clustering. 
We emphasize, however, that our result does not imply Talagrand's theorem; rather, it uses Talagrand's multi-scale clustering to obtain a stronger guarantee. 

\paragraph{Dimensionality Reduction.} 

At a high level, \Cref{thm:GP-sparsification} shows that when $\E[\sup_{t\in T}\bX_t]$ is small, the corresponding Gaussian process admits a sparse representation of its supremum. 
This theme, that low (geometric or probabilistic) ``complexity'' implies low-dimensional or sparse structure, also underlies several classical results in geometric functional analysis. 
A particularly relevant example is Gordon's theorem~\cite{gordon1988milman}, which can be seen as a close cousin of the Johnson--Lindenstrauss (JL) lemma~\cite{johnson1984extensions}.  
Informally, the JL lemma says that a set of $s$ points in $\R^n$ can be embedded into $\R^{O(\log s)}$ while approximately preserving Euclidean distances. 
Gordon's theorem, by contrast, applies to \emph{arbitrary} (possibly infinite) subsets $T \sse \S^{n-1}$: it states that if $\E[\sup_{t\in T} \bX_t] \leq k$, then $T$ can be embedded into $\R^{O(k^2)}$ while approximately preserving all pairwise distances. 

Both the JL lemma and Gordon's theorem capture the idea that $\E[\sup_{t\in T} \bX_t]$ serves as a quantitative measure of the intrinsic complexity of a set. 
In this sense, our sparsification theorem is conceptually related: it demonstrates that when a Gaussian process has small expected supremum, its supremum can be well approximated by the supremum of a sparse, low-complexity process.
However, our result is somewhat different in both its setting and guarantee. 
First, unlike Gordon's theorem, our index set $T$ is not required to be a subset of $\S^{n-1}$. 
Second, whereas results such as the JL lemma and Gordon's theorem provide geometric preservation guarantees (for example, of distances, norms, or inner products) under random projection, our theorem establishes a form of \emph{probabilistic preservation}: it ensures that the supremum of the original process is well approximated, as a random variable, by that of a sparse sub-process. 
Indeed, one can easily construct examples showing that the former type of guarantee does not imply the latter. 

\paragraph{One-Sided Approximations via Majorizing Measures.}

Lemma~2.3 of Klartag and Mendelson~\cite{klartag2005empirical} is perhaps the most closely related prior result to our work. 
It provides a one-sided approximation to the supremum: specifically, it constructs a subset $T' \sse T$ such that, with high probability,
\[
	\sup_{t\in T} \bX_t \leq \sup_{t' \in T'} \bX_{t'} + \eps\cdot\E\sbra{\sup_{t\in T} \bX_t}\,,
\]
and moreover $|T'| \leq 4^{\frac{1}{\eps^2}}$. 
Their proof also relies on the majorizing measures theorem, but effectively uses only a \emph{single level} of the hierarchical clustering.
Consequently, the resulting approximation cannot achieve an $L^1$-guarantee of the kind we establish in \Cref{thm:GP-sparsification}. 
Indeed, the use of multi-scale clusterings---precisely what underlies the generic chaining and our sparsification theorem---is well known to be essential even for bounding $\E[\sup_{t\in T}\bX_t]$ up to constant multiplicative factors. 
Intuitively, when the high-probability event in their lemma fails (with probability roughly $e^{-O(1/\eps^2)}$), the supremum can deviate significantly, preventing control of the expected $L^1$ error.


\section{Preliminaries}
\label{sec:prelims}

We use boldfaced letters such as $\bx, \bX$, etc.~to denote random variables (which may be real- or vector-valued; the intended type will be clear from the context).
We write $\bx \sim \calD$ to indicate that the random variable $\bx$ is distributed according to probability distribution $\calD$.

We will frequently identify a set $K\sse\R^n$ with its $0/1$-valued indicator function. Given $t \in R$ and a set $K\sse\R^n$, we write  $tK$ for the $t$-dilation of $K$, i.e. $tK := \{tx : x\in K\}$. 
Unless explicitly stated otherwise, $\|\cdot\|$ denotes the $\ell_2$-norm.
We write $\S^{n-1}$ to denote the unit $\ell_2$-sphere in $\R^n$, i.e. 
\[
	\S^{n-1} := \cbra{x\in\R^n : \|x\| = 1}.
\]
Throughout, $\{e_i\}_{i\in[n]}$ will denote the collection of standard basis vectors in $\R^n$.
We will write $\log$ to denote logarithm base two and write $\ln$ to denote the natural logarithm $\log_e$. Finally, $[n] := \{1,\ldots, n\}$. 

\subsection{Gaussian Random Variables}
\label{subsec:gaussian-tail-bounds}

Identifying $0\equiv 0^n$ and writing $I_n$ for the $n\times n$ identity matrix, $N(0, I_n)$ will denote the $n$-dimensional {standard} Gaussian distribution. We write $\vol(K)$ to denote the Gaussian measure of a (Lebesgue measurable) set $K \subseteq \R^n$, i.e.  
\[\vol(K) := \Prx_{\bg \sim N(0,I_n)}[\bg \in K].\]  

We recall the following standard tail bound on Gaussian random variables:

\begin{proposition}[Proposition~2.1.2 of \cite{vershynin2018high}] \label{prop:gaussian-tails}
	Suppose $\bg\sim N(0,1)$ is a one-dimensional Gaussian random variable. Then {for all $r>0$,}
\[
\left({\frac 1 r} - {\frac 1 {r^3}} \right) \cdot \varphi(r)
\leq \Prx_{\bg \sim N(0,1)}[\bg \geq r] \leq
{\frac 1 r} \cdot \varphi(r),
\]
where $\phi$ is the one-dimensional standard Gaussian density, which is given by 
\[\phi(x) := \frac{1}{\sqrt{2\pi}}e^{-x^2/2}.\]
\end{proposition}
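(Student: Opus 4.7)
The plan is to prove both inequalities using the classical trick that $\varphi'(x) = -x\varphi(x)$, which lets us reinterpret $\varphi(x)$ as $(-1/x)\varphi'(x) \cdot (1/1)$ inside the tail integral and integrate by parts.

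First I would establish the upper bound. Starting from $\Pr[\bg \geq r] = \int_r^\infty \varphi(x)\,dx$, I would use the bound $1 \leq x/r$ on the range of integration to write $\int_r^\infty \varphi(x)\,dx \leq \int_r^\infty (x/r)\varphi(x)\,dx = -\frac{1}{r}\int_r^\infty \varphi'(x)\,dx = \varphi(r)/r$, which is exactly the stated upper bound.

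Next I would prove the lower bound via a single integration by parts on the same integral, using $\varphi(x) = (1/x)\cdot x\varphi(x) = -(1/x)\varphi'(x)$. This gives
\[
\int_r^\infty \varphi(x)\,dx = \left[-\frac{\varphi(x)}{x}\right]_r^\infty - \int_r^\infty \frac{\varphi(x)}{x^2}\,dx = \frac{\varphi(r)}{r} - \int_r^\infty \frac{\varphi(x)}{x^2}\,dx.
\]
Applying the same integration-by-parts trick once more to the remaining integral, $\int_r^\infty x^{-2}\varphi(x)\,dx = -\int_r^\infty x^{-3}\varphi'(x)\,dx$, yields $\varphi(r)/r^3 - 3\int_r^\infty x^{-4}\varphi(x)\,dx \leq \varphi(r)/r^3$, since the remaining integral is non-negative. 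Combining the two displays gives $\Pr[\bg \geq r] \geq \varphi(r)/r - \varphi(r)/r^3 = (1/r - 1/r^3)\varphi(r)$, as desired.

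There is no real obstacle here: the argument is a standard one-variable calculus computation, and the only thing to be careful about is correctly handling the boundary terms at infinity (which vanish because $\varphi(x)/x^k \to 0$ as $x \to \infty$ for every $k \geq 1$) and noting that the lower bound is trivially true when $r \leq 1$ since the right-hand side is then non-positive.
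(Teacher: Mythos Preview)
Your argument is correct and is exactly the standard proof; the paper does not give its own proof of this proposition but simply cites it from Vershynin's book, where the same integration-by-parts computation appears. There is nothing to compare.
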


We will use the following well-known bounds on the maximum of i.i.d.~Gaussians:

\begin{proposition} \label{prop:gaussian-sup-bounds}
	Suppose $\bg \sim N(0,I_n)$. Then 
	\begin{equation} \label{eq:sup-of-gaussians}
		\Ex_{\bg\sim N(0,I_n)}\sbra{\max_{i\in [n]} \bg_i} = \Theta\pbra{\sqrt{\ln n}}
		\qquad\text{and}\qquad 
		\Varx_{\bg\sim N(0,I_n)}\sbra{\max_{i\in [n]} \bg_i} = \Theta\pbra{\frac{1}{\ln n}}.
	\end{equation}
	In particular for $n$ large enough, 
	\begin{equation} \label{eq:fine-mean-sup-gaussians}
		\Ex_{\bg\sim N(0,I_n)}\sbra{\max_{i\in [n]} \bg_i} \in \sbra{\sqrt{2\ln n}\pbra{1 - \frac{4}{\ln n}}, \sqrt{2\ln n}\pbra{1 + \frac{4}{\ln n}}}.
	\end{equation}
	Furthermore, we have 
	\begin{equation} \label{eq:sup-of-abs-gaussians}
		\Ex_{\bg\sim N(0,I_n)}\sbra{\max_{i\in [n]} |\bg_i|} = \Theta\pbra{\sqrt{\ln n}}
		\qquad\text{and}\qquad 
		\Varx_{\bg\sim N(0,I_n)}\sbra{\max_{i\in [n]} |\bg_i|} = O\pbra{\frac{1}{\ln n}}.
	\end{equation}
\end{proposition}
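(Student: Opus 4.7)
The plan is to leverage the explicit product formula $\Pr[\max_i \bg_i \le r] = \Phi(r)^n$ (by independence) together with the sharp two-sided Gaussian tail bound of \Cref{prop:gaussian-tails}, which pins $1-\Phi(r)$ to within a $(1 - 1/r^2)$ factor of $\varphi(r)/r$.

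For the fine mean estimate in \Cref{eq:fine-mean-sup-gaussians} (from which the $\Theta(\sqrt{\ln n})$ bound in \Cref{eq:sup-of-gaussians} is an immediate corollary), I would pick $r_\pm := \sqrt{2\ln n}(1 \pm 4/\ln n)$ and use \Cref{prop:gaussian-tails} to estimate
\[
    n(1 - \Phi(r_+)) \;\le\; \frac{n}{r_+\sqrt{2\pi}}\,\exp\!\left(-\tfrac{1}{2}r_+^2\right) \;=\; n^{-\Omega(1)},
\]
and symmetrically $n(1 - \Phi(r_-)) = n^{\Omega(1)}$ (here the constant $4$ in the statement is what comes out after multiplying the quadratic $r_\pm^2/2 = \ln n \cdot (1 \pm 4/\ln n)^2$ against the exponent and absorbing the $1/r$ prefactor and $1/r^2$ correction). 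The union bound gives $\Pr[\max_i \bg_i > r_+] \le n(1-\Phi(r_+)) = o(1)$, while the estimate $\Phi(r_-)^n \le \exp(-n(1-\Phi(r_-)))$ gives $\Pr[\max_i \bg_i > r_-] \ge 1 - o(1)$. Plugging these into the layer-cake representation
\[
    \Ex\sbra{\max_i \bg_i} \;=\; \int_0^\infty \Pr[\max_i \bg_i > r]\,dr \;-\; \int_{-\infty}^0 \Pr[\max_i \bg_i \le r]\,dr,
\]
and handling the short residual integrals $\int_{r_+}^\infty n(1-\Phi(r))\,dr$ and $\int_0^{r_-}(1-\Pr[\max_i \bg_i > r])\,dr$ by the same tail estimates (and noting the lower half contributes only $O(1)$, dominated by the $\sqrt{2\ln n}$ scale), pins $\Ex[\max_i \bg_i]$ inside the stated window.

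For the variance, the very same tail analysis applied at $r = \sqrt{2\ln n} \pm t/\sqrt{2\ln n}$ shows that for each $t \ge 1$,
\[
    \Prx_{\bg}\!\sbra{\,\abs{\max_i \bg_i - \sqrt{2\ln n}} \,\ge\, t/\sqrt{2\ln n}\,} \;\le\; 2\exp\!\pbra{-\Omega(t)},
\]
since $r_\pm^2/2 = \ln n \pm t + O(t^2/\ln n)$. Substituting into $\Var[\max_i \bg_i] \le \Ex[(\max_i \bg_i - \sqrt{2\ln n})^2] = \int_0^\infty 2s \Pr[|\max_i \bg_i - \sqrt{2\ln n}| > s]\,ds$ and changing variables $s = t/\sqrt{2\ln n}$ gives the upper bound $O(1/\ln n)$. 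For the matching lower bound $\Omega(1/\ln n)$ I would exhibit two thresholds $r_1 < r_2$ with $r_2 - r_1 = c/\sqrt{\ln n}$ such that $\Pr[\max_i \bg_i \le r_1]$ and $\Pr[\max_i \bg_i \ge r_2]$ are both bounded below by absolute constants (again by \Cref{prop:gaussian-tails} applied at these $r$'s); then by a standard two-point argument $\Var[\max_i \bg_i] \ge \Omega((r_2-r_1)^2) = \Omega(1/\ln n)$.

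Finally, the statements for $\max_i |\bg_i|$ in \Cref{eq:sup-of-abs-gaussians} are immediate from the same analysis applied to the formula $\Pr[\max_i |\bg_i| \le r] = (2\Phi(r) - 1)^n$; since $1-(2\Phi(r)-1) = 2(1-\Phi(r))$ differs from the one-sided tail only by a factor of $2$, the entire computation above carries through with the same asymptotic bounds (only the absolute constants hidden in $\Theta$ and $O$ change). The main obstacle is simply the bookkeeping needed to nail down the specific constant $4$ in \Cref{eq:fine-mean-sup-gaussians}: it requires carrying the $1/r^2$ lower-order correction in \Cref{prop:gaussian-tails} through the exponent and verifying that the residual tail integrals beyond $r_\pm$ are indeed absorbed into the claimed window; but this is a mechanical, if careful, calculation once the parametrization $r_\pm = \sqrt{2\ln n}(1 \pm 4/\ln n)$ is in place.
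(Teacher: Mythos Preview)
The paper does not actually prove this proposition: immediately after the statement it simply cites Appendix~A.2 and Chapter~5.2 of Chatterjee's monograph, Example~10.5.3 of David--Nagaraja, and Exercise~2.11 of Wainwright. So there is no in-paper argument to compare against; your direct approach via the product CDF $\Phi(r)^n$ and the tail bounds of \Cref{prop:gaussian-tails} is the standard route and is essentially what those references carry out.

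That said, there is a genuine quantitative gap in your lower-bound step for \Cref{eq:fine-mean-sup-gaussians}. You assert that $n(1-\Phi(r_-)) = n^{\Omega(1)}$ for $r_- = \sqrt{2\ln n}(1 - 4/\ln n)$, but expanding gives $r_-^2/2 = \ln n - 8 + 16/\ln n$, so
\[
    n\bigl(1-\Phi(r_-)\bigr) \;\asymp\; \frac{n}{r_-}\,e^{-r_-^2/2} \;=\; \frac{e^{8-o(1)}}{\sqrt{4\pi\ln n}} \;\longrightarrow\; 0,
\]
and hence the bound $\Phi(r_-)^n \le \exp(-n(1-\Phi(r_-)))$ is asymptotically trivial rather than $o(1)$. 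The underlying issue is that the bulk of $\max_i \bg_i$ sits at $\sqrt{2\ln n} - \Theta(\ln\ln n/\sqrt{\ln n})$, i.e.\ it carries a $\ln\ln n$ correction, and a fixed-constant shift like $4/\ln n$ does not automatically land $r_-$ below that bulk. Pinning down the specific constant~$4$ in the lower bound therefore requires the finer extreme-value (Gumbel) normalization that the cited references use; the ``mechanical calculation'' you describe is not enough as stated. (The same miscalculation occurs on the $r_+$ side---one gets $n(1-\Phi(r_+)) = \Theta(1/\sqrt{\ln n})$, not $n^{-\Omega(1)}$---but there it is harmless, since $o(1)$ is all you need for the union bound.) Your variance arguments and the reduction of \Cref{eq:sup-of-abs-gaussians} to the one-sided case are fine.
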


See Appendix~A.2 of~\cite{chatterjee2014superconcentration} for the bound on expectation in~\Cref{eq:sup-of-gaussians} and Chapter~5.2 of~\cite{chatterjee2014superconcentration} for the bound on the variance. 
See Example~10.5.3 of~\cite{david2004order} (or alternatively \cite{STACKEXCHANGE}) for \Cref{eq:fine-mean-sup-gaussians}. 
See Exercise~2.11 of~\cite{wainwright2019high} for the bound on the expectation in~\Cref{eq:sup-of-abs-gaussians}; the bound on the variance follows from Chapter~5.2 of~\cite{chatterjee2014superconcentration}.

For $L > 0$, recall that a function $f:\R^n\to\R$ is \emph{$L$-Lipschitz} if $|f(x) - f(y)| \leq L\cdot\|x-y\|$. 
We will require the following concentration inequality for Lipschitz functions of Gaussian random variables:

\begin{proposition}[Theorem~5.2.2 of~\cite{vershynin2018high}]
\label{prop:lipschitz-concentration}
	Suppose $f:\R^n\to\R$ is an $L$-Lipschitz function. Then 
	\[
		\Prx_{\bg\sim N(0,I_n)}\sbra{\abs{f(\bg) - \Ex[f]} \geq t} \leq 2\exp\pbra{-\frac{t^2}{2L^2}}.
	\]
\end{proposition}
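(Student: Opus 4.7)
The plan is to establish the inequality via the standard moment generating function (MGF) route. First I would reduce to the case of a smooth $f$ with $\|\nabla f\|_\infty \leq L$: convolving the original Lipschitz $f$ with a Gaussian mollifier of variance $\sigma^2 I_n$ produces a smooth $L$-Lipschitz function $f_\sigma$, and any concentration bound for $f_\sigma$ passes to $f$ in the limit $\sigma \downarrow 0$ by standard continuity arguments. Given smoothness, it suffices to show
\[
	\Ex_{\bg \sim N(0,I_n)}\sbra{\exp\pbra{\lambda\pbra{f(\bg) - \Ex[f]}}} \leq \exp\pbra{\frac{\lambda^2 L^2}{2}}
	\quad \text{for all } \lambda \in \R,
\]
because a standard Chernoff-Markov step (optimizing $\lambda = t/L^2$), applied to both $f$ and $-f$, then yields the stated two-sided bound.

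To bound the MGF, my first approach would be Pisier's Gaussian interpolation inequality. Introduce an independent copy $\bg' \sim N(0, I_n)$ and the rotation $\bg_\theta := \bg \sin\theta + \bg'\cos\theta$, with velocity $\bg_\theta' := \bg\cos\theta - \bg'\sin\theta$. A key point is that for each fixed $\theta$, the pair $(\bg_\theta, \bg_\theta')$ consists of two independent $N(0, I_n)$ vectors. Writing $f(\bg) - f(\bg') = \int_0^{\pi/2} \nabla f(\bg_\theta) \cdot \bg_\theta'\,d\theta$ and applying Jensen's inequality against the uniform measure on $[0, \pi/2]$ gives
\[
	\Ex\sbra{\exp\pbra{\lambda\pbra{f(\bg) - f(\bg')}}} \leq \frac{2}{\pi}\int_0^{\pi/2}\Ex\sbra{\exp\pbra{\frac{\pi\lambda}{2}\,\nabla f(\bg_\theta) \cdot \bg_\theta'}}d\theta.
\]
Conditioning on $\bg_\theta$ and using Gaussianity of $\bg_\theta'$ together with $\|\nabla f(\bg_\theta)\| \leq L$, each inner expectation is at most $\exp(\pi^2\lambda^2 L^2/8)$; another application of Jensen (to pull $\Ex[f(\bg')]$ out of the exponent) yields an MGF bound with constant $\pi^2/8$ in place of the sharp $1/2$.

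The main obstacle is the sharp constant $1/(2L^2)$ in the exponent: the interpolation approach above is off by a factor of $\pi^2/4$. To recover the sharp constant one invokes instead the Gaussian logarithmic Sobolev inequality and runs Herbst's argument: applied to $e^{\lambda f}$, the log-Sobolev inequality reduces to an ODE in $\lambda$ for $H(\lambda) := \Ex[e^{\lambda(f - \Ex[f])}]$, whose solution under $\|\nabla f\| \leq L$ is exactly $H(\lambda) \leq \exp(\lambda^2 L^2/2)$. Alternatively one can deduce the sharp constant directly from the Gaussian isoperimetric inequality of Borell and Sudakov--Tsirelson. For the applications in this paper only the $\exp(-\Omega(t^2/L^2))$ tail is needed, so either route works.
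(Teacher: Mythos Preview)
The paper does not prove this proposition; it is stated as a preliminary with a direct citation to Theorem~5.2.2 of Vershynin's textbook, so there is no ``paper's own proof'' to compare against. Your sketch is a correct outline of the standard textbook argument (mollify to reduce to smooth $f$, then bound the MGF either via Pisier's Gaussian interpolation for a suboptimal constant or via the log-Sobolev inequality and Herbst's argument for the sharp constant $1/(2L^2)$), and it would serve perfectly well as a proof here; but for the purposes of this paper the statement is simply quoted, and indeed, as you note, only the $\exp(-\Omega(t^2/L^2))$ tail is ever used downstream.
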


\subsection{Gaussian Processes and Suprema}
\label{subsec:gaussian-process-prelims}

Recall that a \emph{random process} is a collection of random variables $\{\bX_t\}_{t\in T}$ on the same probability space which are indexed by elements $t$ of some set $T$.

\begin{definition}[Gaussian process] \label{def:gaussian-process}
	A random process $\{\bX_t\}_{t\in T}$ is called a \emph{Gaussian process} if, for any finite subset $T_0 \sse T$, the random vector $\{\bX_t\}_{t\in T_0}$ has a multivariate Gaussian distribution. We say $\{\bX_t\}_{t\in T}$ is \emph{centered} if $\E[\bX_t] = 0$ for all $t\in T$.
\end{definition}

We assume throughout that all Gaussian processes are \emph{separable};  
we will not dwell on this technical point and refer the reader to Section~2.2 of~\cite{LedouxTalagrand} for a detailed discussion.\ignore{Separability allows us to handle countably-infinite dimensional Gaussian processes $\sum_{t>0} a_t g_t$ by taking the limit of finite dimensional Gaussian processes.}  

\begin{definition}[Canonical Gaussian process] \label{eg:canonical-gaussian-process}
	Given $T\sse\R^n$, the centered random process $\{\bX_t\}_{t\in T}$ defined as
	\[
		\bX_t := \bg\cdot t \quad \text{where}~\bg\sim N(0,I_n)
	\]
	is a Gaussian process that we refer to as the \emph{canonical Gaussian process on $T$}.
\end{definition}

It is a standard fact that any centered Gaussian process can be realized as a canonical process (see Section~7.1.2~of~\cite{vershynin2018high}) and consequently we will focus on canonical Gaussian processes throughout this paper. 
The following notation will be convenient: 

\begin{definition}
Given a set $T\sse\R^n$, we define
\[
	f_T(x) := \sup_{t\in T} x\cdot t. 
\]
\end{definition}

We remark that if $T$ is a closed convex set then $f_T$ is known as the ``support function'' of $T$ (though we will be interested in arbitrary bounded subsets $T$ of $\R^n,$ which need not be convex sets). 
Note that when $\bg\sim N(0,I_n)$, $f_T(\bg) = \sup_{t\in T} \bX_t$ 
where $\{\bX_t\}_{t\in T}$ is the canonical Gaussian process on $T$.
We will frequently rely on the fact that the supremum of a Gaussian process has subgaussian tails, which we recall below. 

\begin{proposition}[Appendix~A.5 of~\cite{chatterjee2014superconcentration}] \label{fact:sup-has-subgaussian-tails}
	Let $\{\bX_t\}_{t\in T}$ be a centered Gaussian process indexed by a set $T\sse\R^n$ and let $r \geq 0$. Then 
	\[\Prx\sbra{\abs{\sup_{t\in T} \bX_t - \Ex\sbra{\sup_{t\in T}\bX_t}} \geq r} \leq 2\exp\pbra{\frac{-r^2}{2\sup_{t\in T}\Var\sbra{\bX_t}}}.\]
\end{proposition}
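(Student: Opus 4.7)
The plan is to realize the proposition as a direct consequence of the Lipschitz concentration inequality for Gaussians (the cited \Cref{prop:lipschitz-concentration}), after identifying the centered process with its canonical representation $\bX_t = \bg \cdot t$ for $\bg \sim N(0,I_n)$ and $T\subset \R^n$. Under this identification, $\sup_{t\in T} \bX_t = f_T(\bg)$, so it suffices to control the deviation of the function $f_T:\R^n \to \R$ under the Gaussian measure.

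First I would compute the Lipschitz constant of $f_T$. For any $x,y \in \R^n$ and any $t \in T$, we have $x\cdot t \leq y\cdot t + \|t\|\cdot\|x-y\|$; taking a supremum over $t\in T$ on both sides (and then the symmetric version) yields
\[
|f_T(x) - f_T(y)| \;\leq\; \Bigl(\sup_{t\in T}\|t\|\Bigr)\cdot\|x-y\|.
\]
Thus $f_T$ is $L$-Lipschitz with $L := \sup_{t\in T}\|t\|$. On the other hand, $\Var[\bX_t] = \Var[\bg\cdot t] = \|t\|^2$, so $\sup_{t\in T}\Var[\bX_t] = L^2$.

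Next I would apply \Cref{prop:lipschitz-concentration} to the $L$-Lipschitz function $f_T$, which gives
\[
\Prx_{\bg\sim N(0,I_n)}\bigl[\,|f_T(\bg) - \Ex[f_T(\bg)]| \geq r\,\bigr] \;\leq\; 2\exp\!\pbra{-\frac{r^2}{2L^2}} \;=\; 2\exp\!\pbra{-\frac{r^2}{2\sup_{t\in T}\Var[\bX_t]}},
\]
which is exactly the desired bound.

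The only real subtlety, and the main thing to be careful about, is that when $T$ is infinite the function $f_T$ need not be measurable in general, and $\sup_{t\in T}\bX_t$ need not equal $f_T(\bg)$ almost surely without some hypothesis. This is precisely where the standing separability assumption (noted in~\Cref{subsec:gaussian-process-prelims}) is used: it lets us replace the supremum over $T$ by a supremum over a countable dense subset $T_0 \subseteq T$, for which measurability and the identity $\sup_{t\in T_0}\bX_t = f_{T_0}(\bg) = f_T(\bg)$ hold by continuity of $x\mapsto x\cdot t$. The Lipschitz bound above is unaffected since $\sup_{t\in T_0}\|t\| = \sup_{t\in T}\|t\|$, and the conclusion follows.
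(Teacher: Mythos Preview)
Your argument is correct and is the standard derivation of the Borell--TIS inequality from Gaussian Lipschitz concentration. Note, however, that the paper does not actually give a proof of this proposition: it is stated as a preliminary fact with a citation to Appendix~A.5 of Chatterjee's book, so there is no ``paper's own proof'' to compare against. Your route via \Cref{prop:lipschitz-concentration} is exactly the expected one and is consistent with how the paper later uses both results in tandem.
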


For a canonical Gaussian process $\{\bX_t\}_{t\in T}$ as in \Cref{fact:sup-has-subgaussian-tails}, we have that $\sup_{t\in T}\Var\sbra{\bX_t} = \sup_{t \in T}\|t\|^2$, and hence $\sup_{t\in T}\Var\sbra{\bX_t} \geq \diam(T)^2/4$.
Finally, the magnitude of the canonical Gaussian process on $T$ is captured by the following important geometric quantity:

\begin{definition}[Gaussian width] \label{def:gaussian-width}
	The \emph{Gaussian width} of a set $T \sse \R^n$ is defined as 
	\[
		w(T) := \Ex_{\bg\sim N(0,I_n)}\sbra{f_T(\bg)} = \Ex_{\bg\sim N(0,I_n)}\sbra{\sup_{t\in T} \bg\cdot t}.
	\]
\end{definition}

Gaussian width was originally introduced in geometric functional analysis and asymptotic convex geometry~\cite{aga-book,artstein2021asymptotic}. It is easy to see that $w(T)$ is finite if and only if $T$ is bounded; we refer the reader to Chapter~7 of \cite{vershynin2018high} for further background on Gaussian width.

\subsubsection{Anti-Concentration Bounds}
\label{subsec:prelims-anticonc}

We will require the following anti-concentration bound on $f_T(\bg)$ obtained by~\cite{CCK-2} (which is an easy consequence of an analogous statement for Gaussian random vectors obtained in~\cite{CCK15}):   

\begin{theorem}[Theorem~2.1 of~\cite{CCK-2}] \label{thm:CCK}
	Suppose $T \sse \S^{n-1}$. Let $\{\bX_t\}_{t\in T}$ be the canonical Gaussian process on $T$. For every $\theta \in \R$ and $\eps > 0$, we have 
	\begin{equation} \label{eq:cck}
		\Prx_{\bg\sim N(0,I_n)}\sbra{\abs{f_T(\bg) - \theta} \leq \eps} \leq 4\eps\pbra{1 + w(T)}
	\end{equation}
	where $w(T)$ is as in~\Cref{def:gaussian-width}.
\end{theorem}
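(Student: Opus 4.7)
The plan is to reduce to the case where $T$ is finite and then establish a pointwise upper bound on the density $g$ of $f_T(\bg)$. For the reduction, by separability of the canonical process on $T \sse \S^{n-1}$, I pick a countable dense sequence $(t_i)_{i\geq 1}$ in $T$ and set $T_k = \{t_1, \ldots, t_k\}$. Then $f_{T_k}(\bg) \uparrow f_T(\bg)$ almost surely, $w(T_k) \to w(T)$, and the distribution of $f_T(\bg)$ is atomless (being a non-constant, $1$-Lipschitz, convex function of $N(0,I_n)$, whose level sets are boundaries of convex bodies and hence have Lebesgue measure zero), so $\Prx[|f_{T_k}(\bg) - \theta| \leq \eps] \to \Prx[|f_T(\bg) - \theta| \leq \eps]$. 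It therefore suffices to prove the bound for each finite $T_k$.

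For finite $T = \{t_1, \ldots, t_p\} \sse \S^{n-1}$, writing $g$ for the density of $\max_{i} t_i \cdot \bg$, the target reduces to showing $\sup_\theta g(\theta) \leq 2(1 + w(T))$, since integrating this bound over $[\theta - \eps, \theta + \eps]$ yields the claimed inequality. Differentiating the CDF $F(\theta) = \Prx[t_i \cdot \bg \leq \theta \text{ for all } i]$ (equivalently, applying the standard density formula for the maximum of a Gaussian vector with unit variances) gives
\[
g(\theta) = \phi(\theta) \sum_{i=1}^{p} \Prx\sbra{t_j \cdot \bg < \theta \text{ for all } j \neq i \mid t_i \cdot \bg = \theta}.
\]
The naive bound $g(\theta) \leq p \phi(\theta)$ is far too weak, scaling linearly in $p$ rather than in $w(T)$ (which is only $O(\slog{p})$ in the worst case over $T \sse \S^{n-1}$).

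The main obstacle is upgrading this naive bound to the required $O(1 + w(T))$ scaling. My plan combines two ingredients. First, apply Sidak's correlation inequality (a special case of the Gaussian correlation inequality) to each conditional probability above, replacing the joint conditional event by a product of one-dimensional Gaussian tails that can be tied back to $F(\theta)$ itself. Second, use the fact that by~\Cref{fact:sup-has-subgaussian-tails}, $f_T(\bg)$ is sub-Gaussian around its mean $w(T)$ with variance proxy at most $1$, which makes $g(\theta)$ negligible outside an interval of constant width around $w(T)$; on this central interval a log-concavity argument---e.g.~Ehrhard's inequality applied to the convex body $\{x \in \R^n : t \cdot x \leq \theta \text{ for all } t \in T\}$ whose Gaussian measure equals $F(\theta)$---translates boundedness of $w(T)$ into a linear-in-$w(T)$ bound on the density. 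Combining the two regimes then yields $\sup_\theta g(\theta) \leq 2(1 + w(T))$, which is the essence of the anti-concentration machinery developed in~\cite{CCK-2, CCK15}.
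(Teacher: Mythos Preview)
The paper does not prove this theorem; it is quoted as Theorem~2.1 of \cite{CCK-2}, with the parenthetical remark that it ``is an easy consequence of an analogous statement for Gaussian random vectors obtained in~\cite{CCK15}.'' There is therefore no in-paper argument to compare your attempt against.

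On its own merits, your sketch has a genuine gap in its second half. The reduction to finite $T$ and the density formula
\[
g(\theta)=\phi(\theta)\sum_{i=1}^p \Prx\sbra{X_j\le\theta\text{ for all }j\ne i \mid X_i=\theta}
\]
are correct and are a natural starting point. But your two proposed ingredients for the bound $\sup_\theta g(\theta)\le 2(1+w(T))$ are not actually carried through. Sub-Gaussian concentration of $f_T(\bg)$ around $w(T)$ (\Cref{fact:sup-has-subgaussian-tails}) is a CDF statement and does not by itself make the density ``negligible'' anywhere; one needs additional regularity of $g$ to convert tail decay into a pointwise density bound. Sidak's inequality (and the Gaussian correlation inequality more generally) concerns symmetric events $\{|X_j|\le a_j\}$ for a centered Gaussian vector, whereas the terms in your density formula are one-sided and conditional on $\{X_i=\theta\}$; you have not said what inequality you actually apply or how the result ``ties back to $F(\theta)$.'' The Ehrhard suggestion is the most promising piece---indeed $\Phi^{-1}\circ F$ is concave because the sublevel sets $\{x:\max_i t_i\cdot x\le\theta\}$ are convex---but you still have not explained how this concavity, together with $\Ex[Z]=w(T)$, yields the specific constant $2(1+w(T))$. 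Your closing sentence effectively defers the whole estimate back to \cite{CCK-2,CCK15}; as written, the proposal identifies the right target (a uniform density bound) but does not supply an argument that reaches it.
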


\subsubsection{Talagrand's Majorizing Measures Theorem}
\label{subsec:talagrand-tree}

We refer the reader to~\cite{rvh-notes,vershynin2018high,talagrand2022upper} for further background on the majorizing measures theorem as well as applications thereof. 

\begin{definition}[Admissible sequence] \label{def:admissible-sequence}
	Let $T\sse\R^n$. An \emph{admissible sequence $\{\calA_h\}_{h \in \N}$ of partitions of~$T$} is defined to be a collection of partitions with the following properties: 
	\begin{enumerate}
		\item[(i)] $\calA_0 = \{T\}$,
		\item[(ii)] $\calA_{h+1}$ is a refinement of $\calA_{h}$ for all $h$, and 
		\item[(iii)] $\abs{\calA_h} \leq 2^{2^h}$ for all $h \in \N$.
	\end{enumerate}
\end{definition}

\begin{definition}[Talagrand's $\gamma_2$ functional] \label{def:gamma-2-functional}
	Let $T\sse\R^n$. Given $t \in T$ and a partition $\calP$ of $T$, we write $\calP(t)$ for the part of~$\calP$ containing~$t$. We define 
	\[
		\gamma_2(T) = \inf_{\substack{\text{admissible}\,\calA}} 
    ~\sup_{t \in T} \sum_{h \geq 0} 2^{h/2}\cdot  \diam(\calA_h(t))
	\]
	where $\diam(S)$ denotes the diameter of the set $S$, i.e. $\diam(S) = \sup_{s_1, s_2 \in S} \|s_1 - s_2\|_2$.
\end{definition}

The \emph{majorizing measures theorem}, with upper bound due to Fernique~\cite{Fernique75} and lower bound due to Talagrand~\cite{og-mm}, is the following:

\begin{theorem}[Theorem~2.10.1 of~\cite{talagrand2022upper}] \label{thm:talagrand-mm}
	Let $T\sse\R^n$ and let $\cbra{\bX_t}_{t\in T}$ be a centered Gaussian process. Then there exists a universal constant $L$ such that 
	\[
		\frac{1}{L}\cdot \gamma_2(T) \leq \Ex\sbra{\sup_{t\in T}\bX_t} \leq L\cdot \gamma_2(T).
	\]
\end{theorem}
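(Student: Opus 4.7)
The plan is to prove the two directions of \Cref{thm:talagrand-mm} separately: the easier upper bound (originally due to Fernique) via generic chaining, and the deeper lower bound (due to Talagrand) via a hierarchical Sudakov-minoration argument.

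For the upper bound $\Ex[\sup_{t\in T}\bX_t] \leq L \cdot \gamma_2(T)$, I would employ generic chaining. Fix an admissible sequence $\{\calA_h\}_{h\geq 0}$ that nearly attains the infimum defining $\gamma_2(T)$, and for each $h$ select a map $\pi_h : T \to T$ with $\pi_h(t) \in \calA_h(t)$ and $|\pi_h(T)| \leq 2^{2^h}$. Since $\calA_0 = \{T\}$, telescoping gives $\bX_t - \bX_{\pi_0(t)} = \sum_{h \geq 0} (\bX_{\pi_{h+1}(t)} - \bX_{\pi_h(t)})$, where each increment is a centered Gaussian with standard deviation at most $\diam(\calA_h(t))$. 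A standard Gaussian tail estimate combined with a union bound over the at most $2^{2^{h+2}}$ adjacent pairs at level $h$ gives exponential concentration of each increment at scale $\approx 2^{(h+1)/2}\cdot \diam(\calA_h(t))$. Taking the supremum over $t$, summing over $h$, and integrating the resulting tail yields $\Ex[\sup_{t\in T}\bX_t] \lesssim \sup_{t\in T}\sum_{h \geq 0} 2^{h/2} \cdot \diam(\calA_h(t)) \lesssim \gamma_2(T)$.

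For the lower bound $\gamma_2(T) \leq L \cdot \Ex[\sup_{t\in T}\bX_t]$, which is the substantially harder direction, I would follow Talagrand's strategy via the so-called growth functional machinery. The cornerstone ingredient is Sudakov minoration in a ``separated'' form: if $T_1, \ldots, T_m$ are subsets of $T$ whose representatives are pairwise separated by at least $\delta$, then $\Ex[\sup_{t\in T}\bX_t] \gtrsim \delta\sqrt{\log m} + \min_i \Ex[\sup_{t \in T_i} \bX_t]$. Applying this recursively at geometrically decreasing scales $2^{-h/2}$, one constructs partitions $\calA_h$ of $T$ with the required sizes $2^{2^h}$ and uses an inductive growth-functional invariant to control the cumulative cost $\sum_{h \geq 0} 2^{h/2}\cdot \diam(\calA_h(t))$ \emph{uniformly} over every $t \in T$.

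The main obstacle is undoubtedly the lower bound: constructing the admissible sequence requires a delicate hierarchical ball-decomposition of $T$, and the bookkeeping needed to guarantee the resulting $\gamma_2$ bound uniformly across all $t \in T$ is quite intricate. A self-contained proof would occupy several chapters (as it does in \cite{talagrand2022upper}), so in practice one invokes this result as a black box, which is precisely how the present paper uses it.
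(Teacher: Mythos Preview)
Your proposal is a reasonable high-level sketch of the standard proof of the majorizing measures theorem (generic chaining for the upper bound, the growth-functional/Sudakov-minoration machinery for the lower bound), and you correctly identify at the end that the paper treats this result as a black box. Indeed, the paper gives \emph{no} proof of \Cref{thm:talagrand-mm} whatsoever: it is stated purely as a citation to Theorem~2.10.1 of \cite{talagrand2022upper} and then invoked in the proof of \Cref{prop:shinboku}. So there is nothing in the paper to compare your proposal against; your final paragraph already captures the situation accurately.
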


\subsection{Polyhedral Approximation Under the Gaussian Distance}
\label{subsec:polytope-approximation-prelims}

We will be interested in approximating subsets of $\R^n$ under the Gaussian distance:

\begin{definition}
	Given two measurable sets $K, L \sse\R^n$, we define the \emph{Gaussian distance between $K$ and $L$} to be
	\[
		\dG(K,L) := \Prx_{\bx\sim N(0,I_n)}\sbra{K(\bx) \neq L(\bx).}
	\]
	In other words, $\dG(K,L) = \vol(K\, \triangle\, L)$, i.e. the Gaussian measure of the symmetric difference of the sets $K$ and $L$. 
\end{definition}

Recall that every convex set can be written as an intersection of (possibly infinitely many) halfspaces. 
The following upper bound on the number of halfspaces needed to approximate a polytope under the Gaussian distance was obtained in~\cite{DNS23-polytope}: 

\begin{theorem}[Theorem~19 of~\cite{DNS23-polytope}] 
\label{thm:polytope-approximation-ub}
	Given a convex set $K \sse \R^n$ and $\eps \in (0, 10^{-3})$, there exists a set $L \sse \R^n$ which is an intersection of $(n/\eps)^{O(n)}$ halfspaces such that $\dG(K, L) \leq \eps$.
\end{theorem}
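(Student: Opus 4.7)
The plan is to combine three ingredients: (i) truncation to a Euclidean ball that carries all but $\eps/2$ of the Gaussian mass, (ii) a classical approximation of an arbitrary convex body by a polytope in Hausdorff distance, and (iii) a bound on the Gaussian mass of the $\delta$-shell around the boundary of a convex set, via Ball's theorem on the Gaussian surface area of convex sets.

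First, I would choose a radius $R = \Theta(\sqrt{n + \log(1/\eps)})$ so that $\vol(\cbra{x : \|x\| > R}) \leq \eps/2$, which follows from standard Gaussian concentration (e.g.~\Cref{prop:lipschitz-concentration} applied to the $1$-Lipschitz function $x \mapsto \|x\|$). Writing $B_R := \cbra{x : \|x\| \leq R}$ and $K' := K \cap B_R$, it suffices to build a polytope $P \supseteq K'$ with $\vol(P \setminus K') \leq \eps/2$: indeed, for $\bg \in B_R$ one has $K(\bg) = K'(\bg)$, so $\dG(K, P) \leq \vol(\R^n \setminus B_R) + \vol(P \setminus K') \leq \eps$.

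Second, I would apply a Bronstein-type Hausdorff-polytope approximation. Since $K' \subseteq B_R$ has diameter at most $2R$, one can construct a polytope $P$ defined by $(R/\delta)^{O(n)}$ halfspaces satisfying $K' \subseteq P \subseteq K' + \delta\cdot B_2$, where $B_2 := \cbra{x : \|x\| \leq 1}$ is the Euclidean unit ball. Concretely, let $N \subset \S^{n-1}$ be an $(\delta/R)$-net of size $(R/\delta)^{O(n)}$ and set $P := \bigcap_{v \in N}\cbra{x : v \cdot x \leq h_{K'}(v)}$, where $h_{K'}(v) := \sup_{x \in K'} v \cdot x$ is the support function of $K'$; the triangle inequality over the net gives Hausdorff distance $O(\delta)$ inside $B_{2R}$, and appending the halfspaces $\cbra{x : v \cdot x \leq 2R}$ for $v \in N$ confines $P$ to $B_{2R}$ without changing the order of the halfspace count.

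Third, I would bound $\vol(P \setminus K') \leq \vol((K' + \delta B_2) \setminus K')$ by invoking Ball's theorem, which states that every convex set in $\R^n$ has Gaussian surface area at most $O(n^{1/4})$; integrating along outward normals yields $\vol((K' + \delta B_2) \setminus K') \leq O(n^{1/4}) \cdot \delta$. Setting $\delta := \Theta(\eps/n^{1/4})$ then makes this at most $\eps/2$, and the total halfspace count becomes $(R/\delta)^{O(n)} = (n^{3/4}/\eps)^{O(n)} = (n/\eps)^{O(n)}$, as claimed. The main obstacle is step (iii)---translating a Hausdorff guarantee into a Gaussian-measure guarantee---for which Ball's bound is the cleanest tool; however, any polynomial-in-$n$ bound on the Gaussian surface area of convex sets would be sufficient to match the target exponent.
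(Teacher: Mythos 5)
Note that the paper does not prove this statement itself---it is imported verbatim as Theorem~19 of~\cite{DNS23-polytope}---so there is no in-paper proof to compare against; judged on its own, your argument is correct and is essentially the standard route (and the one taken in that reference): truncate to $B_R$ with $R = \Theta(\sqrt{n+\log(1/\eps)})$, build the polytope from supporting halfspaces $\{x : v\cdot x \le h_{K'}(v)\}$ over a $(\delta/R)$-net of directions to get $K' \sse P \sse K' + O(\delta) B_2$, and convert the Hausdorff error to Gaussian measure via Ball's $O(n^{1/4})$ surface-area bound integrated over the shells $K'+sB_2$ (coarea formula applied to $x \mapsto \mathrm{dist}(x,K')$, using that each shell is itself convex). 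The parameter bookkeeping is fine---$(R/\delta)^{O(n)} \le (n/\eps)^{O(n)}$ even when $\log(1/\eps) \gg n$---and the only loose ends are routine: handle $K' = \emptyset$ separately, and pass to the closure of $K$ (which changes $\dG$ by zero) so the separating-hyperplane step in the net argument applies.
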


\cite{DNS23-polytope} also gave a mildly-exponential lower bound on the number of halfspaces needed to approximate the $\ell_2$-ball of radius $\sqrt{n}$ to constant accuracy. The arguments of \cite{DNS23-polytope} give the following lower bound:

\begin{theorem}[Theorem~63 of~\cite{DNS23-polytope}] 
\label{thm:ball-approximation-lb}
%
	Let $B \sse \R^n$ be an origin-centered $\ell_2$ ball with radius $r \in [\sqrt{n} - 1, \sqrt{n}+1]$, so $\Vol(B) \in [\tau,1-\tau]$ for an absolute constant $\tau>0$.
	There exists an absolute constant $\kappa \in (0, 0.5)$ such that any intersection of halfspaces $L \sse \R^n$ with $\dG(B, L) \leq \kappa$ must have at least $2^{\Omega(\sqrt{n})}~\text{halfspaces}$.
\end{theorem}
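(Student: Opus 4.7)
The plan is to prove the lower bound by contradiction. Suppose $L = \bigcap_{i=1}^N \{a_i \cdot x \leq b_i\}$ with $\|a_i\| = 1$ satisfies $\dG(B, L) \leq \kappa$ for a small absolute constant $\kappa$, and I aim to show that $N$ must be at least $2^{\Omega(\sqrt{n})}$. First I would normalize to the case $r = \sqrt{n}$: the allowed slack $r \in [\sqrt{n}-1, \sqrt{n}+1]$ only shifts the Gaussian volume of $B$ by $O(1/\sqrt{n})$ (by concentration of $\|\bg\|$ around $\sqrt n$ with standard deviation $\Theta(1)$), which can be absorbed into $\kappa$ for $n$ large enough.

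The core idea is a covering argument. The condition $\dG(B, L) \leq \kappa$ forces $L^c = \bigcup_i \{a_i \cdot x > b_i\}$ to agree with $B^c$ on all but a $\kappa$-fraction of Gaussian mass. The Gaussian mass of $B^c$ is concentrated in a thin shell of width $\Theta(1)$ around $\|x\| = \sqrt{n}$, so the half-space complements $\{H_i^c\}$ must cover a positive Gaussian fraction of this shell. Each $H_i^c$ intersects the sphere of radius $\sqrt{n} + s$ in a spherical cap of angular radius $\arccos(b_i/(\sqrt{n}+s))$. A case analysis on $b_i$ would show: if $b_i \ll \sqrt{n}$, then $H_i$ chops off too much of $B$ (a $\chi^2_{n-1}$-tail computation bounds $\vol(B \cap H_i^c)$ and shows that this regime eventually violates $\dG(B,L) \leq \kappa$ once it is used for too many halfspaces simultaneously); and if $b_i \approx \sqrt n$, then at shell radius $\sqrt{n} + O(1)$ the cap has angular radius only $O(n^{-1/4})$. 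In either case, the normals $\{a_i\}$ must form a $\delta$-net of a positive-measure subset of $\S^{n-1}$ at some small angular scale, and standard packing bounds yield $N \geq (c/\delta)^{\Omega(n)}$, comfortably exceeding $2^{\Omega(\sqrt{n})}$.

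An alternative and cleaner route would be through Sudakov minoration. On the one hand, $\Ex\sbra{\sup_i a_i \cdot \bg} \leq \sqrt{2 \log N}$ for any collection of $N$ unit vectors. On the other hand, if $L \approx B$ in Gaussian measure, then on a constant fraction of Gaussian space we must have $\sup_i (a_i \cdot \bg - b_i) \approx \|\bg\| - \sqrt{n}$, and since $\|\bg\| \approx \sqrt{n}$ with typical fluctuation $\Theta(1)$, this forces $\sup_i a_i \cdot \bg$ to be close to $\sqrt{n}$. Combining the two bounds gives $\sqrt{\log N} \gtrsim \sqrt{n}$, from which the claimed $2^{\Omega(\sqrt n)}$ follows (indeed, the argument should give more, suggesting the stated bound is deliberately weakened for simplicity).

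The main obstacle I anticipate is the case of halfspaces with small $b_i$: such halfspaces contribute little to the Gaussian width $\Ex\sbra{\sup_i a_i \cdot \bg}$ and can in principle cover large angular caps of the outer shell, yet they must not chop into $B$ too aggressively. Quantifying this tradeoff carefully, perhaps via the anti-concentration estimate of~\Cref{thm:CCK} applied to the canonical process on $\{a_i\}$, is the key technical step. Once the ``interesting'' halfspaces are confined to $b_i \in [\sqrt n - O(1), \sqrt n + O(1)]$, both the covering-number route and the Sudakov route yield the desired lower bound.
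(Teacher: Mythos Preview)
The paper does not prove this statement; it is quoted as Theorem~63 of \cite{DNS23-polytope} and used only as a black box (in the proof of \Cref{thm:current-lower-bound}). There is therefore no proof in the present paper to compare your proposal against.

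On the proposal itself: both routes you sketch are natural starting points, but two of the guiding heuristics are inaccurate and would need repair before either route could succeed. First, the claim that a single halfspace with $b_i \ll \sqrt{n}$ ``chops off too much of $B$'' is false at the level of one halfspace: for $b_i$ a large absolute constant one has $\vol(B \cap H_i^c) \approx \tfrac12(1-\Phi(b_i))$, which drops below $\kappa$ as soon as $b_i$ exceeds a constant depending only on~$\kappa$, still far below $\sqrt{n}$; so the case analysis on $b_i$ cannot be as clean as you suggest. Second, your Sudakov heuristic that $\sup_i a_i \cdot \bg$ must be $\approx \sqrt{n}$, and your parenthetical that ``the argument should give more,'' directly contradict the paper's remark immediately following \Cref{thm:ball-approximation-lb} that the $2^{\Omega(\sqrt{n})}$ bound is \emph{tight} (a matching $2^{O(\sqrt{n})}$-halfspace approximator exists). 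Any correct argument must therefore land at exactly $2^{\Theta(\sqrt{n})}$, and neither of your sketches yet isolates the mechanism that produces precisely this exponent rather than the larger one your Sudakov reasoning would predict.
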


We note that the lower bound in~\Cref{thm:ball-approximation-lb} is tight up to constant factors hidden by the $\Omega(\cdot)$-notation; we refer the reader to Section~5 of~\cite{DNS23-polytope} for a probabilistic construction of an intersection of $2^{O(\sqrt{n})}$ halfspaces that approximates the $\ell_2$-ball of radius $\sqrt{n}$ to constant accuracy. 

We will be interested in convex sets that have bounded \emph{geometric width}, which we define as follows:

\begin{definition} [Geometric width of a convex set] \label{def:width}
	A convex set $K \subseteq \R^n$ is said to have \emph{geometric width at most $r$} if $K$ can be expressed as an intersection of (possibly infinitely many) halfspaces, each of which is of the form $\Indicator\cbra{u \cdot x \leq r'}$ where $\|u\|=1$ and $r'\leq r$.
\end{definition}

Thus the geometric width of a single halfspace $H=\Indicator\cbra{u \cdot x \leq r}$ where $\|u\|=1$ is $r$, and the geometric width of an origin-centered $\ell_2$-ball of radius $r$ is $r$.


\section{Sparsifying Gaussian Processes with Bounded Expected Suprema}
\label{sec:shinboku}

In this section, which is at the heart of our technical contribution, we show that if a set $T$ is bounded (equivalently, has bounded Gaussian width) then the supremum of the canonical Gaussian processes on $T$ can be approximated by the supremum of a sparse or ``low-dimensional'' (non-centered) Gaussian process. 

\begin{theorem}[Sparsifying suprema of Gaussian processes] \label{prop:shinboku}
	Let $\eps \in (0, 0.5)$. For any $T\sse\R^n$ with $0 < w(T) < \infty$, there exists a set $S \sse T$ with 
	\[
		|S| = 2^{2^{O\pbra{\frac{w(T)}{\epsilon}}}}
	\]
	and constants $\{c_s\}_{s\in S}$ satisfying $0 \leq c_s \leq w(T)$ such that 
	\[
		\Ex_{\bg\sim N(0,I_n)}\sbra{\abs{f_T(\bg) - \sup_{s\in S} \, \{\bg\cdot s + c_s\}}} \leq \eps.
	\]
\end{theorem}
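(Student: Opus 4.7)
The plan is to follow the technical overview by leveraging Talagrand's majorizing measures theorem (\Cref{thm:talagrand-mm}) to obtain an admissible sequence of partitions of $T$, and then to collapse the hierarchy into a single partition $\calP$ via an adaptive ``chopping'' procedure that descends to different depths on different branches depending on how quickly the local diameters shrink. The representative points of $\calP$ will form the set $S$, and the shifts $c_s$ will be chosen so that each sub-Gaussian process fluctuates around its own mean.

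\textbf{Step 1 (majorizing measures).} By \Cref{thm:talagrand-mm} and \Cref{def:gamma-2-functional}, there exists an admissible sequence $\{\calA_h\}_{h\geq 0}$ of partitions of $T$ with
\[
	\sup_{t\in T}\sum_{h\geq 0}2^{h/2}\cdot\diam(\calA_h(t)) \;\leq\; L\cdot w(T)
\]
for a universal constant $L$, which will be our sole ``geometric'' input.

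\textbf{Step 2 (adaptive chopping).} Set $K = \Theta(w(T)/\eps)$. For each $t\in T$ let $h^*(t)$ be the least $h\leq K$ such that $2^{h/2}\cdot\diam(\calA_h(t)) \leq \eps/K$, or $h^*(t)=K$ if no such $h$ exists; then declare $\calA_{h^*(t)}(t)$ to be the part of $\calP$ containing $t$. Since the partitions are nested, this is well-defined, and since $\calP$ refines $\calA_K$ we have $|\calP|\leq 2^{2^K}=2^{2^{O(w(T)/\eps)}}$; this will be the termination claim \Cref{claim:chop-termination}. For each $P\in\calP$ pick an arbitrary representative $s_P\in P$, let $S=\{s_P:P\in\calP\}\sse T$, and set
\[
	c_P \;:=\; \Ex\sbra{\sup_{t\in P}\bX_t - \bX_{s_P}} \;=\; \Ex\sbra{\sup_{t\in P}\bX_t}.
\]
Then $c_P\geq 0$ (since $\sup_{t\in P}\bX_t\geq \bX_{s_P}$) and $c_P \leq \Ex[\sup_{t\in T}\bX_t]=w(T)$, meeting the required bounds on the shifts.

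\textbf{Step 3 (error analysis).} Pointwise,
\[
	\abs{f_T(\bg)-\sup_{P\in\calP}\cbra{\bg\cdot s_P + c_P}} \;\leq\; \sup_{P\in\calP}\abs{Y_P}, \qquad Y_P := \sup_{t\in P}(\bX_t-\bX_{s_P}) - c_P,
\]
and each $Y_P$ is a mean-zero random variable arising from a centered Gaussian process indexed by $P$ with $\sup_{t\in P}\Var(\bX_t - \bX_{s_P}) \leq \diam(P)^2$; by \Cref{fact:sup-has-subgaussian-tails}, $Y_P$ is subgaussian with parameter $\diam(P)$. Partitioning $\calP$ by the termination level $h^*$, there are at most $2^{2^h}$ parts at level $h$ with scale-weighted diameter at most $\eps/K$ (type~(i)) and at most $2^{2^K}$ parts at the maximal level whose diameter is at most $O(w(T))\cdot 2^{-K/2}$ (type~(ii)). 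Applying standard subgaussian maximum bounds level by level gives, for type (i) parts at level $h$, an $O(\sqrt{2^h})\cdot(\eps/K)\cdot 2^{-h/2}=O(\eps/K)$ contribution summed over $h\leq K$ yielding $O(\eps)$, and for type (ii) parts a contribution of $O(\sqrt{2^K})\cdot O(w(T))\cdot 2^{-K/2} = O(w(T)/\sqrt{2^K}\cdot 2^{K/2}) = O(w(T)/K) = O(\eps)$ for our choice of $K$. Thus $\Ex[\sup_P |Y_P|]\leq \eps$, completing the argument.

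\textbf{Main obstacle.} As stressed in the technical overview, the naive approach of taking one $\delta$-net of $T$ and setting $\delta\sqrt{\log M}\leq\eps$ fails because Sudakov minoration forces $M$ to grow like $2^{\Theta(w(T)^2/\delta^2)}$, making the bound vacuous. The substantive difficulty is that the tail $\sum_{h\geq h_0}2^{h/2}\diam(\calA_h(t))$ need not converge at a uniform rate in $t$, so no single ``horizontal slice'' of the Talagrand tree works. The adaptive chopping in Step~2 sidesteps this by terminating each branch as soon as its scale-weighted diameter drops below $\eps/K$, trading off depth against breadth so that both the partition count and the accumulated approximation error can be simultaneously controlled; verifying that this bookkeeping yields the doubly-exponential bound on $|\calP|$ is the core technical content, formalized in \Cref{claim:chop-termination}.
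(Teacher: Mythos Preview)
Your high-level plan matches the paper's: use the majorizing-measures admissible sequence, ``chop'' it adaptively into a single partition $\calP$, take one representative per part with the natural shift $c_P=\Ex[\sup_{t\in P}\bX_t-\bX_{s_P}]$, and bound the expected error by controlling $\Ex[\sup_{P\in\calP}|Y_P|]$. The well-definedness of $\calP$, the size bound $|\calP|\le 2^{2^K}$, and the bounds $0\le c_P\le w(T)$ are all fine.

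The genuine gap is in Step~3, and it is not merely an arithmetic slip. For the type~(ii) parts you compute
\[
O(\sqrt{2^K})\cdot O(w(T))\cdot 2^{-K/2},
\]
but $\sqrt{2^K}=2^{K/2}$, so this product is $O(w(T))$, not $O(w(T)/K)$; your chain of equalities after that point is simply incorrect. More importantly, no choice of parameters rescues the ``sum over levels'' strategy. If you keep your threshold $\eps/K$, type~(ii) parts can genuinely exist (the majorizing-measures inequality only forces $\sum_h 2^{h/2}\diam(\calA_h(t))\le Lw(T)$, and your hypothesis that every term exceeds $\eps/K$ merely gives a lower bound of $\approx\eps$ on this sum, not a contradiction), and their contribution is $\Theta(w(T))$. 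If instead you raise the threshold to a constant $\theta$ so that type~(ii) becomes vacuous (which requires $K\theta>Lw(T)$), then your type~(i) bound, summed over $K$ levels, gives $K\cdot O(\theta)>Lw(T)$ again. The two constraints $\sum_h\theta_h\le O(\eps)$ and $\sum_h\theta_h>Lw(T)$ are incompatible.

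The paper avoids this by \emph{not} summing the per-level sub-Gaussian maximal bounds. It first proves a lemma (\Cref{lemma:cut-union-bound}) that bounds
\[
\Ex\Big[\sup_{P\in\calP}|Y_P|\Big]\le \delta\Big(1+\sum_{P\in\calP}\exp\big(-2\delta^2/\diam(P)^2\big)\Big)
\]
via the tail-integral identity and a union bound inside the integral $\int_\delta^\infty$, and only then chooses a \emph{constant} threshold $\delta=\eps/2$. With $2\diam(P)\le\delta\cdot 2^{-h/2}$ at level $h$, the sum becomes $\sum_h 2^{2^h}\exp(-2^{h+1})\le 1$, which converges absolutely regardless of how many levels are used; the termination argument (your \Cref{claim:chop-termination}) then ensures every point is captured by level $O(w(T)/\eps)$, giving the size bound. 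So the missing idea is to replace your level-wise maximum estimate with this tail-integral/union-bound computation.
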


Note that \Cref{thm:GP-sparsification} is easily obtained from \Cref{prop:shinboku} by rescaling the Gaussian process $\{\bX_t)\}_{t \in T}$ by a factor of $w(T)$ so that the resulting process has Gaussian width 1. 
Towards~\Cref{prop:shinboku}, we require the following technical lemma:

\begin{lemma} \label{lemma:cut-union-bound}
	Let $\calP$ be a partition of $T$ into finitely many pieces. Define $S\sse\T$ as
	\[
		S := \{s_P : P \in \calP\}
	\]
	where each $s_P$ is an arbitrary (but fixed) representative element of the part $P$. Define 
	\[
		c_P := \Ex_{\bg\sim N(0,I_n)}\sbra{\sup_{t \in P}\, \bg\cdot(t - s_P)}.
	\]
	Then for any $\delta \geq 0.5\cdot\max_{P \in \calP}\diam(P)$ we have 
	\[
		\Ex_{\bg\sim N(0,I_n)}\sbra{\abs{f_T(\bg) - \sup_{P \in \calP}\cbra{\bg\cdot s_P + c_P}}} \leq \delta\pbra{1 + \sum_{P\in\calP} \exp\pbra{\frac{-2\delta^2}{\diam(P)^2}}}. 
	\]
\end{lemma}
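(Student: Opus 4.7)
The first step is to decompose the supremum according to the partition: for each $P\in\calP$, factor out the representative to write $\sup_{t\in P}\bg\cdot t = \bg\cdot s_P + Y_P$, where $Y_P := \sup_{t\in P}\bg\cdot(t-s_P)$ is itself the supremum of a centered Gaussian process on the translated set $P - s_P$ and has $\Ex[Y_P] = c_P$ by definition. Thus $f_T(\bg) = \sup_{P\in\calP}(\bg\cdot s_P + Y_P)$, while the approximator is $\sup_{P\in\calP}(\bg\cdot s_P + c_P)$, so the two expressions differ only in that $Y_P$ has been replaced by its mean within each part. Applying the elementary $1$-Lipschitz estimate $\abs{\sup_P A_P - \sup_P B_P} \leq \sup_P\abs{A_P - B_P}$ with $A_P = \bg\cdot s_P + Y_P$ and $B_P = \bg\cdot s_P + c_P$ yields the pointwise bound
\[
    \abs{f_T(\bg) - \sup_{P\in\calP}\cbra{\bg\cdot s_P + c_P}} \leq \sup_{P\in\calP}\abs{Y_P - c_P},
\]
reducing the lemma to bounding $\Ex\sbra{\sup_{P\in\calP}\abs{Y_P - c_P}}$.

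Next I would apply subgaussian concentration to each $Y_P$ individually. Because $Y_P$ is the supremum of a centered Gaussian process $\{\bg\cdot(t-s_P)\}_{t\in P}$ whose pointwise variances satisfy $\|t-s_P\|^2 \leq \diam(P)^2$, \Cref{fact:sup-has-subgaussian-tails} gives
\[
    \Prx\sbra{\abs{Y_P - c_P} \geq r} \leq 2\exp\pbra{-r^2/(2\diam(P)^2)}
\]
for every $r \geq 0$.

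To combine these per-part tails into a bound on the expected maximum, I would use the truncation identity $\Ex[Z] \leq \delta + \Ex\sbra{(Z-\delta)_+}$ (valid for any $Z\geq 0$) together with the pointwise inequality $(\max_P X_P - \delta)_+ \leq \sum_P (X_P - \delta)_+$, valid for non-negative $X_P$. Taking $Z = \sup_{P\in\calP}\abs{Y_P - c_P}$ and $X_P = \abs{Y_P - c_P}$ reduces the problem to bounding $\sum_P \Ex\sbra{(\abs{Y_P - c_P} - \delta)_+}$, and the layer-cake identity rewrites each summand as $\int_\delta^\infty \Prx\sbra{\abs{Y_P - c_P} > u}\,du$. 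Plugging in the tail bound from the previous step and evaluating the resulting Gaussian integral using the hypothesis $\delta \geq 0.5\cdot\max_P\diam(P)$ (which places the lower limit $\delta/\diam(P)$ in a Mills-ratio-friendly regime bounded below by $1/2$) produces a per-part contribution of the desired form $\delta\exp\pbra{-2\delta^2/\diam(P)^2}$, and summing over $P$ and adding the $\delta$ from the truncation step yields the claimed bound.

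The main obstacle will be tracking tight constants through the Gaussian tail integration in the final step: extracting the precise rate $\exp(-2\delta^2/\diam(P)^2)$ asserted in the lemma, rather than a weaker constant in the exponent, requires careful use of the Mills-ratio estimate in the regime enforced by $\delta \geq 0.5\cdot\max_P\diam(P)$, and possibly a slightly sharper variance proxy for $Y_P - c_P$ than the worst-case $\diam(P)^2$. The earlier reductions (the supremum decomposition and the truncation inequality) are essentially formal; all the real work lies in the final Gaussian-tail computation.
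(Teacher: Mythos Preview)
Your proposal is correct and matches the paper's proof essentially step for step: the same pointwise bound $|\sup_P A_P-\sup_P B_P|\le\sup_P|A_P-B_P|$, the same per-part subgaussian concentration via \Cref{fact:sup-has-subgaussian-tails}, and the same ``truncate at $\delta$, then union bound, then integrate the Gaussian tail with a Mills-ratio estimate'' computation (the paper writes this as $\E[Z]=\int_0^\infty\Pr[Z\ge r]\,dr\le\delta+\int_\delta^\infty\sum_P\Pr[|Y_P-c_P|\ge r]\,dr$, which is identical to your truncation inequality after swapping the sum and integral). Your flagged obstacle is also on target: the paper invokes the tail bound with exponent $-2r^2/\diam(P)^2$ rather than the $-r^2/(2\diam(P)^2)$ that follows directly from $\sup_{t\in P}\|t-s_P\|^2\le\diam(P)^2$, so the sharp constant in the stated bound does hinge on exactly the point you identify.
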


\begin{proof}
	For any $g\in\R^n$, we have
	\begin{align*}
		\abs{f_T(g) - \sup_{P\in \calP}\cbra{g\cdot s_P + c_P}} &=
		\abs{\sup_{t\in T} g\cdot t - \sup_{P \in \calP}\cbra{g\cdot s_P + c_P}}\\
		&= \abs{\sup_{P \in \calP} \sup_{t \in P} g\cdot t - \sup_{P\in\calP}\cbra{g\cdot s_P + c_P}}\\
		&\leq \sup_{P\in\calP}\abs{\sup_{t\in P} g\cdot t - \pbra{g\cdot s_P + c_P}} \\
		&= \sup_{P\in\calP} \Bigg|{\sup_{t \in P} \cbra{g\cdot (t - s_P) - c_P}} \Bigg|,
	\end{align*}
	where the inequality is using convexity of the sup function and Jensen's inequality.
	Motivated by this, for each $P \in \calP$ let $\{\bY^{(P)}_t\}_{t\in P}$ be the following centered Gaussian process:
	\[\bY^{(P)}_t :=  \bg\cdot (t - s_P) \qquad\text{where}~\bg\sim N(0,I_n).\]
	Note that 
	\[
		\Ex\sbra{\sup_{t\in P}\bY_t^{(P)}} = c_P.
	\]
	We thus have
	\begin{equation} \label{eq:berkeley}
		\Ex_{\bg\sim N(0,I_n)}\sbra{\abs{f_T(\bg) - \sup_{P\in \calP}\cbra{\bg\cdot s_P + c_P}}} 
		\leq 
		\Ex_{}\sbra{\sup_{P\in\calP}\abs{\sup_{t \in P} \bY_t^{(P)} - \E\sbra{\sup_{t \in P} \bY_t^{(P)}}}}.
	\end{equation}
	In particular, we can write 
	\begin{align}
		\text{R.H.S. of~\Cref{eq:berkeley}} &= \int_{r=0}^\infty \Prx\sbra{\sup_{P\in\calP}\abs{\sup_{t \in P} \bY_t^{(P)} - \E\sbra{\sup_{t \in P} \bY_t^{(P)}}} \geq r} \, dr \nonumber \\[0.5em] 
		&\leq \delta + \int_{r=\delta}^\infty \Prx\sbra{\sup_{P\in\calP}\abs{\sup_{t \in P} \bY_t^{(P)} - \E\sbra{\sup_{t \in P} \bY_t^{(P)}}} \geq r} \, dr \nonumber \\[0.5em] 
		&\leq \delta + \int_{r=\delta}^\infty \sum_{P\in\calP} \Prx\sbra{\abs{\sup_{t \in P} \bY_t^{(P)} - \E\sbra{\sup_{t \in P} \bY_t^{(P)}}} \geq r} \, dr \label{eq:union-bound} \\[0.5em] 
		&= \delta + \sum_{P\in\calP} \int_{r=\delta}^\infty \Prx\sbra{\abs{\sup_{t \in P} \bY_t^{(P)} - \E\sbra{\sup_{t \in P} \bY_t^{(P)}}} \geq r} \, dr \label{eq:fubb} \\[0.5em]
		&\leq \delta + \sum_{P\in\calP} \int_{r=\delta}^\infty 2\exp\pbra{\frac{-2r^2}{ \diam(P)^2}} \, dr \label{eq:rvh-subgaussian-soup}
	\end{align}
	where \Cref{eq:union-bound} follows via a union bound, \Cref{eq:fubb} uses Fubini's theorem, and \Cref{eq:rvh-subgaussian-soup} follows from~\Cref{fact:sup-has-subgaussian-tails}.
	
	A change of variables together with the Gaussian tail bound from \Cref{prop:gaussian-tails} gives 
	\begin{align*}
		\int_{r=\delta}^\infty \exp\pbra{\frac{-2r^2}{\diam(P)^2}} \,dr
		&= \sqrt{\frac{\pi}{2}}\cdot \diam(P)\cdot \Prx\sbra{N(0,1) \geq \frac{2\delta}{\diam(P)}}\\
		&\leq \frac{\diam(P)^2}{4\delta}\cdot\exp\pbra{\frac{-2\delta^2}{\diam(P)^2}}.
	\end{align*}
	By the assumption that $2 \delta \geq \diam(P)$ for all $P\in\calP$, combining this with~\Cref{eq:rvh-subgaussian-soup} gives
	\[\text{R.H.S. of~\Cref{eq:berkeley}} \leq \delta\pbra{1 + \sum_{P\in\calP} \exp\pbra{\frac{-2\delta^2}{\diam(P)^2}}}\]
	which completes the proof.
\end{proof}

With~\Cref{lemma:cut-union-bound} in hand, \Cref{prop:shinboku} follows via the lower bound from Talagrand's majorizing measures theorem (\Cref{thm:talagrand-mm}):

\begin{proof}[Proof of~\Cref{prop:shinboku}]
	Let $\{\bX_t\}_{t\in T}$ be the canonical Gaussian process on $T$. By~\Cref{thm:talagrand-mm}, we have that 
	\[
		\frac{1}{L}\cdot\gamma_2(T) \leq w(T)
	\]
	for some universal constant $L$. In particular, there exists an admissible sequence (cf. \Cref{def:admissible-sequence}) $\calA$ such that 
	\begin{equation} \label{ineq:lhsmm}
		\sum_{h \geq 0} 2^{h/2}\cdot\diam(\calA_h(t)) \leq L\cdot w(T) \quad \text{for all}~t \in T.
	\end{equation}
	We will construct an appropriate partition of~$T$ from the admissible sequence $\calA$ and then appeal to~\Cref{lemma:cut-union-bound} to complete the proof. 
	Let $\delta$ be a parameter we will set shortly. 
	Construct $\calP$  as follows: In stage $h = 1, 2, 3, \dots$, we place into~$\calP$ all parts~$P$ from $\calA_h$ satisfying \begin{equation} \label{eq:part-condition}
	2\cdot\diam(P) \leq \delta\cdot 2^{-h/2},
	\end{equation}
	provided $P \not \subseteq \bigcup \calP$ already.
	
	\begin{claim} \label{claim:chop-termination}
	    This process terminates with a partition~$\calP$ of~$T$ by stage $h = 1 + \floor{2L\cdot w(T) / \delta}$.
	\end{claim}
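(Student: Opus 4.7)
The plan is to use the majorizing measures bound \Cref{ineq:lhsmm} to show that for every point $t \in T$, some ancestor $\calA_h(t)$ with $h \leq h^* := 1 + \lfloor 2L\cdot w(T)/\delta\rfloor$ already satisfies the smallness condition \Cref{eq:part-condition}. Once this is established, $t$ must be covered by some part added to $\calP$ at a stage no later than $h^*$ (either $\calA_{h_t}(t)$ itself, or a part added earlier that already contains it), which shows both that the construction terminates and that every element of $T$ is covered.

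First I would rewrite the condition $2\cdot\diam(P) \leq \delta\cdot 2^{-h/2}$ in the equivalent form $2^{h/2}\cdot\diam(P) \leq \delta/2$, which lines up with the summands in Talagrand's $\gamma_2$ bound. Fix an arbitrary $t \in T$, and suppose toward a contradiction that $2^{h/2}\cdot\diam(\calA_h(t)) > \delta/2$ for every $h \in \{0, 1, \dots, h^*\}$. Summing these $h^* + 1$ strict inequalities yields
\[
    \sum_{h=0}^{h^*} 2^{h/2}\cdot \diam(\calA_h(t)) \;>\; (h^* + 1)\cdot \frac{\delta}{2}.
\]
Since $\lfloor x \rfloor > x - 1$, we have $h^* + 1 > 2L\cdot w(T)/\delta$, so the right-hand side is strictly greater than $L\cdot w(T)$, directly contradicting \Cref{ineq:lhsmm}. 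Hence there is a smallest $h_t \leq h^*$ with $2\cdot \diam(\calA_{h_t}(t)) \leq \delta\cdot 2^{-h_t/2}$.

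Next I would verify that this forces $t$ to lie in $\bigcup \calP$ by stage $h_t$. At stage $h_t$ the construction examines $\calA_{h_t}(t)$, which satisfies \Cref{eq:part-condition}; either it was already contained in $\bigcup\calP$ (so $t$ is covered), or it is newly added (so $t$ is covered). Since this holds for every $t \in T$, the process covers all of $T$ by stage $h^*$, and in particular terminates by that stage.

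Finally, I would briefly note why $\calP$ really is a partition, not just a cover: the parts added at a single level are pairwise disjoint because they come from a single $\calA_h$, and across levels the refinement property together with the ``provided $P \not\subseteq \bigcup\calP$'' rule prevents any $P \in \calP$ from being a proper subset of another part in $\calP$. I do not expect any real obstacle here; the only slight care is checking that the off-by-one coming from $\lfloor \cdot \rfloor + 1$ versus $\lceil \cdot \rceil$ is tight enough to deliver the claimed bound $h^* = 1 + \lfloor 2L\cdot w(T)/\delta\rfloor$, which is exactly what the inequality $\lfloor x\rfloor + 1 > x$ buys us.
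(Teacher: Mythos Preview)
Your approach is essentially identical to the paper's: assume some $t$ is uncovered, deduce $2^{h/2}\diam(\calA_h(t)) > \delta/2$ for all relevant $h$, sum, and contradict \Cref{ineq:lhsmm}. One small technicality: since the construction begins at stage $h=1$, your step ``at stage $h_t$ the construction examines $\calA_{h_t}(t)$'' breaks if $h_t=0$; the fix (which is exactly what the paper does) is to sum only over $h\in\{1,\dots,h^*\}$, which still yields $h^*\cdot\delta/2 > L\cdot w(T)$ because $h^* = 1+\lfloor 2L\cdot w(T)/\delta\rfloor > 2L\cdot w(T)/\delta$.
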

	\begin{proof}
	    Suppose otherwise. Then there exists some $t \in T$ with 
	    \[
	    	2^{h/2}\cdot\diam(\calA_h(t)) > \frac{\delta}{2} \quad\text{for all}~1 \leq h \leq 1 + \floor{2L\cdot w(T) / \delta},
	    \]
	    which contradicts \Cref{ineq:lhsmm}.
	\end{proof}	
	
	As an immediate consequence of~\Cref{claim:chop-termination} and Item~(iii) of \Cref{def:admissible-sequence} it follows that 
\begin{equation} \label{eq:partition-size}
		|\calP| \leq 2^{2^{1 + \floor{2L\cdot w(T)/\delta}}}.
	\end{equation}
	Next, note that at stage $h$ of the above process, we add at most $2^{2^h}$ parts to $\calP$ and each part $P$ satisfies \Cref{eq:part-condition}. Consequently, 
	\begin{align}
\sum_{P\in\calP} \exp\pbra{\frac{-2\delta^2}{\diam(P)^2}}
		\leq \sum_{P\in\calP} \exp\pbra{\frac{-\delta^2}{2\cdot\diam(P)^2}} &\leq \sum_{h=1}^\infty 2^{2^{h}}\cdot \exp\pbra{-2^{h+1}} \nonumber \\
		&= \sum_{h=1}^\infty \exp\pbra{\ln 2\cdot 2^h - 2\cdot 2^h} \nonumber \\ 
		&\leq \sum_{h=1}^\infty \exp\pbra{-2^{h}} 
		\leq 1. \label{eq:main-lemma-geometric-series}
	\end{align}
Set $\delta := \eps/2$; it thus follows from~\Cref{lemma:cut-union-bound} that for appropriate sequences of vectors $\{s_P\}_{P\in\calP}$ and constants $\{c_P\}_{P\in\calP}$,
	\begin{align*}
		\Ex_{\bg\sim N(0,I_n)}\sbra{\abs{f_T(\bg) - \sup_{P \in \calP}\cbra{\bg\cdot s_P + c_P}}} 
		&\leq \delta\pbra{1 + \sum_{P\in\calP} \exp\pbra{\frac{-2\delta^2}{\diam(P)^2}}} \\
		&\leq \epsilon
	\end{align*}
	thanks to \Cref{eq:main-lemma-geometric-series} and our choice of $\delta$. (Note that \Cref{eq:part-condition} immediately implies that $\delta \geq \sqrt{2}\cdot\diam(P)$ for all $P\in\calP$, as required by~\Cref{lemma:cut-union-bound}.) Furthermore, taking $S = \{s_P\}_{P\in\calP}$, we have by~\Cref{eq:partition-size} that 
	\[
		|S| = |\calP| 
		\leq 2^{2^{O\pbra{\frac{w(T)}{\epsilon}}}}
	\]
	as desired, concluding the proof of \Cref{prop:shinboku}.
\end{proof}

\subsection{Sparsifying Using a Centered Gaussian Process}
\label{subsec:discussion-shinboku}

Observe that the sparsified process $\{\bX_s + c_s\}_{s \in S}$ guaranteed by~\Cref{prop:shinboku} is a non-centered Gaussian process. 
It is not too difficult to obtain a sparsifier of $\{\bX_t\}_{t \in T}$ correponding to a \emph{centered} Gaussian process, although the vectors used by this sparsifier no longer lie in the original set $T$:

\begin{corollary} [Sparsifying using a centered Gaussian process]
\label{cor:shifted-shinboku}
	Let $\eps \in (0, 0.5)$. 
	For any $T\sse\R^n$ with $w(T) < \infty$, there exists a set $S  \subset \R^{n+A}$ with 
		\[
			|S| = 2^{2^{O\pbra{\frac{w(T)}{\epsilon}}}} 
			\qquad\text{and}\qquad 
			A = \exp\pbra{\Theta\pbra{\frac{w(T)}{\epsilon}}}
		\]
	such that (concatenating each vector $t \in T$ with the $A$-dimensional vector $0^A$, which lets us view $f_T$ as  $f_T:\R^{n+A} \to \R$) 	
		\begin{equation} \label{eq:shifted-shinboku-goal}
			\Ex_{\bg\sim N(0,I_{n+A})}\sbra{\abs{f_T(\bg) - f_S(\bg)}} \leq \eps.
		\end{equation}
	Moreover, each $s \in S$ is of the form $s = (s', \alpha e_i)$ for some $i \in [A]$ and $s' \in T$. Furthermore $(s', -\alpha e_i)$ belongs to $S$ whenever $(s', \alpha e_i)$ belongs to $S$.
\end{corollary}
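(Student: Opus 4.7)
The plan is to simulate each deterministic shift $c_s$ from \Cref{prop:shinboku} using the maximum of $A$ i.i.d.\ absolute Gaussians in an auxiliary coordinate block shared across \emph{all} $s$, exploiting its sharp concentration around the mean. First I would apply \Cref{prop:shinboku} with error parameter $\eps/2$ to obtain a set $S_0 \sse T$ of size $2^{2^{O(w(T)/\eps)}}$ and shifts $\{c_s\}_{s \in S_0}$ with $c_s \in [0, w(T)]$ satisfying
\[
\Ex_{\bg \sim N(0,I_n)}\sbra{\abs{f_T(\bg) - \sup_{s \in S_0}\cbra{\bg \cdot s + c_s}}} \leq \eps/2.
\]
Fix $A = \exp(\Theta(w(T)/\eps))$ and let $\bg' \sim N(0, I_{n+A})$. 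Define $\boldsymbol{Z} := \max_{i \in [A]} \abs{\bg'_{n+i}}$ and $\mu := \Ex\sbra{\boldsymbol{Z}}$; by \Cref{prop:gaussian-sup-bounds}, $\mu = \Theta(\sqrt{\ln A})$ and $\Var\sbra{\boldsymbol{Z}} = O(1/\ln A)$.

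For each $s \in S_0$, set $\alpha_s := c_s/\mu \in [0, w(T)/\mu]$ and define
\[
S := \bigcup_{s \in S_0} \bigcup_{i \in [A]} \cbra{(s, \alpha_s e_i),\, (s, -\alpha_s e_i)} \sse \R^{n+A}.
\]
The structural claims---each element of the stated form and sign-symmetry---are immediate by construction, and $|S| \leq 2A \cdot |S_0| = 2^{2^{O(w(T)/\eps)}}$. A direct computation yields
\[
f_S(\bg') = \sup_{s \in S_0}\cbra{\bg \cdot s + \alpha_s \boldsymbol{Z}}
\]
(identifying $\bg$ with the first $n$ coordinates of $\bg'$), since maximizing over the sign $\pm$ and over $i \in [A]$ converts $\pm \alpha_s \bg'_{n+i}$ into $\alpha_s \boldsymbol{Z}$. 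Combining this with Step~1 via the triangle inequality and the bound $\abs{\sup_s a_s - \sup_s b_s} \leq \sup_s \abs{a_s - b_s}$, the total $L^1$ error is at most
\[
\frac{\eps}{2} + \Ex\sbra{\sup_{s \in S_0} \alpha_s \abs{\boldsymbol{Z} - \mu}} \leq \frac{\eps}{2} + \frac{w(T)}{\mu} \cdot \Ex\sbra{\abs{\boldsymbol{Z} - \mu}}.
\]
Cauchy--Schwarz then gives $\Ex\sbra{\abs{\boldsymbol{Z} - \mu}} \leq \sqrt{\Var\sbra{\boldsymbol{Z}}} = O(1/\sqrt{\ln A})$, so the second term is $O(w(T)/\ln A)$; for $\ln A \geq C \cdot w(T)/\eps$ with a sufficiently large absolute constant $C$, this is at most $\eps/2$.

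The main obstacle is calibrating $A$ to balance two competing demands: $A$ must be large enough that the relative fluctuation of $\boldsymbol{Z}$ around $\mu$ is tiny enough that scaling by $\alpha_s \leq w(T)/\mu$ keeps the error below $\eps/2$, yet small enough that $|S| = 2A|S_0|$ stays within the claimed $2^{2^{O(w(T)/\eps)}}$ budget. The crucial design choice is to share the \emph{same} $A$ auxiliary coordinates across all $s \in S_0$ (with a per-$s$ scale $\alpha_s$), so only a single random variable $\boldsymbol{Z}$ enters the error term---no union bound over $|S_0|$ is needed, which would otherwise cost an additional $\sqrt{\log|S_0|} = 2^{\Omega(w(T)/\eps)}$ factor and blow up $A$ severely. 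The second key ingredient is the \emph{superconcentration} bound $\Var\sbra{\boldsymbol{Z}} = O(1/\ln A)$ from \Cref{prop:gaussian-sup-bounds}: this yields relative deviations of order $1/\ln A$, whereas a naive $O(1)$ variance bound would force $A = \exp(\Omega(w(T)^2/\eps^2))$ and degrade the claimed parameters. The degenerate case $w(T) = 0$ (i.e.\ $T = \{0\}$) is handled trivially.
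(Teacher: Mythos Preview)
Your proposal is correct and follows essentially the same approach as the paper: apply \Cref{prop:shinboku} with error $\eps/2$, replace each shift $c_s$ by $(c_s/\mu)\max_{i\in[A]}|\bg_{n+i}|$ via the shared auxiliary block, and bound the residual using $|\sup a_s-\sup b_s|\le\sup|a_s-b_s|$ together with the superconcentration estimate $\E|\boldsymbol{Z}-\mu|\le\sqrt{\Var[\boldsymbol{Z}]}=O(1/\sqrt{\ln A})$. The paper's argument and yours differ only cosmetically in notation and presentation.
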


Looking ahead, we will rely on the symmetry of coordinates in $A$ in the following subsection (\Cref{subsec:multiplicative-approximation}).

\begin{proof}
From \Cref{prop:shinboku}, for the function $f_T(\cdot)$, there is a set $S'$ and a suitable set of constants $\{c_{s'}\}_{s' \in S'}$ with $c_{s'} \geq 0$ 
such that 
\begin{equation}~\label{eq:guarantee-from-main-theorem}
\Ex_{\bg\sim N(0,I_{n})}\sbra{\abs{f_T(\bg) - \sup_{s'\in S'} \, \{\bg\cdot s'+c_{s'}\}}} \leq \frac{\eps}{2},
\end{equation} 
where $|S'| = 2^{2^{O\pbra{\frac{w(T)}{\epsilon}}}}$. 
The idea is to now replace each constant $c_{s'}$ by a suitable maximum of independent Gaussian random variables. In particular, let $A$ be a sufficiently large number that we will set shortly and let 
\[
	\mu_A := \Ex_{\bg\sim N(0,I_A)}\sbra{\sup_{i\in[A]}|\bg_i|},
	\qquad\qquad
	\sigma^2_A := \Varx_{\bg\sim N(0,I_A)}\sbra{\sup_{i\in[A]} |\bg_i|}.
\]
For every $s' \in S'$, define the set $\Aux(s') \sse \R^{n+A}$ as follows: 
\[
	\Aux(s') := \bigcup_{j\in[A]} \cbra{\bigg( s',  \frac{c_{s'} e_j}{\mu_A} \bigg), \bigg( s',  \frac{-c_{s'} e_j}{\mu_A} \bigg)}.
\]
Finally, we define the set $S \sse \R^{n+A}$ as 
\[
	S := \bigcup_{s'\in S'} \Aux(s'). 
\]
Note that it suffices to show that 
\begin{equation} \label{eq:meta-complexity-reunion}
	\Ex_{\bg\sim N(0,I_{n+A})}\sbra{\abs{\sup_{s'\in S'}\cbra{\bg\cdot s' + c_{s'}} - \sup_{s\in S} \bg\cdot s }} \leq \frac{\eps}{2}.
\end{equation} 
Indeed, \Cref{eq:guarantee-from-main-theorem,eq:meta-complexity-reunion} immediately imply~\Cref{eq:shifted-shinboku-goal} from the statement of the theorem. 

We establish~\Cref{eq:meta-complexity-reunion} in the remainder of the proof. As in the proof of~\Cref{lemma:cut-union-bound}, note that for any $g\in\R^{n+A}$ we have
\begin{align*}
		\abs{\sup_{s' \in S'} \cbra{g \cdot s' + c_{s'}} - \sup_{s\in S} g\cdot s} 
		&= \abs{\sup_{s' \in S'} \cbra{g \cdot s' + c_{s'}} - \sup_{s'\in S'}\sup_{s \in \Aux(s')} g\cdot s} \\
		&\leq \sup_{s' \in S'} \abs{\cbra{g \cdot s' + c_{s'}} - \sup_{s \in \Aux(s')} g\cdot s} \\
		&= \sup_{s'\in S'}\abs{c_{s'} - \sup_{j\in [A]}\frac{c_{s'}|g_j|}{\mu_A}} \\
		&= \sup_{s'\in S'}\frac{c_{s'}}{\mu_A}\abs{\mu_A - \sup_{j\in[A]} |g_j|} \\
		& \leq \frac{w(T)}{\mu_A} \cdot \abs{\mu_A - \sup_{j\in[A]} |g_j|},
\end{align*}
where we relied on the definition of $\Aux(s')$ in the second equality and on the fact that $c_{s'} \leq w(T)$ in the final inequality. (The latter is readily verified from the proof of~\Cref{lemma:cut-union-bound}.)
Taking expectations with respect to $\bg\sim N(0, I_{n+A})$ then gives 
\[
	\text{L.H.S. of~\eqref{eq:meta-complexity-reunion}} 
	\leq 
	\frac{w(T)}{\mu_A}\Ex_{\bg\sim N(0,I_A)}\sbra{\abs{\mu_A - \sup_{j\in[A]} |g_j|}}
	\leq \frac{w(T)\sigma_A}{\mu_A} = \Theta\pbra{\frac{w(T)}{\log A}}
\]
where penultimate inequality relies on Jensen's inequality (more specifically, on the fact that $\E[\bX] \leq \Ex[\bX^2]^{1/2}$ for a random variable $\bX$) and the final equality follows from~\Cref{prop:gaussian-sup-bounds}. In particular, taking 
\[
	A = \exp\pbra{\Theta\pbra{\frac{w(T)}{\epsilon}}}
\]
immediately implies~\Cref{eq:meta-complexity-reunion}, completing the proof. 
\end{proof}

\subsection{Multiplicative Approximation for Symmetric Processes}
\label{subsec:multiplicative-approximation}

We will establish the following consequence of~\Cref{prop:shinboku}:

\begin{corollary}
\label{cor:multiplicative-approximation}
	Suppose $T\sse\R^n$ is symmetric (i.e. $t \in T$ whenever $-t \in T$) with $w(T) < \infty$. For $\eps \in (0, 0.5)$, there exists another symmetric set $S \sse \R^n$ with $\dim\pbra{\mathrm{span}(S)} \leq c_\eps$, where
	\[
		c_\eps := 2^{\exp\pbra{{O}\pbra{\frac{1}{\eps^3}}}},
	\]	
	such that 
	\[
		\Prx_{\bg\sim N(0, I_n)}\sbra{f_T(\bg) \in \sbra{(1-\eps)f_S(\bg), (1+\eps)f_S(\bg)}} \geq 1-\eps.
	\]
\end{corollary}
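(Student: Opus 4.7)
The strategy is to upgrade the additive $L^1$ sparsifier of \Cref{cor:shifted-shinboku} into a multiplicative one via anti-concentration of $f_T(\bg)$ supplied by the Lata\l{}a--Oleszkiewicz $S$-inequality, with the symmetry of $T$ exploited throughout to preserve symmetry of the sparsifier and to embed everything into $\R^n$ (rather than the augmented space $\R^{n+A}$ produced by \Cref{cor:shifted-shinboku}).

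If $n \leq c_\eps$, take $S := T$ and there is nothing to prove, so assume $n > c_\eps$ and rescale so that $w(T) = 1$. Apply \Cref{cor:shifted-shinboku} with $\eps' := \Theta(\eps^3)$ to obtain a set $\wt S \subset \R^{n+A}$ with $|\wt S| \leq 2^{2^{O(1/\eps^3)}}$ and $A = \exp(\Theta(1/\eps^3))$ satisfying $\Ex\sbra{\abs{f_T(\bh) - f_{\wt S}(\bh)}} \leq \eps'$. To make $\wt S$ symmetric I would revisit the construction in \Cref{prop:shinboku}: pick an admissible sequence closed under $t \mapsto -t$ (attainable by taking the common refinement of an arbitrary admissible sequence with its negation, which shifts the level index by one and inflates $\gamma_2(T)$ by only a constant factor), choose representatives so that $s_{-P} = -s_P$, and observe that then $c_{-P} = c_P$ by rotational symmetry of $\bg$; combined with the built-in $\pm\alpha e_j$ pairing of \Cref{cor:shifted-shinboku}, this makes $\wt S$ fully symmetric. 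To land in $\R^n$, pick $A$ orthonormal vectors $u_1, \ldots, u_A \in \R^n$ lying in the orthogonal complement of $\mathrm{span}\cbra{s' : (s', \alpha e_j) \in \wt S}$---possible because $n > c_\eps \geq |\wt S| + A$---and replace each $(s', \alpha e_j) \in \wt S$ with $s' + \alpha u_j \in \R^n$, calling the resulting (still symmetric) set $S$. By orthogonality, the joint law of $\cbra{\bg \cdot (s' + \alpha u_j)}$ under $\bg \sim N(0, I_n)$ matches the joint law of $\cbra{\bh \cdot (s', \alpha e_j)}$ under $\bh \sim N(0, I_{n+A})$, so the additive $L^1$ bound transfers to $f_S$ verbatim, and $\dim(\mathrm{span}(S)) \leq |\wt S| + A \leq c_\eps$.

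The analytic crux is an anti-concentration estimate for $f_T(\bg)$. Let $M$ be the median of $f_T(\bg)$ and let $K_M := \cbra{x \in \R^n : f_T(x) \leq M}$, a symmetric convex set with $\vol(K_M) = 1/2$. The $S$-inequality, applied to $K_M$ and compared to a symmetric slab $P = \cbra{\abs{x_1} \leq r_0}$ of equal Gaussian measure (so $r_0 = \Phi^{-1}(3/4)$), yields for $t = \eps$
\[
\Prx\sbra{f_T(\bg) \leq \eps M} = \vol(\eps K_M) \leq \vol(\eps P) = 2\Phi(\eps r_0) - 1 = O(\eps).
\]
Standard facts about norms of Gaussian vectors (log-concavity of the distribution of $f_T(\bg)$, or Kahane--Khintchine-type comparisons of $L^p$ norms) give $M = \Theta(w(T)) = \Theta(1)$. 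Combining, by Markov's inequality $\Prx\sbra{\abs{f_T(\bg) - f_S(\bg)} \geq \eps^2} \leq \eps'/\eps^2 = O(\eps)$, and a union bound with the anti-concentration estimate implies that with probability at least $1 - O(\eps)$ we have simultaneously $\abs{f_T - f_S} \leq \eps^2$ and $f_T \geq \Omega(\eps)$. These together force $\abs{f_T - f_S}/f_T = O(\eps)$, i.e., $f_T \in \sbra{(1-O(\eps))f_S, (1+O(\eps))f_S}$. Absorbing constants into $\eps$ yields the claimed $(1 \pm \eps)$-multiplicative approximation at confidence $1 - \eps$.

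\textbf{Main obstacle.} The most delicate bookkeeping will be propagating the symmetry of $T$ cleanly through the admissible sequence, the choice of representatives, the constants $c_P$, the auxiliary $\pm \alpha e_j$ coordinates, and the orthogonal embedding back into $\R^n$. The $S$-inequality argument itself is short once invoked, but pinning down the correct scale of anti-concentration---that the error scale $\eps^2$ is indeed $o(\eps M)$ with $M = \Theta(w(T))$ for symmetric $T$---requires a brief but non-obvious invocation of concentration/log-concavity facts for Gaussian norms.
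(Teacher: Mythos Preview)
Your proposal is correct and follows essentially the same route as the paper: obtain an additive $L^1$ sparsifier, symmetrize it, embed back into $\R^n$ using spare orthogonal directions (the paper phrases this as ``a suitable rotation''), and upgrade to multiplicative via anti-concentration from the $S$-inequality (the paper's \Cref{lemma:prof-de}). Two implementation details differ: for symmetrization the paper does not touch the admissible sequence but instead applies \Cref{prop:shinboku} once, observes that running it on $-T=T$ negates the representatives while keeping the constants $c_v$, and combines via $|x-\max\{a,b\}|\leq |x-a|+|x-b|$ on $S_0=T'\cup(-T')$ (this sidesteps the edge case $P=-P$ in your pairing $s_{-P}=-s_P$); and for the median--mean comparison the paper rescales so $\sup_t\|t\|=1$ and uses Lipschitz concentration to get $w(T)\leq\mathrm{Median}(f_T)+4$ together with $\mathrm{Median}(f_T)\geq 0.67$, rather than invoking log-concavity or Kahane--Khintchine.
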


In other words, we show that a centered Gaussian process on a symmetric index set can be multiplicatively approximated by a ``low-dimensional'' symmetric process. 
Furthermore, for $n > c_\eps$, we will in fact have 
\[
	|S| \leq 2^{\exp\pbra{{O}\pbra{\frac{1}{\eps^3}}}}.
\]
This will be immediate from the proof of~\Cref{cor:multiplicative-approximation}. (Note that when $n \leq c_\eps$, \Cref{cor:multiplicative-approximation} holds trivially with $T = S$.)

\begin{remark} 
\label{remark:norm-junta}
	Note that~\Cref{thm:norm-junta-theorem} follows immediately from~\Cref{cor:multiplicative-approximation}. 
	To see this, note that any norm $\nu:\R^n\to\R$ can be written as $\nu(x) = f_T(x)$
where $T\sse\R^n$ is the symmetric convex set corresponding to the unit ball of the dual norm $\nu^\circ$ (cf.~\cite{TkoczNotes}). 
	The function $f_S: \R^n\to\R$ is then the approximating norm $\psi: \R^n\to\R$ from the statement of~\Cref{thm:norm-junta-theorem}. 
\end{remark}

\subsubsection{Anti-Concentration of the Supremum for Symmetric Processes}
\label{subsubsec:prof-de}

Towards~\Cref{cor:multiplicative-approximation}, we will require the following anti-concentration lemma: 

\begin{lemma}
\label{lemma:prof-de}	
	Suppose $T\sse\R^n$ is symmetric (i.e. $t \in T$ whenever $-t\in T$), and let $\eps \in (0, 0.5)$. Then 
	\[
		\Prx_{\bg\sim N(0,I_n)}\sbra{\abs{f_T(\bg)} \leq \eps\cdot w(T)} \leq 10\eps.
	\] 
\end{lemma}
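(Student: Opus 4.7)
The plan is to combine the $S$-inequality of Lata\l{}a and Oleszkiewicz with a second-moment (Paley-Zygmund) bound. Symmetry of $T$ plays two essential roles: it makes $f_T$ a seminorm on $\R^n$ (so $f_T(\bg) \geq 0$ almost surely, and every sublevel set $K_\tau := \{x \in \R^n : f_T(x) \leq \tau\}$ is a symmetric convex set, as required by the $S$-inequality), and it forces the Gaussian width $\mu := w(T)$ to dominate $\sigma := \sup_{t \in T} \|t\|$.

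First I bound the second moment of $X := f_T(\bg)$ in terms of $\mu^2$. Since $f_T$ is $\sigma$-Lipschitz, Gaussian concentration (\Cref{fact:sup-has-subgaussian-tails}) yields $\Ex[(X - \mu)^2] = O(\sigma^2)$. Symmetry of $T$ further implies $f_T(\bg) \geq |\bg\cdot t_0|$ for every $t_0 \in T$, whence $\mu \geq \|t_0\|\sqrt{2/\pi}$; maximizing over $t_0$ gives $\sigma \leq \mu\sqrt{\pi/2}$ and therefore $\Ex[X^2] = O(\mu^2)$. By Paley-Zygmund,
\[
    \Prx[X > \mu/2] \geq \frac{(1/2)^2 \mu^2}{\Ex[X^2]} \geq c_0
\]
for an absolute constant $c_0 > 0$, so $\vol(K_{\mu/2}) \leq 1 - c_0$.

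Next I apply the $S$-inequality at scale $\mu/2$. The $S$-inequality asserts that for any symmetric convex $K \sse \R^n$ with $\vol(K) = p$ and any $\lambda \in (0,1]$, $\vol(\lambda K) \leq 2\Phi(\lambda a) - 1$, where $a := \Phi^{-1}((1+p)/2)$ is the half-width of the symmetric strip with the same Gaussian measure as $K$. The preceding step yields a universal upper bound $a \leq C$ for $K = K_{\mu/2}$. Taking $\lambda = 2\eps \in (0,1]$ (permitted since $\eps \leq 1/2$) and observing that $2\eps \cdot K_{\mu/2} = K_{\eps\mu}$ by homogeneity of $f_T$,
\[
    \Prx[f_T(\bg) \leq \eps\mu] = \vol(K_{\eps\mu}) \leq 2\Phi(2C\eps) - 1 \leq 2C\eps\sqrt{2/\pi},
\]
using the elementary inequality $2\Phi(x) - 1 \leq x\sqrt{2/\pi}$; tracking the numerical constants shows $2C\sqrt{2/\pi} < 10$.

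The main subtlety---and what I expect to be the delicate point---is the choice of scale at which to invoke the $S$-inequality. A naive application directly at scale $\mu$ would fail, because $\Prx[f_T(\bg) \leq \mu]$ can be arbitrarily close to $1$ (take $T = \{\pm e_i\}_{i=1}^n$ as $n \to \infty$), which blows up the associated strip half-width $a$. The Paley-Zygmund step bypasses this obstacle by supplying anti-concentration at the intermediate scale $\mu/2$, at which $\vol(K_{\mu/2})$ is bounded below $1$ by a universal constant---precisely the quantitative input required by the $S$-inequality to transfer the bound down to scale $\eps\mu$ and yield the desired $O(\eps)$ bound.
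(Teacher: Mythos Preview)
Your proof is correct and follows the same overall strategy as the paper's: exploit the symmetry of $T$ to see that $f_T$ is a seminorm with $\sigma := \sup_{t\in T}\|t\| \le \mu\sqrt{\pi/2}$, use this to show that some sublevel set $K_\tau$ with $\tau = \Theta(\mu)$ has Gaussian volume bounded away from $1$ by an absolute constant, and then invoke the $S$-inequality to push down to scale $\eps\mu$. The only real difference is in how the intermediate anti-concentration is obtained. The paper normalizes $\sigma=1$, lower-bounds $\Median(f_T) \ge \Median(|N(0,1)|)\approx 0.67$ from a single direction $t_0$, and uses the mean--median comparison for $1$-Lipschitz functions under Gaussian measure to conclude $w(T)\le 10\cdot\Median(f_T)$; it then applies the $S$-inequality directly to $K_{\Median(f_T)}$, which has volume exactly $1/2$ (so the matching strip half-width is $\Phi^{-1}(3/4)\approx 0.674$). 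You instead feed $\E[X^2]=O(\mu^2)$ into Paley--Zygmund to get $\vol(K_{\mu/2})\le 1-c_0$ and apply the $S$-inequality there. Both devices convert ``$\sigma=O(\mu)$'' into ``mass below scale $\Theta(\mu)$ is bounded away from $1$,'' and both deliver the constant $10$ with room to spare; the paper's median route gives somewhat cleaner numerics because the reference set has volume exactly $1/2$.
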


Note that unlike the anti-concentration inequality due to Chernozhukov et al.~\cite{CCK-2} (see~\Cref{thm:CCK}), the above lemma does {not} require the index set $T$ to be a subset of $\S^{n-1}$. 
On the other hand, \Cref{thm:CCK} guarantees anti-concentration of $f_T(\cdot)$ everywhere and not just around $0$. 
The proof of~\Cref{lemma:prof-de} will rely on the well-known ``$S$-inequality'' due to Lata\l{}a and Oleszkiewicz~\cite{Latala1999}:

\begin{theorem}[$S$-inequality~\cite{Latala1999}]
\label{thm:S-inequality}
	Let $K \sse \R^n$ be a symmetric convex set, and suppose $S \sse \R^n$ is a symmetric slab with $\Vol(S) = \Vol(K)$, i.e.~ 
	\[
		S = \cbra{x\in\R^n : |x_1| \leq \theta},
	\]
	where $\theta$ is chosen so as to ensure $\Vol(S) = \Vol(K)$. Then for any $t \in [0,1]$, we have 
	\[
		\Vol(tK) \leq \Vol(tS).
	\]
\end{theorem}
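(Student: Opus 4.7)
The $S$-inequality is the deep theorem of Latała and Oleszkiewicz, and my plan is to sketch their original strategy. Set $\Phi_K(t) := \Vol(tK)$ and $\Phi_S(t) := \Vol(tS)$, so by hypothesis $\Phi_K(1) = \Phi_S(1)$; the goal is to compare these two functions on $[0,1]$.

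First I would reduce, by a standard approximation, to the case of a smooth origin-symmetric convex body $K$ with smooth Minkowski gauge $h_K(x) := \inf\{s > 0 : x \in sK\}$. Approximating an arbitrary symmetric convex $K$ from inside by smooth symmetric convex bodies (e.g., by Minkowski sums with small Euclidean balls, or via mollification of $h_K$) changes $\Vol(K)$ and $\Vol(tK)$ continuously in $K$, so the constraint $\Vol(K) = \Vol(S)$ can be restored by a tiny perturbation of the slab width $\theta$, and the desired inequality passes to the limit. In the smooth case, $\Phi_K$ is differentiable on $(0,\infty)$, and the co-area formula applied to $h_K$ yields
\[
\Phi_K'(t) \;=\; \int_{\partial(tK)} \frac{\varphi_n(x)}{\lVert \nabla h_K(x)\rVert}\, d\mathcal{H}^{n-1}(x),
\]
where $\varphi_n$ is the standard Gaussian density and $\mathcal{H}^{n-1}$ is $(n-1)$-dimensional Hausdorff measure; an identical formula holds for $\Phi_S'(t)$.

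The heart of the proof, and what I expect to be the main obstacle, is an isoperimetric-type comparison: among all origin-symmetric convex bodies $K$ with $\Phi_K(t)$ fixed at some value $q$, the logarithmic derivative $t\,\Phi_K'(t)/\Phi_K(t)$ is extremized by the symmetric slab of Gaussian measure $q$. Latała and Oleszkiewicz establish this by a delicate argument that first reduces the problem, via integration in polar coordinates and an averaging over rotations that exploits the origin-symmetry of $K$, to a two-dimensional integral inequality expressed in terms of the radial function of $K$ on $\S^{n-1}$; they then verify that two-dimensional inequality using Gaussian-density specific estimates together with the convexity of $K$. This reduction and the attendant calculations form the substantive content of their paper and are the genuinely hard step.

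Given the comparison inequality, I would conclude via a first-crossing / ODE-comparison argument applied to $H(t) := \Phi_K(t) - \Phi_S(t)$, which vanishes at $t = 1$: any interior zero of $H$ on $[0,1]$ would force, at the crossing point, a sign condition on $H'$ that the comparison inequality rules out. Hence $H$ is one-signed on $[0,1]$, which (with the correct sign fixed by local analysis near $t = 0$ or $t = 1$) is exactly the claim. In summary, approximation, co-area, and ODE-comparison are standard; the single non-routine ingredient is the comparison inequality singling out slabs as extremal among symmetric convex bodies.
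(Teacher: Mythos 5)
The paper does not prove this statement at all: it is imported verbatim as the $S$-inequality of Lata\l{}a and Oleszkiewicz \cite{Latala1999} and used as a black box (its only role here is in the proof of the anti-concentration lemma for suprema of symmetric processes). So there is no in-paper argument to compare against; the relevant question is whether your outline would stand on its own as a proof, and it would not. Your reduction to smooth bodies, the co-area expression for $\Phi_K'(t)$, and the first-crossing/ODE-comparison endgame are all reasonable and essentially match the architecture of the original argument. But the entire mathematical content of the theorem is concentrated in the step you explicitly decline to carry out: the extremality of the symmetric slab for the logarithmic derivative $t\,\Phi_K'(t)/\Phi_K(t)$ among origin-symmetric convex bodies of fixed Gaussian measure. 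Asserting that Lata\l{}a and Oleszkiewicz "establish this by a delicate argument" and gesturing at polar coordinates and rotational averaging is a citation, not a proof; without that comparison inequality the ODE argument has nothing to run on. (Your description of how they prove it is also only loosely accurate --- the actual reduction is to a carefully chosen functional inequality, not simply an average over rotations --- but that is secondary to the fact that the step is missing entirely.)

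If the intent was to treat the result as known, the honest move is to do what the paper does and cite it. If the intent was to prove it, the gap is precisely the isoperimetric-type comparison lemma, and nothing in your sketch makes progress on it.
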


\begin{proof}[Proof of~\Cref{lemma:prof-de}]
	Without loss of generality, we can rescale $T$ so that $\|t\| \leq 1$ for all $t\in T$ and furthermore there exists $t_0\in T$ such that $\|t_0\| = 1$. 
	Thanks to symmetry of $T$, we have  
	\[
		f_T(x) \geq |t_0\cdot x| 
		\qquad\text{and so}\qquad 
		\Median\pbra{f_T(\bg)} \geq \Median\pbra{|t_0\cdot \bg|}~\text{for}~\bg\sim N(0,I_n),
	\]
	where $\Median\pbra{\bX}$ denotes the median of the random variable $\bX$. 
	As $\|t_0\|=1$, we have $|t_0\cdot\bg|$ is distributed according to a half-normal distribution and so 
	\begin{equation} \label{eq:median-lb}
		\Median(f_T) := \Median\pbra{f_T(\bg)} \geq 0.67
	\end{equation}
	where $\bg\sim N(0,I_n)$. 
	
	Thanks to our rescaling of $T$ and the fact that $T$ is symmetric, it is easy to check that for $x, y\in\R^n$, 
	\[
		\abs{f_T(x) - f_T(y)} \leq f_T(x-y) = \sup_{t\in T} (x-y)\cdot t \leq \sup_{\|t\|\leq 1} (x-y)\cdot t = \|x-y\|,
	\]
	and so the function $f_T:\R^n\to\R$ is $1$-Lipschitz. 
	\Cref{prop:lipschitz-concentration} then readily implies that 
	\[
		w(T) = \Ex_{\bg\sim N(0,I_n)}[f(\bg)] \leq \Median\pbra{f_T} + 4.
	\]
	(See, for example, Problem~4.2(d) of~\cite{rvh-notes}.) Combining this with~\Cref{eq:median-lb}, we get that $w(T) \leq 10\cdot\Median(f_T)$. This lets us write 
	\begin{align*}
		\Prx_{\bg\sim N(0,I_n)}\sbra{\abs{f_T(\bg)} \leq \eps\cdot w(T)} 
		&\leq  \Prx_{\bg\sim N(0,I_n)}\sbra{\abs{f_T(\bg)} \leq 10\eps\cdot \Median(f_T)} \\
		&= \Vol\pbra{10\eps\cdot\cbra{x : f_T(x) \leq \Median(f_T)}} \\
		& \leq \Vol\pbra{10\eps \cdot \{x : |x_1| \leq c\}} \text{~for $c$ satisfying $\Phi(c)=3/4$}\\
		&\leq 10\eps,
	\end{align*}
	where the penultimate inequality relied on the $S$-inequality (\Cref{thm:S-inequality}). (We relied on the symmetry of the set $T$ to conclude that the set $\cbra{x : f_T(x) \leq \Median(f_T)}$ is a symmetric convex set with Gaussian volume $1/2$.) 
\end{proof}

%
%
%

\subsubsection{Proof of~\Cref{cor:multiplicative-approximation}}
\label{subsubsec:proof-multiplicative-approximation} 

If $n \leq c_\eps$, then the result holds trivially with $T = S$. So we will assume for the remainder of the proof that $n > c_\eps$. 

\paragraph{Step 0: Additive Approximation Suffices.}

By scaling, we can assume that $w(T) = 1$. We will show that there exists a symmetric set $S\sse\R^n$ such that  
\begin{equation} \label{eq:mult-apx-additive-goal}
	|S| \leq 2^{\exp\pbra{{O}\pbra{\frac{1}{\eps^3}}}}
	\qquad\text{and}\qquad 
	\Ex_{\bg\sim N(0,I_n)}\sbra{\abs{f_T(\bg) - f_S(\bg)}} \leq \frac{\eps^3}{40}.
\end{equation}
To see why this suffices, note that by Markov's inequality,
\[
	\Prx_{\bg\sim N(0,I_n)}\sbra{\abs{f_T(\bg) - f_S(\bg)} \geq \frac{\eps^2}{20}} \leq \frac{\eps}{2}. 
\]
Also note that \Cref{lemma:prof-de} implies 
\[
	\Prx_{\bg\sim N(0,I_n)}\sbra{\abs{f_T(\bg)} \geq \frac{\eps}{20}} \geq 1- \frac{\eps}{2}.
\]
\Cref{cor:multiplicative-approximation} then immediately follows. 
The rest of the proof will establish~\Cref{eq:mult-apx-additive-goal}. 

\paragraph{Step 1: Non-Centered Symmetric Approximation.}  

\Cref{prop:shinboku} guarantees the existence of a set $T' \sse T$ with
	\[
		|T'| \leq 2^{\exp\pbra{{O}\pbra{\frac{1}{\eps^3}}}}
	\]
and constants $c_{v}$ for $v\in T$ satisfying $0 < c_v \leq 1$ such that 
	\[
		\Ex_{\bg\sim N(0,I_n)}\sbra{\abs{h(\bg)-f_T(\bg)}} \leq \frac{\eps^3}{160} \quad\text{where}~h(x) := \sup_{v \in T'} v\cdot x + c_v.
	\]
	Inspecting the proof of~\Cref{prop:shinboku}, it is easy to check that applying it to $-T = T$ returns the collection of vectors $-T'$ and associated constants $c'_v$ for $v \in -T'$ where $c'_v = c_{-v}$ and satisfying the following:
	\[
		\Ex_{\bg\sim N(0,I_n)}\sbra{\abs{h'(\bg)-f_T(\bg)}} \leq \frac{\eps^3}{160} \quad\text{where}~h'(x) := \sup_{v \in T'} -v\cdot x + c_v.
	\]
	Note that if $|x-a| \leq \delta_1$ and $|x-b| \leq \delta_2$, then 
	$
		\abs{x - \max\cbra{a, b}} \leq \max \cbra{\delta_1, \delta_2} \leq \delta_1 + \delta_2. 
	$
	This motivates us to define 
	\[
		h''(x) := \sup_{v\in T'}\cbra{v\cdot x + c_v, -v\cdot x + c_v} = \sup_{v\in T' \cup -T'} \cbra{v\cdot x + c_v}. 
	\]
	It follows from the above observation that 
	\begin{align}
		\Ex_{\bg\sim N(0,I_n)}\sbra{\abs{f_T(\bg) - h''(\bg)}} 
		&\leq \Ex_{\bg\sim N(0,I_n)}\sbra{\abs{f_T(\bg) - h(\bg)}} + \Ex_{\bg\sim N(0,I_n)}\sbra{\abs{f_T(\bg) - h'(\bg)}} \nonumber \\
		&\leq \frac{\eps^3}{80}. \label{eq:ha}
	\end{align}
	Set $S_0 := T' \cup -T'$ and note that 
	\[
		|S_0| \leq 2^{\exp\pbra{{O}\pbra{\frac{1}{\eps^3}}}}.
	\]	
	Thus the function $h''(\cdot)$, which approximates $f_T(\cdot)$, is a the supremum of a non-centered Gaussian process on a symmetric set. 

\paragraph{Step 2: Obtaining a Centered Symmetric Approximator.} 

Recall from above that $n > c_\eps$. 
Consequently, we can assume that
$|S_0| + A \leq n$ for $A$ that we will define shortly.
In particular, we can view $\R^n$ as $\R^{|S'|} \times \R^{A'}$ for some $A' \geq A$.

Repeating the argument used to prove~\Cref{cor:shifted-shinboku} on the set $S_0$ with error parameter $\frac{\eps^{2}}{20}$ (and applying a suitable rotation, using the rotation-invariance of the Gaussian distribution), we get that there exists a set $S \sse \R^{|S_0|} \times \R^{A'} = \R^n$ such that $|S| = |S_0|$ and  
	\[
		A = \exp\pbra{O\pbra{\frac{1}{\eps^3}}}
	\]
	such that 
	\begin{equation}
	\label{eq:ho}
		\Ex_{\bg\sim N(0, I_{n+A})}\sbra{\abs{h''(\bg) - f_S(\bg)}} \leq \frac{\eps^3}{80},
	\end{equation}
	and furthermore each $s\in S$ is of the form $(s', \alpha e_i)$ for some $i\in[A]$ and $s'\in S_0$. \Cref{cor:shifted-shinboku} also guarantees that $(s', -\alpha e_i)$ belongs to $S$ whenever $(s', \alpha e_i)$ belongs to $S$. This, combined with the symmetry of $S_0$, implies that the set $S$ is symmetric.
	
\paragraph{Step 3: Putting Everything Together.} 
	\Cref{eq:mult-apx-additive-goal} follows from~\Cref{eq:ha,eq:ho} and the triangle inequality. 
	Furthermore, the set $S$ is symmetric as discussed above and has the desired size as $|S| = |S_0|$. 
	This completes the proof of~\Cref{cor:multiplicative-approximation}.
\qed





\section{Sparsifying Intersections of Narrow Halfspaces}
\label{sec:polytope-sparsification}

As another consequence of~\Cref{prop:shinboku}, we obtain dimension-free sparsification of intersections of halfspaces of bounded (geometric) width:


\theorempolytope*

This can be viewed as a convex set analogue of similar sparsification results obtained for DNFs of bounded width over the Boolean hypercube~\cite{LubyVelickovic:96,Trevisan:04,GMR13,lovett2021decision}, adding to the ``emerging analogy between Boolean functions and convex sets''~\cite{DNS21itcs,DNS22,DNS23-polytope,chen2025lower}. 
We discuss applications to learning theory and property testing in~\Cref{sec:applications}, and consider the tightness of our bounds in~\Cref{sec:lower-bounds}.

Turning to the proof of~\Cref{thm:polytope-sparsification}, we first establish it for the special case of intersections of halfspaces of width \emph{exactly} $r$ for $r \geq 1$:

\begin{lemma}[Special case of~\Cref{thm:polytope-sparsification} when $r_t = r$ for all $t$] \label{lemma:polytope-sparsification-width-1}
Fix $r\geq1$ and suppose $K\sse\R^n$ is a convex set of geometric width $r$, i.e. there exists $T\sse (1/r)\S^{n-1}$ such that 
	\[K = \bigcap_{t\in T} \cbra{x \in \R^n : t\cdot x \leq 1}.\]
	For $0 < \eps \leq 0.5$, there exists a set $L \sse \R^n$ which is an intersection of 
	\[
2^{\exp\pbra{\wt{O}\pbra{\frac{1}{\eps^2}}\cdot r^4}}		~\text{halfspaces}
	\]
	such that $\dG(K, L) \leq \eps$.
\end{lemma}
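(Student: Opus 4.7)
The plan is to realize $K$ as the sub-$1$-level set of the support function $f_T(x) = \sup_{t \in T} t \cdot x$, apply the Gaussian-process sparsifier of \Cref{prop:shinboku} to approximate $f_T$ in $L^1$, and then convert this into a polyhedral approximator $L$ of $K$ via an anti-concentration argument around the threshold $1$.

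First I would dispose of the easy case in which $K$ already carries negligible Gaussian mass. Because every $t \in T$ has $\|t\| = 1/r$, the canonical Gaussian process on $T$ has subgaussian variance proxy $1/r^2$, so by \Cref{fact:sup-has-subgaussian-tails} the random variable $f_T(\bg)$ concentrates around $w(T)$ with Gaussian tails of scale $1/r$. Hence if $w(T) \geq 1 + \sqrt{2\log(2/\eps)}/r$ then $\vol(K) = \Prx\sbra{f_T(\bg) \leq 1} \leq \eps$ and we may take $L = \R^n$ (the empty intersection). Otherwise we obtain the crucial bound $w(T) \leq 1 + \sqrt{2\log(2/\eps)}/r = O(\sqrt{\log(1/\eps)})$, which will ultimately bottleneck the final count.

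In this main case I would invoke \Cref{prop:shinboku} on $T$ with accuracy parameter $\eps' > 0$ (to be tuned), producing a subset $S \subseteq T$ with $|S| = 2^{2^{O(w(T)/\eps')}}$ and constants $\{c_s\}_{s \in S}$ such that $\widetilde f(x) := \sup_{s \in S}\{s \cdot x + c_s\}$ satisfies $\Ex\sbra{\abs{f_T(\bg) - \widetilde f(\bg)}} \leq \eps'$. I then set
\[
  L \;:=\; \cbra{x \in \R^n : \widetilde f(x) \leq 1}
       \;=\; \bigcap_{s \in S} \cbra{x \in \R^n : s \cdot x \leq 1 - c_s},
\]
an intersection of $|S|$ halfspaces. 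The key technical step is upgrading the $L^1$-approximation of $f_T$ by $\widetilde f$ into a small Gaussian symmetric difference between the sub-level sets $K = \cbra{f_T \leq 1}$ and $L = \cbra{\widetilde f \leq 1}$. For any $\tau > 0$, a union bound gives
\[
  \dG(K, L) \;\leq\; \Prx\sbra{\abs{f_T(\bg) - \widetilde f(\bg)} > \tau}
                   \;+\; \Prx\sbra{\abs{f_T(\bg) - 1} \leq \tau},
\]
where the first term is at most $\eps'/\tau$ by Markov. For the second, since \Cref{thm:CCK} requires index sets in $\S^{n-1}$, I would pass to the rescaling $T' := r T \subseteq \S^{n-1}$, whose support function is $f_{T'} = r f_T$ and whose Gaussian width is $r w(T)$, and apply \Cref{thm:CCK} to deduce $\Prx\sbra{\abs{f_T(\bg) - 1} \leq \tau} \leq 4 r \tau (1 + r w(T))$. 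Choosing $\tau \asymp \sqrt{\eps'/(r(1 + r w(T)))}$ balances the two contributions, and then setting $\eps' \asymp \eps^2/(r(1 + r w(T)))$ yields $\dG(K, L) \leq \eps$; plugging in the Case-2 bound on $w(T)$ and estimating generously produces the claimed halfspace count.

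The main obstacle is the parameter tuning in the last paragraph: both the Markov step and the CCK step lose factors depending on $r$ and $w(T)$, so $\eps'$ must be chosen quite small, and this feeds into the doubly exponential factor $2^{2^{O(w(T)/\eps')}}$. The rescaling $T \mapsto rT$ is the essential move that allows \Cref{thm:CCK} to be applied at all, and it is this anti-concentration of $f_T$ at the level~$1$ that converts the $L^1$-type sparsification of the supremum $f_T$ into the desired symmetric-difference approximation of its $1$-sub-level set~$K$.
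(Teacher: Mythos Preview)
Your proposal is correct and follows essentially the same route as the paper: bound $w(T)$ via \Cref{fact:sup-has-subgaussian-tails}, invoke \Cref{prop:shinboku} with a carefully chosen accuracy, define $L$ as the sub-$1$-level set of the sparsified supremum, and combine Markov with \Cref{thm:CCK} (after rescaling $T$ into $\S^{n-1}$) to bound $\dG(K,L)$. The only slip is in the trivial case: when $\vol(K) \leq \eps$ you should take $L = \emptyset$ (or any null-volume intersection), not $L = \R^n$; the ``empty intersection'' $\R^n$ would give $\dG(K,L) = 1 - \vol(K) \geq 1-\eps$, which is the wrong direction.
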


\begin{proof}[Proof of~\Cref{lemma:polytope-sparsification-width-1}]
		If $\Vol(K) \leq \eps$ then we can take $L = \emptyset$, and if $\Vol(K) \geq 1-\eps$ then we can take $L = \R^n$. So we assume for the remainder of the argument that 
	\begin{equation} \label{eq:vol-k-assumption}
		\Vol(K) \in (\eps, 1-\eps).
	\end{equation}
	
	Next, we will show that 
	\begin{equation} \label{eq:t-prime-bounded-width}
		w(T) \leq 1 + \sqrt{{\frac {2\ln\pbra{\frac{2}{\eps}}}{r^2}}}.
	\end{equation}
	If $w(T) < 1$, then \Cref{eq:t-prime-bounded-width} holds trivially; so suppose $w(T) \geq 1$. 
	Note that $K(x) = 1$ if and only if $f_{T}(x) \leq 1$.
	Let $\{\bX_{t}\}_{t\in T}$ be the canonical Gaussian process on $T$. By~\Cref{eq:vol-k-assumption}, 
	\begin{align}
		\eps \leq \vol(K) &= \Prx_{\bg\sim N(0,I_n)}\sbra{f_{T}(\bg) \leq 1} \nonumber \\ 
		&\leq \Prx_{\bg\sim N(0,I_n)}\sbra{\abs{f_{T}(\bg) - w(T)} \geq \abs{w(T) - 1}} \nonumber \tag{because $w(T)\geq 1$}\\
		&\leq 2\exp\pbra{\frac{-\pbra{w(T) - 1}^2}{2\sup_{t\in T}\Var[\bX_{t}]}}\label{eq:mckinley-ave} \\
		&\leq 2\exp\pbra{\frac{-r^2\pbra{w(T) - 1}^2}{2}} \label{eq:dwight}
	\end{align}
	where~\Cref{eq:mckinley-ave} follows from~\Cref{fact:sup-has-subgaussian-tails} and~\Cref{eq:dwight} relies on the fact that because $T\sse(1/r)\S^{n-1}$, we have $\Var[\bX_t] = 1/r^2$ for all $t\in T$. Rearranging gives \Cref{eq:t-prime-bounded-width} as claimed. 
	
	Let $\eta_1, \eta_2 > 0$ be parameters that we will set later. Since $T$ has bounded Gaussian width, we can now use~\Cref{prop:shinboku} to obtain a set $S \sse T$ with 
	\begin{equation} \label{eq:polytope-same-r-size}
		|S| \leq 2^{2^{O\pbra{\frac{w(T)}{\eta_1}}}}
	\end{equation}
	and constants $\{c_s\}_{s\in S}$ such that 
	\begin{equation} \label{eq:soda-hall}
		\Ex_{\bg\sim N(0,I_n)}\sbra{\abs{f_{T}(\bg) - \sup_{s\in S}\,\cbra{\bg\cdot s + c_s}}} \leq \eta_1.
	\end{equation}
	Motivated by this, we define the \emph{$S$-subspace junta} $J: \R^n\to\R$ to be $J(g) := \sup_{s\in S}\cbra{g\cdot s + c_s}$. (Recall that a function $f: \R^n \to \R$ is said to be an ``$S$-subspace junta'' if there is a subspace $V = \mathrm{span}(S)$ of $\R^n$ such that $f(x)$ depends only on the projection of $x$ on the subspace $V$; see e.g.~\cite{VempalaXiao11,DMN21}.) By Markov's inequality and~\Cref{eq:soda-hall}, 
	\begin{equation} \label{eq:simons-floor-2}
		\Prx_{\bg\sim N(0,I_n)}\sbra{\abs{f_{T}(\bg) - J(\bg)} \geq \sqrt{\eta_1}} \leq \sqrt{\eta_1}.
	\end{equation}
We have by~\Cref{thm:CCK} that 
	\begin{align}
		\Prx_{\bg\sim N(0,I_n)}\sbra{\abs{f_{T}(\bg) - 1} \leq \eta_2} &= \Prx_{\bg\sim N(0,I_n)}\sbra{\abs{f_{rT}(\bg) - r} \leq r\eta_2} \nonumber\\
		&\leq 4\eta_2\cdot r\pbra{1+w(rT)} \nonumber\\
		&= 4\eta_2\cdot r\pbra{1+r\cdot w(T)} \nonumber\\
		&\leq 4\eta_2\cdot r\pbra{1 + r + \sqrt{2\ln\pbra{\frac{2}{\eps}}}} \\ 
		&\leq 4\eta_2\cdot r\pbra{2r + \sqrt{2\ln\pbra{\frac{2}{\eps}}}} \label{eq:fournee-july-4}
	\end{align}
	where the penultimate inequality relied on~\Cref{eq:t-prime-bounded-width} and the final inequality relied on the fact that $r \geq 1$.
	Set
	\[
		\eta_1 = \frac{\eta_2^2}{4}
		\qquad\text{and}\qquad 
\eta_2 = \frac{\eps}{8r\pbra{2r + \sqrt{2\ln\pbra{\frac{2}{\eps}}}}}
	\]
	so as to make \Cref{eq:fournee-july-4} equal to $\eps/2$. 
	It follows from~\Cref{eq:simons-floor-2,eq:fournee-july-4} that 
	\[
		\Prx_{\bg\sim N(0,I_n)}\sbra{\mathbf{1}\cbra{f_T(\bg)\leq 1} \neq \mathbf{1}\cbra{J(\bg) \leq 1}} \leq \eps.
	\]
	Note that $\mathbf{1}\cbra{J(\bg) \leq 1}$ is an intersection of $|S|$ many halfspaces. In particular, note that
	\[
		\frac{1}{\eta_1} = \frac{4}{\eta_2^2} \leq \wt{O}\pbra{\frac{1}{\eps^2}}\cdot r^4 \qquad\text{and so}\qquad  |S| \leq 2^{\exp\pbra{\wt{O}\pbra{\frac{1}{\eps^2}}\cdot r^{4}}}
	\]
	thanks to~\Cref{eq:t-prime-bounded-width,eq:polytope-same-r-size}, which completes the proof.
\end{proof}

We now prove \Cref{thm:polytope-sparsification} using~\Cref{lemma:polytope-sparsification-width-1}.

\begin{proof}[Proof of~\Cref{thm:polytope-sparsification}]
By a limiting argument, we can assume without loss of generality that $K$ is an intersection of finitely many halfspaces. Let $N = |T|$. 
We further note that we can assume every $r_t \geq -\sqrt{2\ln(2/\eps)}$, since otherwise $\Vol(K) < \eps$ and we can trivially $\eps$-approximate $K$ by taking $L$ to be the empty set.  

Let $Q \in \N$ be a parameter that we will soon set; think of $Q$ as a large number. For $t\in T$, define 
\[
	\delta_t := 2r - r_t,  
	\qquad M_t := \exp\pbra{(\delta_t Q)^2}
\]
and set $M := \sup_{t\in T} M_t$. Because $r_t \leq r$ and $r > 0$, it follows that $\delta_t > 0$ for all $t\in T$. 

For each halfspace $H_t = \{x : x\cdot t \leq r_t \}$, define the halfspaces 
\[
	H^{(i)}_t := \cbra{ (x,y) \in \R^n \times \R^M : x\cdot t + \frac{y_i}{\sqrt{2} \cdot Q} \leq 2r} \qquad\text{for}~i\in[M_t]. 
\]
Define the convex body $K' \sse \R^{n+M}$ as 
\[
	K' := \bigcap_{t\in T} \bigcap_{i \in [M_t]} H_t^{(i)}.
\]
We will show that a suitable cross-section of $K'$ is a good approximator to $K$. 
By~\Cref{prop:gaussian-sup-bounds}---in particular, by \Cref{eq:sup-of-gaussians,eq:fine-mean-sup-gaussians})---we have that for any $t\in T$,
\begin{equation} \label{eq:gulpapalooza}
	\mu_t := \Ex_{\by_i \sim N(0, I_M)}\sbra{\sup_{i\in[M_t]} \frac{\by_i}{\sqrt{2}\cdot Q}} = \frac{\sqrt{2\ln(M_t)}(1\pm 4(\ln M_t)^{-1} )}{\sqrt{2}\cdot Q} = \delta_t\pbra{1\pm \frac{4}{\ln M_t} },
\end{equation}
and
\[
	\Varx_{\by_i \sim N(0, I_M)}\sbra{\sup_{i\in[M_t]} \frac{\by_i}{\sqrt{2}\cdot Q}} = \frac{\tau}{Q^2\ln(M_t)} = \frac{\tau}{Q^4\delta_t^2}
\]
for $\tau$ that is at most an absolute constant. 
By Chebyshev's inequality, 
\begin{equation} \label{eq:polytope-chebyshev-app}
	\Prx_{\by_i\sim N(0,I_M)}\sbra{\abs{\sup_{i\in[M_t]} \frac{\by_i}{\sqrt{2}\cdot Q} - \mu_t} \geq \frac{k\sqrt{\tau}}{Q^2\delta_t} } \leq \frac{1}{k^2}. 
\end{equation}
Taking $k = \sqrt{3N/\epsilon}$ ensures that the above probability is at most $\eps/3N$. We now record the first constraint on $Q$, which is that we must ensure that for all $t\in T$ the following holds: 
\[
	\frac{k\sqrt{\tau}}{Q^2\delta_t} = \sqrt{\frac{3N\tau}{\epsilon}}\cdot\frac{1}{Q^2\delta_t} \leq \frac{\eps}{100N},
	\qquad\text{i.e.}\qquad 
	Q \geq \ceil{3^{1/4}\sqrt{100\tau}\pbra{\frac{N}{\epsilon}}^{3/4} \max_{t\in T} \sqrt{\frac{1}{\delta_t}}}. 
\]
\Cref{eq:polytope-chebyshev-app} thus implies that
\begin{equation} \label{eq:polytope-chebyshev-2}
	\Prx_{\by_i\sim N(0,I_M)}\sbra{\abs{\sup_{i\in[M_t]} \frac{\by_i}{\sqrt{2}\cdot Q} - \mu_t} \geq \frac{\eps}{100N} } \leq \frac{\eps}{3N}. 
\end{equation}
Next, viewing $H_t$ as a halfspace in $\R^{n + M}$, note that for any $t \in T$:
\begin{align}
	\dG\pbra{H_t, \bigcap_{i\in[M_t]} H^{(i)}_t} 
	&\leq \frac{\eps}{3N} + \dG\pbra{H_t, \cbra{x : x\cdot t + \mu_t \pm \frac{\eps}{100N} \leq 2r }}   \tag{\Cref{eq:polytope-chebyshev-2}} \\
	&= \frac{\eps}{3N} + \dG\pbra{H_t, \cbra{x : x\cdot t \leq r_t \pm \frac{\eps}{100N} \pm \frac{4(2r-r_t)}{\ln M_t}}} \label{eq:oui}
\end{align}
where we used \Cref{eq:gulpapalooza} to obtain \Cref{eq:oui} from the previous line. 
Now we record the second constraint on $Q$, which comes from requiring that
\[
	\frac{4(2r-r_t)}{\ln M_t} \leq \frac{\eps}{100N};
\]
recalling that $M_t = \exp((\delta_t Q)^2)$, this constraint is
\[
	Q \geq \max_{t \in T} {\frac {20} {\delta_t}} \cdot \sqrt{\frac{N(2r-r_t)}{\eps}}.
\]
Combining this with~\Cref{eq:oui}, we get that 
\begin{equation} \label{eq:sodoi-coffee}
	\dG\pbra{H_t, \bigcap_{i\in[M_t]} H^{(i)}_t} \leq \frac{\eps}{3N} + \frac{4\eps}{100N} \leq \frac{{112}\epsilon}{300N}.
\end{equation}
In particular, a union bound over all $N$ halfspaces immediately implies that 
\begin{equation} \label{eq:k-k'-dist}
	\dG(K, K') \leq \frac{{112}\epsilon}{300}. 
\end{equation}

Next, note that the geometric width of every halfspace in $K'$ is exactly $2r\pbra{1 + (1/(2Q^2))}^{-1/2}$, as $t \in \S^{n-1}$ for all $t\in T$. Note that
\[
	\frac{2r}{\sqrt{1 + (1/2Q^2)}} = \Theta(r)
\]
as $Q\geq 1$. 
It thus follows from \Cref{lemma:polytope-sparsification-width-1} that there exists a set $L' \sse \R^{n+M}$ that is an intersection of
\[
2^{\exp\pbra{\wt{O}\pbra{\frac{1}{\eps^2}}\cdot r^4}}
	~\text{halfspaces}
\]
such that $\dG(K', L') \leq 188\eps/300$. Together with~\Cref{eq:k-k'-dist}, this immediately implies that $\dG(K, L') \leq \eps$ because of the triangle inequality. 

Finally, note that since
\[
	\dG(K, L') = \Ex_{\bx, \by}\sbra{\Indicator\cbra{K(\bx) \neq L'(\bx,\by)}} \leq \eps, 
\]
there exists $y\in\R^M$ such that $\dG(K, L'|_{y}) \leq \epsilon$ where we write $L'|_{y}$ for the cross-section of $L'$ with the coordinates in $\R^M$ set to $y$. Since taking a cross-section cannot increase the number of halfspaces, it follows that $L := L'|_{y}$ has at most as many facets as $L'$, which completes the proof. 
\end{proof}


\newcommand{\conv}{\mathrm{Conv}}
\newcommand{\OPT}{\mathrm{opt}}

\section{Algorithmic Applications}
\label{sec:applications}

In this section we briefly describe some algorithmic applications of our structural results.  More precisely, we obtain polynomial-time agnostic learning results, and constant-query tolerant testing results, for new classes of convex sets under the Gaussian distribution; we do this by combining our new structural result, \Cref{thm:polytope-sparsification} with known algorithms for learning and testing.
Recall that \Cref{thm:polytope-sparsification} tells us that if $K$ is any convex set of geometric width at most $r \geq 1$, then for any $\eps>0$ there is a set $L=L(\eps) \subseteq \R^n$, which is an intersection of at most 
\[
2^{\exp\pbra{r^4 \cdot \wt{O}\pbra{\frac{1}{\eps^2}}}}~\text{halfspaces},
\]
such that $\dG(K, L) \leq \eps$.

The following notation will be helpful:  For $r>0$, let $\conv_r$ denote the class of all convex sets $K \subset \R^n$ that have geometric width at most $r$.  Recall that such a set $K$ is an intersection of (possibly infinitely many) halfspace of the form $\Indicator\cbra{u \cdot x \leq r_u}$, where each $r_u \leq r$.

\subsection{(Agnostically) Learning Convex Sets of Bounded Width}
In \cite{KOS:08} Klivans et al.~gave an algorithm which runs in time $n^{O(\log(k)/\eps^4)}$ and \emph{agnostically learns} intersections of $k$ halfspaces over $\R^n$ to accuracy $\eps$ using only independent labeled examples drawn from $N(0,I_n)$. This means that for any target function $f: \R^n \to \zo$, if there is some intersection of $k$ halfspaces $g: \R^n \to \zo$ which satisfies $\dG(f,g) \leq \OPT$, then with high probability the algorithm outputs a hypothesis $h: \R^n \to \zo$ which satisfies $\dG(f,h) \leq \OPT+\eps$.

For the rest of this section, set the parameter $\alpha(r,\epsilon)$ as 
\[
	\alpha(r,\epsilon):=2^{\exp\pbra{r^4 \cdot \wt{O}\pbra{\frac{1}{\eps^2}}}}.
\]
By \Cref{thm:polytope-sparsification}, for any function $f' \in \conv_r$ there is an intersection of at most $\alpha(r,\eps/2)$ halfspaces, which we denote $g$, which is $\eps/2$-close to $f'$. 
The \cite{KOS:08} result (applied with its ``$\eps$'' parameter set to $\eps/2$) thus immediately yields the following:

\begin{corollary} \label{cor:agnostically-learn-samples}
Let $r\geq 1$. There is an algorithm which runs in time $n^{O(\log({\alpha(r,\eps/2)})/\eps^4}$ and agnostically learns the class $\conv_r$ to accuracy $\eps$ using only independent labeled examples drawn from $N(0,I_n)$.
\end{corollary}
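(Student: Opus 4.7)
The plan is to reduce the agnostic learning problem for $\conv_r$ to the agnostic learning problem for intersections of $k$ halfspaces (where $k = \alpha(r,\eps/2)$), and then directly invoke the \cite{KOS:08} algorithm as a black box. The reduction is immediate from \Cref{thm:polytope-sparsification}: any target concept $f' \in \conv_r$ is $(\eps/2)$-close under the Gaussian distance to some intersection $g$ of at most $\alpha(r,\eps/2)$ halfspaces. Thus, if $\OPT = \inf_{f' \in \conv_r} \dG(f, f')$ is the optimal error achievable by $\conv_r$ against the target $f$, and $f'_\star \in \conv_r$ is a near-optimal concept (say within $\eps/4$ of the infimum, which we can assume by definition of $\OPT$), the triangle inequality gives a $k$-halfspace intersection $g$ with $\dG(f, g) \leq \OPT + \eps/2 + \eps/4$.

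The concrete execution: Run the \cite{KOS:08} agnostic learner on the stream of labeled examples $(\bx, f(\bx))$ where $\bx \sim N(0, I_n)$, instantiating its ``number of halfspaces'' parameter as $k = \alpha(r, \eps/2)$ and its accuracy parameter as $\eps/4$. The running time is $n^{O(\log(k)/(\eps/4)^4)} = n^{O(\log(\alpha(r,\eps/2))/\eps^4)}$, matching the bound claimed in the corollary. With high probability the algorithm outputs a hypothesis $h:\R^n \to \{0,1\}$ whose Gaussian distance from $f$ satisfies
\[
	\dG(f,h) \leq \dG(f,g) + \eps/4 \leq \OPT + \eps,
\]
which is exactly the agnostic guarantee at accuracy $\eps$ for the class $\conv_r$.

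There is essentially no obstacle: the proof is a three-line chaining of (i) the sparsification guarantee of \Cref{thm:polytope-sparsification}, (ii) the triangle inequality for $\dG$, and (iii) the agnostic learning guarantee of \cite{KOS:08}. The only minor care needed is a clean statement of the slack between the infimum $\OPT$ and the best achievable $f' \in \conv_r$ (handled by any $\eps/4$-optimal $f'_\star$, or by a direct limiting argument if $\conv_r$ is not known to be closed under Gaussian-distance limits), and a bookkeeping step to confirm that replacing $\eps$ with constant multiples thereof ($\eps/2$ for sparsification, $\eps/4$ for KOS) only affects the hidden constants in the $O(\cdot)$ in the exponent. No new structural ideas are required beyond \Cref{thm:polytope-sparsification}.
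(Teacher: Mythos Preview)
Your proposal is correct and follows essentially the same approach as the paper: reduce agnostic learning of $\conv_r$ to agnostic learning of intersections of $k=\alpha(r,\eps/2)$ halfspaces via \Cref{thm:polytope-sparsification}, then invoke \cite{KOS:08} as a black box. The only difference is cosmetic bookkeeping---the paper sets the KOS accuracy parameter to $\eps/2$ (so the error budget splits as $\eps/2 + \eps/2$), whereas you split it as $\eps/2 + \eps/4 + \eps/4$ with an explicit near-optimal $f'_\star$; both choices are absorbed by the $O(\cdot)$ in the exponent.
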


Diakonikolas et al.~\cite{DKKTZ23} have recently shown that if the learning algorithm is allowed to make black-box queries to the unknown target function $f$, then intersections of $k$ halfspaces can be agnostically learned in time $\poly(n) \cdot 2^{\poly(\log(k)/\eps)}$.  This immediately gives the following:

\begin{corollary} \label{cor:agnostically-learn-queries}
Let $r\geq 1$. There is an algorithm which runs in time $\poly(n) \cdot 2^{\poly(\log({\alpha(r,\eps/2)})/\eps)}$ and agnostically learns the class $\conv_r$ to accuracy $\eps$ under the standard $N(0,I_n)$ distribution using black-box queries.
\end{corollary}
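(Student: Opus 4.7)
The proof plan is essentially the same as for \Cref{cor:agnostically-learn-samples}, but substituting the query-based algorithm of \cite{DKKTZ23} in place of the sample-based algorithm of \cite{KOS:08}. First I would unpack what agnostic learning means here: given black-box query access to an arbitrary target function $f:\R^n \to \zo$, we must (with high probability) output a hypothesis $h$ satisfying $\dG(f,h) \le \OPT + \eps$, where $\OPT := \inf_{f' \in \conv_r}\dG(f,f')$ is the best attainable error within the class $\conv_r$. The key observation is that any such optimal (or near-optimal) $f' \in \conv_r$ can be $(\eps/2)$-approximated by an intersection of at most $\alpha(r,\eps/2)$ halfspaces, by \Cref{thm:polytope-sparsification}. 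So the class of intersections of $\alpha(r,\eps/2)$ halfspaces has optimal error at most $\OPT + \eps/2$ against $f$.

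Next, I would invoke the \cite{DKKTZ23} result with its accuracy parameter set to $\eps/2$ and its ``$k$'' parameter set to $k = \alpha(r,\eps/2)$. That algorithm runs in time $\poly(n) \cdot 2^{\poly(\log(k)/\eps)}$ using black-box queries to $f$, and with high probability outputs a hypothesis $h$ satisfying
\[
    \dG(f,h) \;\le\; \inf_{g \in \calK_k} \dG(f,g) + \eps/2,
\]
where $\calK_k$ denotes the class of intersections of at most $k$ halfspaces. Combining this with the previous paragraph gives $\dG(f,h) \le (\OPT + \eps/2) + \eps/2 = \OPT + \eps$, which is precisely the agnostic learning guarantee we need.

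For the runtime, substituting $k = \alpha(r,\eps/2)$ into the \cite{DKKTZ23} bound yields $\poly(n) \cdot 2^{\poly(\log(\alpha(r,\eps/2))/\eps)}$, matching the statement. There is no genuine technical obstacle in this proof; the only ``work'' is the two-step triangle inequality above, together with noting that \cite{DKKTZ23} is itself a black-box agnostic learner for intersections of halfspaces under $N(0,I_n)$ whose complexity depends only polylogarithmically on $k$, which is exactly the right dependence to tolerate the doubly-exponential blowup in $k = \alpha(r,\eps/2)$ from \Cref{thm:polytope-sparsification} while still producing a meaningful bound. (In particular, a $\poly(k)$ dependence in the learner would have produced a doubly-exponential final runtime; the polylogarithmic dependence is what makes \Cref{cor:agnostically-learn-queries} qualitatively stronger than \Cref{cor:agnostically-learn-samples}.)
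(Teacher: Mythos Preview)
Your proposal is correct and matches the paper's approach exactly: the paper simply observes that combining \Cref{thm:polytope-sparsification} (so that any $f' \in \conv_r$ is $\eps/2$-close to an intersection of $\alpha(r,\eps/2)$ halfspaces) with the \cite{DKKTZ23} query-based agnostic learner for intersections of $k$ halfspaces, run with accuracy $\eps/2$ and $k=\alpha(r,\eps/2)$, immediately yields the stated result via the triangle inequality. Your write-up is in fact more detailed than the paper's, which treats this as an immediate consequence.
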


For constant $\eps$, \Cref{cor:agnostically-learn-samples} achieves $\poly(n)$-time agnostic learning for convex sets of any constant geometric width, and \Cref{cor:agnostically-learn-queries} achieves $\poly(n)$-time agnostic learning (with queries) for convex sets of geometric width $c\cdot(\log\log n)^{1/4}$ for an absolute constant $c > 0$.

\subsection{(Tolerantly) Testing Convex Sets of Bounded Width}

In \cite{DMN21} De et al.~gave a \emph{tolerant testing} algorithm for the class of intersections of $k$ halfspaces under the standard $N(0,I_n)$ Gaussian distribution, which uses $k^{\poly(\log(k)/\eps)}$ black-box queries.   This means that for any target function $f: \R^n \to \zo$, for any desired input parameters $0\leq \eps_1<\eps_2$ with $\eps_2-\eps_1=\eps$, 

\begin{itemize}

\item If there is some intersection of $k$ halfspaces $g: \R^n \to \zo$ which satisfies $\dG(f,g) \leq \eps_1$ then with high probability the algorithm outputs ``accept,'' and

\item If every intersection of $k$ halfspaces $g$ has $\dG(f,g) \geq \eps_2$ then with high probability the algorithm outputs ``reject.''

\end{itemize}

We emphasize that the query complexity of the \cite{DMN21} algorithm is completely independent of the ambient dimension $n$.

Combining the \cite{DMN21} testing algorithm with \Cref{thm:polytope-sparsification}, we immediately get  that the class $\conv_r$ is tolerantly testable with query complexity just dependent on $r$ and independent of the ambient dimension $n$. In particular, we have the following result:
\begin{theorem}
Let $r\geq 1$. For any input parameters $0 \le \epsilon_1 < \epsilon_2$, there is an algorithm which makes $k^{\poly(\log(k)/\eps)}$ queries
to  the target function $f: \mathbb{R}^n \rightarrow \{0,1\}$ and has the following guarantee: 
\begin{itemize}
\item If there is a function $g \in \conv_r$ such $\dG(f,g) \leq \eps_1$ then with high probability the algorithm outputs ``accept,'' and
\item If for every $g \in \conv_r$,  $\dG(f,g) \geq \eps_2$ then with high probability the algorithm outputs ``reject.''
\end{itemize}
Here $\epsilon=\epsilon_2- \epsilon_1$ and $k= 2^{\exp\pbra{r^4 \cdot \wt{O}\pbra{\frac{1}{\eps^2}}}}.$
\end{theorem}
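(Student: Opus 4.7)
The plan is to reduce this tolerant testing problem for $\conv_r$ to the tolerant testing problem for the class $\mathcal{H}_k$ of intersections of $k$ halfspaces, and then invoke the \cite{DMN21} tester as a black box. The bridge between the two classes is furnished by \Cref{thm:polytope-sparsification}: every $g \in \conv_r$ is within Gaussian distance $\eta$ of some intersection of $k = 2^{\exp(r^4 \cdot \widetilde O(1/\eta^2))}$ halfspaces, for any $\eta > 0$ we choose.

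Concretely, set $\eps := \eps_2 - \eps_1$ and $\eta := \eps/3$, and let $k$ be the number of halfspaces supplied by \Cref{thm:polytope-sparsification} at error parameter $\eta$. The algorithm simply runs the \cite{DMN21} tolerant tester for $\mathcal{H}_k$ with shifted tolerance parameters $(\eps_1 + \eta,\ \eps_2 - \eta)$. The gap between these parameters is $(\eps_2 - \eta) - (\eps_1 + \eta) = \eps/3$, so the \cite{DMN21} query complexity becomes $k^{\poly(\log(k)/\eps)}$, matching the claimed bound; crucially, it remains independent of the ambient dimension $n$.

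The correctness of the accept direction is the easy half: if some $g \in \conv_r$ satisfies $\dG(f,g) \le \eps_1$, then the sparsifier $L \in \mathcal{H}_k$ guaranteed by \Cref{thm:polytope-sparsification} satisfies $\dG(g,L) \le \eta$, so the triangle inequality yields $\dG(f,L) \le \eps_1 + \eta$, and the \cite{DMN21} tester accepts with high probability.

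The main obstacle is the reject direction. We must argue that if $\dG(f,g) \ge \eps_2$ for every $g \in \conv_r$, then $\dG(f,L) \ge \eps_2 - \eta$ for every $L \in \mathcal{H}_k$ — a direction which does not follow from sparsification on its own, since $\mathcal{H}_k$ contains polytopes of arbitrary geometric width and is not itself contained (even approximately) in $\conv_r$. The plan to close this gap is to exploit the fact, visible from the proof of \Cref{thm:polytope-sparsification}, that the sparsifiers produced actually have geometric width $\Theta(r)$. This lets one truncate halfspaces of much larger width at negligible Gaussian cost, so any $L \in \mathcal{H}_k$ that is close to $f$ can be converted into a nearby element of $\conv_r$ (up to additive $\eta$), contradicting the hypothesis. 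Once this reverse direction is established, the theorem follows by combining the two cases with the \cite{DMN21} tester's guarantees.
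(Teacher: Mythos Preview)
Your overall reduction---run the \cite{DMN21} tester for $\mathcal{H}_k$ with shifted thresholds and use \Cref{thm:polytope-sparsification} to bridge $\conv_r$ and $\mathcal{H}_k$---is exactly the paper's approach; indeed the paper's entire argument is the single sentence ``Combining the \cite{DMN21} testing algorithm with \Cref{thm:polytope-sparsification}, we immediately get\ldots''. Your handling of the accept direction is correct and matches this.

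You are right to flag the reject direction as the nontrivial half, but your proposed fix does not close the gap. The obstacle is halfspaces of \emph{moderate} width, between $r$ and (say) $2r$: these can neither be dropped nor truncated to width $r$ at negligible Gaussian cost. Concretely, take $r=1$ and let $f(x)=\Indicator\{x_1 \le r_0\}$ with $r_0$ chosen so that $\Phi(r_0)=(1+\Phi(1))/2$, i.e.\ $r_0\approx 1.41$. Then $f\in\mathcal{H}_1\subseteq\mathcal{H}_k$, so $\dG(f,\mathcal{H}_k)=0$; yet $\dG(f,\conv_1)\ge \min\{1-\Phi(r_0),\ \Phi(r_0)-\Phi(1)\}\approx 0.079$, since every $g\in\conv_1$ other than $\R^n$ lies in some halfspace of width $\le 1$ and hence has $\Vol(g)\le\Phi(1)$, giving $\dG(f,g)\ge\Phi(r_0)-\Phi(1)$, while $\dG(f,\R^n)=1-\Phi(r_0)$. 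So for any $\eps_2<0.079$ the theorem demands ``reject,'' but the \cite{DMN21} tester for $\mathcal{H}_k$ accepts $f$ no matter how the thresholds are shifted. Truncating $f$'s single halfspace to width $1$ costs exactly $\Phi(r_0)-\Phi(1)\approx 0.079$, not $o(\eps)$; and the observation that the sparsifiers produced in \Cref{thm:polytope-sparsification} have width $\Theta(r)$ is no help, since the offending $L\in\mathcal{H}_k$ need not arise as such a sparsifier.

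In short, soundness as stated (farness from $\conv_r$ rather than from $\mathcal{H}_k$) does not follow from black-box use of the $\mathcal{H}_k$ tester, and neither your truncation argument nor the paper's one-line proof supplies the missing step.
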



\section{Lower Bounds}
\label{sec:lower-bounds}

In this section we establish some lower bounds which complement the upper bounds given in \Cref{thm:GP-sparsification}, \Cref{thm:norm-junta-theorem} (and consequently~\Cref{cor:multiplicative-approximation}) and \Cref{thm:polytope-sparsification}. 
Before giving these results in \Cref{sec:lower-main}, we first briefly discuss the ``non-proper'' nature of our upper bounds in \Cref{sec:proper-impossible}. 

\subsection{The Impossibility of ``Proper'' Approximation} \label{sec:proper-impossible}

In this section we observe that dimension-independent ``proper'' variants of our approximation results  \Cref{thm:GP-sparsification} (sparsifying the $f_T$ function), \Cref{thm:norm-junta-theorem} (sparsifying a norm), and \Cref{thm:polytope-sparsification} (sparsifying a polytope with bounded geometric width) cannot exist.

Let $T$ be a bounded subset of $\R^n$, so $w(T) < \infty$, and consider the corresponding centered Gaussian process $\{\bX_t\}_{t \in T}$ and the associated function $f_T(x) = \sup_{t \in T} t \cdot x.$
\Cref{thm:GP-sparsification} states that $f_T$ is $\eps$-approximated in $L_1$, under $N(0,I_n)$, by a function $g(x) = \sup_{s \in S} \{s \cdot x +c_s\}$ where $S \subseteq T$ satisfies $|S| \leq  \pbra{1/\eps}^{\exp\pbra{O\pbra{w(T)/\epsilon}}}$; so the size of $S$ depends only on $w(T)$ and $\eps$, and is completely independent of the ambient dimension $n$.

Even though $S$ is a subset of $T$, the approximating function $g$ is not a ``proper'' approximator because of the shifts $c_s$. It is natural to wonder whether those shifts are necessary: Can $f_T$ be $\eps$-approximated in $L_1$ by a function of the form $\sup_{s \in S} \{s \cdot x\}$, where $S$ is a subset of $T$ of size $O_{\eps,w(T)}(1)$, independent of $n$?  The answer to this question is no, as shown by the following simple example:

\begin{example} \label{ex:L1}
Let $a(n) := \Ex_{\bg \sim N(0,I_n)}[\sup \{\bg_1,\dots,\bg_n\}]$ be the expected supremum of $n$ i.i.d.~standard Gaussians. 
Sharpening the bound from~\Cref{prop:gaussian-sup-bounds},  
it is well known (see e.g. Exercise~5.1 of \cite{rvh-notes}) that $a(n) = (1\pm o_n(1)) \sqrt{2 \ln n}.$
Let $T=\{{\frac 1 {a(n)}}e_1,\dots,{\frac 1 {a(n)}}e_n\}$, so the Gaussian width of $T$ is $w(T)=1$. 
Now consider any candidate ``proper'' approximator $f_{S}(x)$ for $f_S(x)$ which is obtained by choosing a subset $S \subset T,$ $|S|=m$ of $m=n^{1-c}$ of the $n$ vectors in $T$, where $0<c<1$ is any fixed constant. Without loss of generality we may take $S$ to be the first $m$ vectors in $T$, so $f_{S}(x) = {\frac 1 {a(n)}} \sup_{1 \leq i \leq m} \{x_i\}$ whereas $f_T(x) =  {\frac 1 {a(n)}} \sup_{1 \leq i \leq n} \{x_i\}.$
Since $a(m)=(1\pm o(1))\sqrt{2 \ln m}=(1\pm o(1))\sqrt{2(1-c)\ln n}$, we have $\E\sbra{\sup_{t \in S} \bX_t}=(1\pm o(1))\sqrt{1-c}$, and so applying \Cref{fact:sup-has-subgaussian-tails} to the canonical Gaussian processes $\{\bX_t\}_{t \in T}$ and $\{\bX_s\}_{s \in S}$, we get that 
\begin{align*}
\Prx_{\bg \sim N(0,I_n)}\sbra{f_T(\bg) \in \sbra{1-o(1),1+o(1)}} &\geq 1 - o(1), \\
\Prx_{\bg \sim N(0,I_n)}\sbra{f_{S}(\bg) \in \sbra{\sqrt{1-c}-o(1),\sqrt{1-c}+o(1)}} &\geq 1 - o(1),
\end{align*}
so $\Ex_{\bg \sim N(0,I_n)}\sbra{|f_{T}(\bg) - f_{S}(\bg)|} \geq 1-\sqrt{1-c} - o(1) = \Omega(1).$ 
Thus, even if $w(T)=1$, the size of $S \subset T$ which is needed for $f_S(x)$ to $\eps$-approximate $f_T(x)$ in $L_1$ under $N(0,I_n)$ is at least $n^{1-c(\eps)}$, where $c(\eps) \to 0$ as $\eps \to 0.$
\end{example}

Turning to \Cref{thm:polytope-sparsification}, recall that that result shows that any convex set $K \subset \R^n$ with geometric width at most $r$ can be $\eps$-approximated by an intersection $L$ of at most 
\[
2^{\exp\pbra{r^{4}\cdot \wt{O}\pbra{\frac{1}{\eps^2}}}}
	~\text{halfspaces},
\]
and that moreover the halfspaces constituting $L$ in our construction may not be supporting halfspaces of $K$.  We remark that in the analogous context of width-$w$ CNF approximation over $\zo^n$, the state-of-the art result of Lovett et al.~\cite{lovett2021decision} gives an $\eps$-approximating width-$w$ CNF of   size $s=(2+ {\frac 1 w} \log(1/\eps))^{O(w)}$ which is obtained by keeping $s$ (carefully chosen) clauses in the original CNF and discarding the rest of them.  Thus it is natural to wonder whether there is an analogue of \Cref{thm:polytope-sparsification} in which (i) each halfspace constraint in the approximator is required to be one of the original supporting halfspaces of $K$, and yet (ii) the sparsity of the approximator is independent of $n$.  It turns out that this is impossible, as witnessed by the following simple example which is reminiscent of \Cref{ex:L1}:

\begin{example} \label{ex:polytope}
Let $K$ be a convex set in $\R^{n+1}$ which is the intersection of $n$ halfspaces, the $i$-th of which is 
\begin{equation} \label{eq:ex}
x_0 + {\frac 1 {a(n)}}x_i \leq \sqrt{1 + {\frac 1 {a(n)^2}}},
\end{equation}
so the geometric width of $K$ is 1.  Similar to \Cref{ex:L1}, for any constant $0<c<1$ let $L$ be the intersection of $m=n^{1-c}$ of these $n$ halfspaces; without loss of generality we can suppose the $m$ halfspaces are given by \Cref{eq:ex} for $i=1,\dots,m$.  Now, observe that $x \in K$ iff $x_0 + {\frac 1 {a(n)}} \sup \{x_1,\dots,x_n\} \leq \sqrt{1 + {\frac 1 {a(n)^2}}}$. Since by \Cref{fact:sup-has-subgaussian-tails} we have $\Pr[\sup\{\bg_1,\dots,\bg_n\}=(1\pm o(1))a(n)]=1-o(1)$, for a $1-o(1)$ fraction of outcomes of $\bg_1,\dots,\bg_n$, we have that $\bg=(\bg_0,\dots,\bg_n) \in K$ iff 
\[
\bg_0 \leq \sqrt{1 + {\frac 1 {a(n)^2}}} - {\frac {(1\pm o(1))a(n)}{a(n)}} = \pm o(1).
\] 
Turning to $L$, we have that $x \in L$ iff $x_0 + {\frac 1 {a(n)}} \sup \{x_1,\dots,x_m\} \leq \sqrt{1 + {\frac 1 {a(n)^2}}}$. By \Cref{fact:sup-has-subgaussian-tails} we have $\Pr[\sup\{\bg_1,\dots,\bg_m\}=(1\pm o(1))a(m)]=1-o(1)$, so for a $1-o(1)$ fraction of outcomes of $\bg_1,\dots,\bg_n$, we have that $\bg=(\bg_0,\dots,\bg_n) \in L$ iff 
\[
\bg_0 \leq \sqrt{1 + {\frac 1 {a(n)^2}}} -
{\frac {(1\pm o(1))a(m)}{a(n)}}
= 1 - (1\pm o(1))\sqrt{1-c} \pm o(1).
\]
It follows that for any constant $0<c<1$, we have $\Vol(K \triangle L)$ is at least an absolute constant, so $L$ cannot be an $\eps$-approximator of $K$ for sufficiently small constant $\eps$.
\end{example}

This example can be adapted to the context of \Cref{thm:norm-junta-theorem} (norm approximation); we leave the details to the interested reader.

\subsection{Lower Bounds 
} \label{sec:lower-main}

In this section we give lower bounds on sparsification.
To begin, a straightforward geometric argument shows that any intersection of halfspaces which $\eps$-approximates the unit ball $\{(x_1,x_2): x_1^2 + x_2^2 \leq 1\}$ in $\R^2$ must use at least $(1/\eps)^{\Omega(1)}$ halfspaces. This simple example already shows that approximating bounded-geometric-width convex sets with only $\polylog(1/\eps)$ halfspaces is impossible (in contrast with known results for Boolean CNF sparsification, as discussed in \Cref{sec:discussion-applications}). 

In this context it is instructive to recall the upper bound from \cite{DNS23-polytope} which we stated as \Cref{thm:polytope-approximation-ub}.  This result tells us that for any fixed value of $n$, if $\eps$ is allowed to grow sufficiently small relative to $n$, then for any convex set $K \subseteq \R^n$ (not necessarily of bounded width), $(n/\eps)^{O(n)}$ halfspaces suffice for $\eps$-approximation; this is $\poly(1/\eps)$ many halfspaces for any fixed value of  $n$.  But this is not the end of the story: if we take $\eps \to 0$ and allow the dimension $n$ to vary with $\eps$, then it is possible to give a lower bound which is \emph{exponential} in $1/\eps$:

\begin{theorem} \label{thm:current-lower-bound}
For all $\eps > 0$, for all sufficiently large $n \geq n(\eps)$, there is a convex body $K \subset \R^n$ of geometric width 1, such that any $\eps$-approximating convex set $L \subset \R^n$ for $K$ must be an intersection of at least $2^{\Omega(1/\eps)}$ many halfspaces.
\end{theorem}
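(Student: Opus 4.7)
The plan is to construct an explicit width-$1$ convex body $K \subset \R^n$ with non-trivial Gaussian measure that requires many halfspaces to approximate. The construction uses a ``dense spherical cap'' of defining halfspaces: fix $n = \Theta(1/\eps)$, let $C$ denote the spherical cap of angular radius $\theta = \Theta(1/\sqrt{n})$ centered at $e_1 \in \S^{n-1}$, and take $T \subset C$ to be a maximal $(\theta/2)$-separated subset, so that $|T| = 2^{\Theta(n)} = 2^{\Theta(1/\eps)}$. Set
\[
  K := \bigcap_{u \in T}\{x \in \R^n : u \cdot x \leq 1\}.
\]
By construction $K$ has geometric width exactly $1$. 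A direct calculation (decomposing $u \cdot \bg = \cos\alpha\, g_1 + \sin\alpha\, v \cdot \bg_{2:n}$ for $u = \cos\alpha\, e_1 + \sin\alpha\, v$) combined with \Cref{fact:sup-has-subgaussian-tails} shows that $w(T) \leq 1$ and $\Vol(K) \geq 1/3$, so the body is not trivially approximable by $\emptyset$ or $\R^n$.

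For the lower bound, I would associate to each $u \in T$ a ``witness sliver''
\[
  R_u := \{x : 1 < u \cdot x \leq 1 + \eta\} \cap \bigcap_{u' \in T \setminus \{u\}}\{u' \cdot x \leq 1 - \gamma\},
\]
with $\eta = \Theta(\eps/|T|)$ and $\gamma = \Theta(\theta^2)$. The slivers are pairwise disjoint (each lies in the distinct open halfspace $\{u \cdot x > 1\}$) and each is disjoint from $K$. Using the Gaussian anti-concentration estimate of \Cref{prop:gaussian-tails} and a careful analysis of how the $\gamma$-shrunken constraints cut each sliver, I expect each $R_u$ to carry Gaussian mass at least $c\eps/|T|$, so that the union $\bigcup_u R_u$ has total mass $\Theta(\eps)$ lying outside $K$. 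Any $\eps$-approximator $L = \bigcap_{j=1}^M\{v_j \cdot x \leq b_j\}$ must therefore avoid a $(1-o(1))$-fraction of the slivers; a single halfspace of $L$ can avoid $R_u$ only by ``covering'' it with its complement, i.e.~$v_j \cdot x > b_j$ throughout $R_u$. A packing argument---based on the $\delta$-separation of $T$ and the fact that each sliver concentrates near its own defining normal---shows that any single halfspace can cover only $O(1)$ of the slivers, forcing $M = \Omega(|T|) = 2^{\Omega(1/\eps)}$.

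The main obstacle is the packing argument in the last step: showing that no single halfspace can simultaneously cover many of the disjoint slivers $\{R_u\}$. The tricky case is when a halfspace's normal $v_j$ is far from every $u \in T$, so it is not obviously aligned with any facet; here I plan to use the observation that a sliver $R_u$ is contained in a small ``angular wedge'' around $u$ of bounded solid angle, so a single halfspace can only cover a bounded number of wedges before running out of geometric room. Simultaneously tuning $n, \theta, \eta$, and $\gamma$ so that the mass lower bound on each $R_u$, the non-triviality of $\Vol(K)$, and the packing bound on halfspaces all hold is the technical heart of the argument.
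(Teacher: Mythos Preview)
Your approach is quite different from the paper's, and the plan as written has a genuine gap at its core.

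The paper does not build a polytope and argue about its facets directly. Instead it sets $n = \lceil 1/\eps^2 \rceil$, works in $\R^{n+1}$, and takes
\[
K \;=\; \bigcap_{u \in \S^{n-1}} \Bigl\{ x \in \R^{n+1} \;:\; \tfrac{1}{\sqrt{n}}\,u \cdot (x_1,\dots,x_n) + x_{n+1} \leq 1 + \tfrac{1}{n} \Bigr\},
\]
which has geometric width exactly $1$. Every cross-section $\{x_{n+1} = \rho\}$ with $|\rho| \leq 1/(2\sqrt{n})$ is an origin-centered ball of radius $\sqrt{n}\pm 1$. If some $L$ were an $\eps$-approximator of $K$ with $\eps = \Theta(1/\sqrt{n})$, an averaging argument over $\rho$ produces a cross-section $L_\rho$ that $\Theta(1)$-approximates such a ball; then \Cref{thm:ball-approximation-lb} (the known $2^{\Omega(\sqrt{n})}$ lower bound for approximating the $\sqrt{n}$-ball) gives $2^{\Omega(\sqrt{n})} = 2^{\Omega(1/\eps)}$ facets. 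The entire proof is a two-line reduction to a result already available.

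Your plan, by contrast, is essentially an attempt to reprove a ball-type lower bound from scratch, and it breaks at the sentence ``a single halfspace of $L$ can avoid $R_u$ only by `covering' it with its complement.'' The body $L$ avoiding $R_u$ means $R_u \subseteq L^c = \bigcup_{j} \{v_j \cdot x > b_j\}$, and a \emph{union} of open halfspaces can cover $R_u$ without any single one containing it. Your slivers have thickness $\eta$ in the $u$-direction but full $(n{-}1)$-dimensional extent orthogonally, so there is no reason the covering halfspace should be essentially unique. Without this step the packing argument never launches: you cannot assign each avoided sliver to a single facet of $L$, so the ``$O(1)$ slivers per halfspace'' bound (itself only conjectured in the proposal) has nothing to act on. The standard fix---replace slivers by witness \emph{points} in convex position just outside $K$ and argue that one halfspace separates only few of them---is precisely the content of the ball lower bound \Cref{thm:ball-approximation-lb} that the paper simply invokes. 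A secondary calibration issue: since every $u \in T$ has $\|u - e_1\| \leq \theta$, the single halfspace $\{x_1 \leq 1\}$ already satisfies $\dG(K,\{x_1 \leq 1\}) = O(\theta\sqrt{n}) = O(1)$, so getting all of the sliver-mass bound, the separation $\gamma$, and the packing to coexist with the fact that very crude approximators are not far off would require substantially more delicate tuning than the proposal indicates.
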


\begin{proof}
The argument is a reduction to a recent lower bound of \cite{DNS23-polytope} (\Cref{thm:ball-approximation-lb}) showing that any intersection of halfspaces that $\Theta(1)$-approximates the origin-centered ball of radius $\approx \sqrt{n}$ must use $2^{\Omega(\sqrt{n})}$ halfspaces. 

Given $\eps > 0$, let $n = \lceil 1/\eps^2 \rceil.$  For convenience we work in $\R^{n+1}$. Let $K \subset \R^{n+1}$ be the body defined as
\begin{equation} \label{eq:Kdef}
\bigcap_{u=(u_1,\dots,u_n): \|u\|_2 = 1}
\cbra{x \in \R^{n+1} \ : {\frac {u_1 x_1 + \cdots + u_n x_n}{\sqrt{n}}} + x_{n+1} \leq 1 + {\frac 1 n}}.
\end{equation}
It is clear that the geometric width of $K$ is exactly 1.

Suppose that $L$ is an intersection of halfspaces such that $\Vol(L,K) \leq c/\sqrt{n}$ (here $c>0$ is a suitably small absolute constant that we will specify below). For $\rho \in \R$, write $K_\rho$ to denote the $n$-dimensional cross-section of $K$, 
\[
	\bigcap_{u=(u_1,\dots,u_n): \|u\|_2 = 1}
	\cbra{x \in \R^{n} \ : {\frac {u_1 x_1 + \cdots + u_n x_n}{\sqrt{n}}} + \rho \leq 1 + {\frac 1 n}},
\]
obtained by fixing the last coordinate $x_{n+1}$ to $\rho$, and likewise write $L_\rho$ for the corresponding cross-section of $L$. 
Using the fact that the pdf of the standard $N(0,1)$ Gaussian is at least 0.3 everywhere on $[-1/(2\sqrt{n}),1/(2\sqrt{n})]$, there must exist an outcome of $\rho \in [-1/(2\sqrt{n}),1/(2\sqrt{n})]$ such that $\Vol_n(K_\rho \ \triangle \ L_\rho) \leq 10c$ (in fact, most outcomes of  $\rho \in [-1/(2\sqrt{n}),1/(2\sqrt{n})]$ must have this property). 

Fix such an outcome of $\rho \in [-1/(2\sqrt{n}),1/(2\sqrt{n})]$. The set $K_\rho \subset \R^n$ is equivalent to
\[
	\bigcap_{u=(u_1,\dots,u_n): \|u\|_2 = 1}
	\cbra{x \in \R^{n} \ : {u_1 x_1 + \cdots + u_n x_n \leq \sqrt{n} \cdot \pbra{1 + {\frac 1 n} - \rho}}},
\]
which is an origin-centered ball of radius between $\sqrt{n} - 1$ and $\sqrt{n} + 1$.
Choosing $c=\kappa/10$ for the value $\kappa$ from \Cref{thm:ball-approximation-lb} and applying that theorem, we get that $L_\rho$ must be an intersection of $2^{\Omega(\sqrt{n})} = 2^{\Omega(1/\eps)}$ halfspaces, so $L$ must also be an intersection of $2^{\Omega(1/\eps)}$ halfspaces, and \Cref{thm:current-lower-bound} is proved.
\end{proof}

We remark that \Cref{thm:ball-approximation-lb} directly gives that for a suitable absolute constant $\kappa > 0$, for any integer $w \geq 1$, there is a convex set $K$ of geometric width $w$ (namely the origin-centered ball of radius $w$ in $\R^n$, where $n \geq w^2$) such that any $\kappa$-approximating convex set $L$ for $K$ must be an intersection of at least $2^{\Omega(w)}$ many halfspaces.

We also remark that the lower bound of \Cref{thm:current-lower-bound} easily yields a similar lower bound on sparsifying suprema of Gaussian processes: 

\begin{corollary} \label{cor:lb-sparsifying-suprema}
For all $\eps > 0$, for all sufficiently large $n \geq n(\eps)$,
there is a set $T\sse\R^n$ with $w(T) =1$ such that any set $S$ and 
	and collection of associated real constants $\{c_s\}_{s\in S}$ satisfying 
	\begin{equation} \label{eq:lb-supremum}
		\Ex_{\bg\sim N(0,I_n)}\sbra{\abs{f_T(\bg) - \sup_{s\in S} \, \{\bg\cdot s + c_s\}}} \leq \eps
	\end{equation}
must have $|S| = 2^{\widetilde{\Omega}(1/\sqrt{\eps})}.$
\end{corollary}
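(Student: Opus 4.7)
The plan is to reduce to the polytope-sparsification lower bound of \Cref{thm:current-lower-bound}. Fix $\eps > 0$, set $\eps_0 := C\sqrt{\eps}$ for a sufficiently large absolute constant $C$, and take $n := \lceil 1/\eps_0^2 \rceil = \Theta(1/\eps)$. Let $K \subset \R^{n+1}$ be the convex body from \eqref{eq:Kdef} used in the proof of \Cref{thm:current-lower-bound}: its defining halfspaces have normal vectors $v_u := (u_1/\sqrt{n}, \ldots, u_n/\sqrt{n}, 1)$ for $u \in \S^{n-1}$. Setting $T_0 := \{v_u : u \in \S^{n-1}\}$, we have
\[
f_{T_0}(x) = \frac{\|x_{1:n}\|}{\sqrt{n}} + x_{n+1}, \qquad K = \cbra{x \in \R^{n+1} : f_{T_0}(x) \leq 1 + 1/n},
\]
and $w(T_0) = \Ex[\|\bg_{1:n}\|]/\sqrt{n} = 1 - o_n(1)$. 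Rescale by setting $T := T_0/w(T_0)$, so that $w(T) = 1$ exactly while $K = \cbra{x : f_T(x) \leq \tau}$ with $\tau := (1 + 1/n)/w(T_0) = 1 + o_n(1)$.

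Next, suppose a set $S \subset \R^{n+1}$ and constants $\{c_s\}_{s \in S}$ satisfy \eqref{eq:lb-supremum}, and write $g(x) := \sup_{s \in S}\{s \cdot x + c_s\}$. Define the polytope
\[
L := \cbra{x \in \R^{n+1} \,:\, s\cdot x + c_s \leq \tau \text{ for every } s \in S},
\]
an intersection of at most $|S|$ halfspaces. The symmetric difference $K \,\triangle\, L$ is contained in the event $\cbra{|f_T(\bg) - g(\bg)| > \sqrt{\eps}} \cup \cbra{|f_T(\bg) - \tau| \leq \sqrt{\eps}}$. Markov's inequality applied to \eqref{eq:lb-supremum} bounds the first event by $\sqrt{\eps}$. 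For the second, observe that every vector in $T$ has the \emph{same} norm $\sqrt{1 + 1/n}/w(T_0) = 1 + o_n(1)$, so after a further rescaling of $T$ by this common factor we land in $\S^n$ with Gaussian width still $1 + o_n(1)$; \Cref{thm:CCK} then yields $\Pr[|f_T(\bg) - \tau| \leq \sqrt{\eps}] = O(\sqrt{\eps})$. Combining the two bounds gives $\dG(K, L) = O(\sqrt{\eps}) \leq \eps_0$, provided $C$ was chosen large enough.

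Finally, since $L$ is an intersection of $|S|$ halfspaces that $\eps_0$-approximates $K$, \Cref{thm:current-lower-bound} forces $|S| \geq 2^{\Omega(1/\eps_0)} = 2^{\widetilde{\Omega}(1/\sqrt{\eps})}$, which is the desired conclusion. The main delicacy is the consistent bookkeeping of the two successive rescalings (first $T_0 \to T$ to achieve $w(T) = 1$, then a normalization by $\|v\|$ to land in $\S^n$ so that the Chernozhukov--Chetverikov--Kato anti-concentration inequality applies); both factors are $1 + o_n(1)$ and absorb harmlessly into absolute constants, but the accounting must be tracked carefully so as to preserve the overall constant $C$ used to guarantee $\dG(K,L) \leq \eps_0$.
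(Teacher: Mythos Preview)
Your proposal is correct and follows precisely the route the paper sketches: take the body $K$ from \eqref{eq:Kdef}, convert a hypothetical sparse $L_1$-approximator of $f_T$ into a polytope $L=\{x:g(x)\leq\tau\}$ via the Markov-plus-anti-concentration argument of \Cref{lemma:polytope-sparsification-width-1} (which is what the paper means by ``following the proof of \Cref{thm:polytope-sparsification}''), and then invoke the lower bound of \Cref{thm:current-lower-bound}. Your bookkeeping of the two $1+o_n(1)$ rescalings is fine; the only cosmetic point is that the statement asks for $T\subseteq\R^n$ for all large $n$, which you obtain from your $(n_0+1)$-dimensional construction by zero-padding and an averaging/cross-section argument on any extra coordinates used by $S$.
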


The idea behind \Cref{cor:lb-sparsifying-suprema} is that if there existed a sparse set $S$ of size $2^{\wt{o}(1/\sqrt{\eps})}$ with associated constants $\{c_s\}_{s \in S}$ satisfying \Cref{eq:lb-supremum}, then following the proof of \Cref{thm:polytope-sparsification} we would get an approximating convex set $L$ for the set $K$ in \Cref{eq:Kdef} with $2^{o(1/\eps)}$ halfspaces, contradicting \Cref{thm:current-lower-bound}.
We leave the details to the interested reader.

\section*{Acknowledgements}

A.D.~is supported by NSF grants CCF-1910534 and CCF-2045128. 
S.N.~is supported by NSF grants CCF-2106429,
CCF-2211238, CCF-1763970, and CCF-2107187. 
R.A.S.~is supported by NSF grants CCF-2106429
and CCF-2211238. 
This work was partially completed while some of the authors were visiting the Simons Institute for the Theory of Computing. 

\appendix

\bibliography{allrefs}
\bibliographystyle{alphaurl}

\end{document}